\newcommand\DoToC{%
  \startcontents
  \printcontents{}{1}{\textbf{Contents}\vskip3pt\hrule\vskip5pt}
  \vskip3pt\hrule\vskip5pt
}
\theoremstyle{plain}
\newtheorem{theorem}{Theorem}[section]
\newtheorem{proposition}[theorem]{Proposition}
\newtheorem{lemma}[theorem]{Lemma}
\newtheorem{corollary}[theorem]{Corollary}
\theoremstyle{definition}
\newtheorem{remark}[theorem]{Remark}
\titlespacing\section{0pt}{0pt plus 0pt minus 2pt}{0pt plus 0pt minus 2pt}
\titlespacing\subsection{0pt}{0pt plus 0pt minus 2pt}{0pt plus 0pt minus 2pt}
\titlespacing\subsubsection{0pt}{0pt plus 0pt minus 2pt}{0pt plus 0pt minus 2pt}
\newcommand\tsout{\bgroup\markoverwith{\textcolor{red}{\rule[0.5ex]{2pt}{0.8pt}}}\ULon}
\DeclareRobustCommand{\camready}[1]{%
  \begingroup
  \ifdefined\prepare
    \textcolor{blue}{#1}%
  \else
    #1%
  \fi
  \endgroup
}
\title{Dynamical Properties of Tokens in Self-Attention\\ and Effects of Positional Encoding}
\author{%
  \And
  Duy-Tung Pham\thanks{Co-first authors.} \\
  FPT Software AI Center \\
  Hanoi, Vietnam \\
  \texttt{tungpd10@fpt.com} \\
  \And
  An Nguyen The\footnotemark[1] \\
  FPT Software AI Center \\
  Hanoi, Vietnam \\
  \texttt{annt68@fpt.com} \\
  \And
  Viet-Hoang Tran \\
  Department of Mathematics\\
  National University of Singapore\\
  \texttt{hoang.tranviet@u.nus.edu} \\
  \And
  Nhan-Phu Chung \\
  Ho Chi Minh University of Economics \\
  Ho Chi Minh City, Vietnam \\
  \texttt{phucn@ueh.edu.vn} \\
  \And
  Xin T. Tong \\
  Department of Mathematics\\
  National University of Singapore\\
  \texttt{mattxin@nus.edu.sg} \\
  \And
  Tan M. Nguyen\thanks{Co-last authors. Please correspond to \texttt{tanmn@nus.edu.sg and thieuvo@nus.edu.sg}.} \\
  Department of Mathematics\\
  National University of Singapore\\
  \texttt{tanmn@nus.edu.sg} \\
  \And
  Thieu N. Vo\footnotemark[2] \\
  Department of Mathematics\\
  National University of Singapore\\
  \texttt{thieuvo@nus.edu.sg} \\
}
\begin{document}

\maketitle

\begin{abstract}
This paper investigates the dynamical properties of tokens in pre-trained Transformer models and explores their application to improving Transformers. 
To this end, we analyze the dynamical system governing the continuous-time limit of the pre-trained model and characterize the asymptotic behavior of its solutions. 
Specifically, we characterize when tokens move closer to or farther from one another over time, depending on the model parameters. 
We provide sufficient conditions, based on these parameters, to identify scenarios where tokens either converge to zero or diverge to infinity. 
Unlike prior works, our conditions are broader in scope and more applicable to real-world models. Furthermore, we investigate how different forms of positional encoding -- specifically absolute and rotary -- affect these dynamical regimes. Empirical evidence reveals that the convergence scenario adversely impacts model performance. Motivated by these insights, we propose simple refinements to Transformer architectures that mitigate convergence behavior in models with absolute or rotary positional encoding. These findings support theoretical foundations and design principles for improving Transformer models.

\end{abstract}

\section{Introduction}


Transformers \cite{vaswani2017attention} have revolutionized multiple domains, demonstrating remarkable success in natural language processing \cite{dehghani2018universal, dai2019transformer, al2019character, baevski2018adaptive, raffel2020exploring}, computer vision \cite{dosovitskiy2020image, ramesh2021zero, radford2021learning, liu2021swin}, machine learning \cite{tay2022efficient, khan2022transformers, lin2022survey}, and reinforcement learning \cite{janner2021offline, chen2021decision}. Their widespread adoption is largely driven by their ability to leverage large-scale pretraining, enabling efficient knowledge transfer to downstream tasks \cite{radford2019language, devlin2018bert, radford2018improving}. Unlike traditional architectures that rely on recurrence or convolution, Transformers are built upon a self-attention mechanism. This mechanism dynamically computes relationships among all tokens in an input sequence, assigning importance scores that dictate how strongly each token influences the others. As a result, self-attention effectively captures intricate dependencies across long sequences, making it highly proficient in contextual representation learning \cite{lin2017structured, vig2019analyzing, cho2014learning, tenney2019bert, nguyen2022head, parikh2016decomposable}. 
By modeling global interactions across an input, Transformers have surpassed earlier deep learning architectures, solidifying their position as the dominant paradigm in modern artificial intelligence.


Only a limited number of studies in the literature provide a theoretical understanding of the internal structure of learned representations in pre-trained Transformer models (see Section~\ref{sec:related_works} for a comprehensive list of relevant works). Notably, in the seminal papers \cite{geshkovski2024emergence, geshkovski2023mathematical}, the authors modeled Transformers as interacting particle systems and demonstrated that tokens tend to cluster around specific limiting objects determined by the initial tokens, thereby confirming the context-awareness of representations learned by Transformers. 
It was also observed in \cite{geshkovski2024dynamic} that, in the self-attention dynamic, although tokens collapse to a single cluster in infinite time, they remain trapped near a configuration of several clusters for an exponentially long period of time.
The authors in
\cite{alcalde2024clustering} analyzed a pure-attention hardmax Transformer model in a similar manner under a discrete framework.
A key limitation of these works lies in their reliance on unrealistic and impractical assumptions on model parameters, primarily introduced to support the development of richer theoretical results and technical proofs. Moreover, by focusing exclusively on theoretical analysis, these studies have not yet offered practical applications or insights for improving model performance.



A comprehensive list of related works can be found in Appendix~\ref{sec:related_works}.




\subsection{Our Contribution}
In this paper, we investigate the internal dynamics of tokens in self-attention mechanisms, together with the effects of absolute and rotary positional encoding, under more realistic and practical assumptions on model parameters, extending prior theoretical studies that often rely on restrictive conditions. We focus on the continuous-time limit of pre-trained Transformer models and systematically analyze the asymptotic behavior of token trajectories -- such as convergence, divergence, and token distances -- over time. 
Our contribution is fourth-fold:


\begin{enumerate}[leftmargin=24pt]
    \item We provide \emph{conditions that predict whether tokens in self-attention converge to zero or diverge to infinity}. The conditions are more general than those considered in prior works (e.g., \cite{geshkovski2024emergence, geshkovski2023mathematical}) and the theoretical results are validated on pre-trained Transformer models.

    \item We show that absolute positional encoding has little effect on token dynamics, while \emph{rotary encoding significantly alters them to promote the divergence of tokens}.

    \item We empirically verify that the convergence of tokens to zero negatively impacts model performance. In contrast, in divergence scenarios, tokens organize into a small number of groups, with tokens within the same group diverging to infinity in a consistent direction.
    This is often beneficial for model performance.
    
    \item Building on these findings, we propose simple yet effective improvements to mitigate convergence scenarios in Transformers with absolute or rotary positional encodings. 
\end{enumerate}

To verify our findings, we conducted language modeling experiments on WikiText-103~\citep{merity2016pointer} and EnWik8~\citep{enwik8}, and object recognition on ImageNet-1K~\citep{deng2009imagenet}. The results support our theoretical claim and provide insights to improve Transformer models.

\paragraph{Organization.} The paper is organized as follows. Section 2 introduces the continuous-time dynamical system representing a pre-trained Transformer. Section 3 presents our main theoretical and empirical results on token dynamics in self-attention. Section 4 extends the analysis to absolute and rotary positional encodings. Section 5 provides empirical validation and proposes improvements to positional encoding schemes. Proofs and additional experiments are provided in the Appendix.

\paragraph{Notations.} Through this paper, $||\cdot||$ denotes the Euclidean norm of a vector. For each subset $A \subseteq \mathbb{R}^D$, we denote by $\mathbf{conv}(A)$ the convex hull of $A$, which is the smallest convex set containing $A$ in $\mathbb{R}^D$.  
For a square matrix $B$, we denote by $\mathbf{q}_B$ the quadratic form associated to $B$ and by $B_{\text{sym}} = \frac{1}{2}(B+B^{\top})$ the symmetric part of $B$.
When $B$ is positive definite, we write $B \succ 0$.
In case $B$ is negative definite, we write $B \prec 0$.



\label{sec:theory}

\section{Background: Continuous-time Limit of Attention}
\label{sec:background}
In this section, we present the dynamical system that represents the continuous-time counterpart of a pre-trained Transformer model. 

\subsection{Continuous-time Limit of a Deep Neural Network}

We build on prior work in the literature that explores the dynamical systems underlying the continuous-time limits of deep neural networks (DNNs). 
A prominent example of DNNs is residual neural network (ResNet).
Each layer in a ResNet is a residual block that transforms an input vector $x \in \mathbb{R}^D$ into an output vector $z=x+y(x,\theta) \in \mathbb{R}^D$, where 
$y = y(x, \theta)$ is a two-layer feed-forward neural network parameterized by $\theta$.
The continuous-time counterpart of ResNet is represented as a flow map that takes an input vector $x(0) \in \mathbb{R}^D$ and produces an output vector $x(T) \in \mathbb{R}^D$, governed by the associated dynamical system:
$$x'(t) = y(x(t), \theta(t)), \quad t \in (0, T).$$
There is a substantial body of research examining the interpolation, approximation, and controllability properties of such DNN architectures \citep{lin2018resnet, zhang2020approximation, li2022deep, tabuada2022universal, ruiz2023neural, cheng2023interpolation}.


\subsection{Self-Attention Dynamics}

In contrast to ResNet, Transformer operates on a sequence of $D$-dimensional tokens rather than solely on individual inputs. Central to the Transformer architecture is the self-attention map.
Let $X=\begin{bmatrix}
    x_1 &\ldots&x_L 
\end{bmatrix}^{\top} \in \mathbb{R}^{L \times D}$ be an input sequence of $L$ tokens with $D$ features. Each token is a (column) vector $x_i \in \mathbb{R}^{D}$. 
The self-attention map transforms $X$ into the output sequence $Y=\begin{bmatrix}
    y_1 &\ldots&y_L 
\end{bmatrix}^{\top} \in \mathbb{R}^{L \times D_v}$ defined as
\begin{align}\label{eq:attn-1}
    Y=\operatorname{softmax} \left( \frac{(X \cdot Q) \cdot (X \cdot K)^{\top}}{\sqrt{D_k}} \right) \cdot X \cdot V. 
\end{align}
The matrices $Q,K \in \mathbb{R}^{D \times D_k}$ and $V \in \mathbb{R}^{D \times D_v}$ are learnable parameters, and they are called the query, key and value matrices. 
In our context, we will always assume that $D_v=D$, thus $V$ is a square matrix.
We can rewrite equation~\eqref{eq:attn-1} as
\begin{align*}
    y_l = V^{\top} \cdot \sum_{i=1}^L \left( \frac{e^{x_l^{\top} \cdot W \cdot x_i}}{\sum_{j=1}^Le^{x_l^{\top} \cdot W \cdot x_j}}  \right) \cdot x_i,
\end{align*}
for $l=1,\ldots,L$, where $W := \frac{1}{\sqrt{D_k}} Q \cdot K^{\top}$, which is a matrix in $\mathbb{R}^{D \times D}$.

To better understand the internal learning representations of a pre-trained Transformer model, we consider the differential system that governs the continuous-time dynamics of self-attention:
\begin{align}\label{eq:attn-ode-main}
    \frac{dx_l(t)}{dt} = V^{\top} \cdot \sum_{i=1}^L \left( \frac{e^{x_l(t)^{\top} \cdot W \cdot x_i(t)}}{\sum_{j=1}^L e^{x_l(t)^{\top} \cdot W \cdot x_j(t)}} \right) \cdot x_i(t),
\end{align}
for $l = 1, \ldots, L$, with the initial conditions 
$(x_1(0), \ldots, x_L(0)) = (x_{10}, \ldots, x_{L0}) \in (\mathbb{R}^D)^L$.
The dynamical system for the self-attention with positional encoding will be discussed later in Section~\ref{sec:rotary}.

Building on the methodologies introduced in \citep{geshkovski2024emergence,alcalde2024clustering}, we focus our analysis specifically on the self-attention mechanism - a core component of Transformer architectures - and the role of skip connections within the associated dynamical system. To facilitate tractable analysis, we omit other token-wise operations such as layer normalization and feed-forward networks, and we assume time-invariant model parameters $Q$, $K$, and $V$. While this assumption is primarily for analytical simplicity, it also aligns with parameter-sharing strategies used in models like ALBERT \citep{lan2019albert} to reduce training costs.

\begin{remark}[Scope and Generality]
Although our theoretical setup adopts simplifying assumptions, this is standard practice in theoretical work to make the analysis feasible while preserving the ability to capture core empirical behaviors. Several influential studies follow similar simplifications \citep{geshkovski2024emergence, geshkovski2023mathematical,geshkovski2024dynamic,alcalde2024clustering}. Notably, our framework generalizes key aspects of \citep{geshkovski2024emergence,alcalde2024clustering} by relaxing certain assumptions. Our results are also validated against pre-trained Transformer models, confirming that the theoretical insights carry over to real-world settings.
\end{remark}

\section{Dynamical Properties of Tokens in Self-Attention}\label{sec:tokens_dynamic}

We present our main theoretical and empirical results on the dynamical properties of tokens in self-attention dynamics in this section.


\paragraph{Quadratic Space.} 
To analyze the dynamical properties of tokens in a more general context, we conduct our study within the framework of quadratic spaces. 
Each matrix $B \in \mathbb{R}^{D \times D}$ is associated with a quadratic form $\mathbf{q}_B \colon \mathbb{R}^D \to \mathbb{R}$, defined as 
$\mathbf{q}_B(u) = u^{\top} \cdot B \cdot u$ for each $u \in \mathbb{R}^D$.
The pair $(\mathbb{R}^D, \mathbf{q}_B)$ is referred to as a quadratic space. The quadratic form $\mathbf{q}_B$ is uniquely determined by the symmetric component
$B_{\text{sym}} = \frac{1}{2}(B^{\top} + B)$ of $B$.
This means that, for arbitrary matrices $B$ and $B'$, we have $\mathbf{q}_B=\mathbf{q}_{B'}$ if and only if $B_{sym}=B'_{sym}$.
In the special case where $\mathbf{q}_B$ is positive definite (i.e., $B_{\text{sym}} \succ 0$), $\mathbf{q}_B$ corresponds to the square of a norm. 
Conversely, when $\mathbf{q}_B$ is negative definite (i.e., $B_{\text{sym}} \prec 0$), $\mathbf{q}_B$ corresponds to the negative of a squared norm.
In particular, in the case $B_{\text{sym}}$ is a definite matrix, the map 
\begin{equation*}
        ||\cdot||_B = \left\{ 
        \begin{aligned}
            &\sqrt{\mathbf{q}_B(\cdot)}, &\text{if } B_{\text{sym}} \succ 0,\\
            &\sqrt{-\mathbf{q}_B(\cdot)}, &\text{if } B_{\text{sym}} \prec 0,
        \end{aligned}
        \right.
    \end{equation*}
is actually a norm on $\mathbb{R}^D$, and it is equivariant to the standard Euclidean norm $||\cdot||$. 
Therefore, the dynamical properties of tokens as elements of the quadratic space $(\mathbb{R}^D, \mathbf{q}_B)$ faithfully reflect their dynamical behavior in the standard Euclidean space $(\mathbb{R}^D, \|\cdot\|)$.

\paragraph{Dynamical Properties of Tokens.}
To start the analysis of the dynamical properties of tokens, we assume that $(x_1(t),\ldots,x_L(t)) \in C{([0,+\infty))}^L$ is the unique solution of the dynamical system~\eqref{eq:attn-ode-main}.
The existence and uniqueness as well as the well-posedness of this solution was proved in \citep[Proposition~6.2]{geshkovski2024emergence}.
We consider the function $f \colon \mathbb{R} \times \mathbb{R}^D \to \mathbb{R}$ defined by
    \begin{align}\label{eq:f}
        f(t,u) = \log \left( \sum_{j=1}^L e^{u^{\top} \cdot W \cdot x_j(t)} \right), \quad u \in \mathbb{R}^D,
    \end{align}
Then the dynamical system~\eqref{eq:attn-ode-main} can be written as
\begin{align}\label{eq:attn_ode_f}
    A \cdot \frac{d}{dt}x_l(t) = \frac{\partial}{\partial u} f(t,x_l(t)),
\end{align}
where $A=W \cdot (V^{\top})^{-1}$, provided that $V$ is invertible.
Based on this observation, we characterize the dynamical properties of tokens via $A$ and $W$.
In particular, we will show that:

\begin{enumerate}
    \item \textbf{Distances between Tokens.}
    If \( A \prec 0 \), then all tokens will  move closer  to each other as time \( t \) approaches infinity. 
    In contrast, if \( A \succ 0 \), the tokens will either maintain constant distances or move farther away from each other as time \( t \) approaches infinity (see Theorem~\ref{thm:token_distance_main}).

    \item \textbf{Convergence Scenario.}
    If $A \prec 0$ and $W \succ 0$, then all tokens will tend to zero as the time $t$ tends to infinity (see Theorem~\ref{thm:collapse_main}).
    Furthermore, our simulations with randomly selected model parameters suggest that the convergence scenario occurs whenever $A_{\text{sym}} \prec 0$ and $W_{\text{sym}} \succ 0$.

    \item \textbf{Divergence Scenario.} 
    Our simulations with randomly selected model parameters suggest that the divergence scenario occurs whenever the condition ``$A_{\text{sym}} \prec 0$ and $W_{\text{sym}} \succ 0$" is violated.
    In the special case when $V$ has at least one positive eigenvalue and $W$ is arbitrary, we prove that all tokens will diverge to infinity under certain assumption on the initial data (see Theorem~\ref{thm:divergence_main}).
\end{enumerate}


We will theoretically prove and empirically validate these observations in the subsequent subsections. The choice of parameters used in the simulation is described in Appendix~\ref{appendix:param_fig}.  


\subsection{Distances between Tokens}

We characterize the dynamical properties of the distances between tokens using the quadratic forms $\mathbf{q}_A$ associated to the matrix $A= W \cdot (V^{\top})^{-1}$. 
In particular, we prove that:

\begin{figure*}
    \centering
    \includegraphics[width=0.36\linewidth]{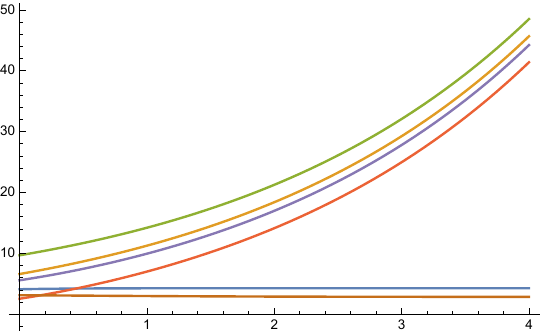}
    \includegraphics[width=0.5\linewidth]{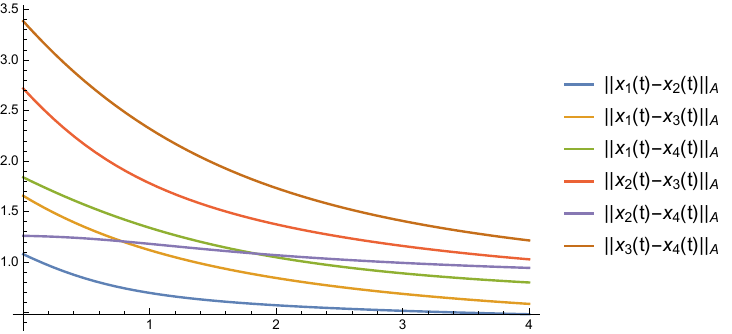}
    \caption{Distances between tokens over time. \textbf{Left: } When $A_{\text{sym}} \succ 0$, the distances between tokens do not decrease. \textbf{Right: } Conversely, when $A_{\text{sym}} \prec 0$, the distances between tokens do not increase.}
    \label{fig:distance}

\end{figure*}
\begin{theorem}[Distances between Tokens]\label{thm:token_distance_main}
    If $A$ is symmetric, then the map $t \mapsto \mathbf{q}_A(x_i(t)-x_j(t))$ is  non-decreasing on $[0,+\infty)$.
    As a consequence,
    \begin{itemize}
        \item[$(a)$] if $A \succ 0$, then $||x_i(t)-x_j(t)||_A$ is non-decreasing on $[0,+\infty)$;
        \item[$(b)$] if $A \prec 0$, then $||x_i(t)-x_j(t)||_A$ is non-increasing on $[0,+\infty)$.
    \end{itemize}
\end{theorem}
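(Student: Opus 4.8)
The plan is to differentiate the scalar quantity $g_{ij}(t) := \mathbf{q}_A\bigl(x_i(t)-x_j(t)\bigr)$ along the flow and to recognize the resulting expression as (twice) a monotone-operator pairing arising from a convex potential, so that non-negativity of $g_{ij}'$ comes for free from convex analysis.

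First I would record that the vector field in~\eqref{eq:attn-ode-main} is smooth in the tokens, so the solution is $C^1$ (in fact $C^\infty$) on $[0,+\infty)$ and $g_{ij}(t) = \bigl(x_i(t)-x_j(t)\bigr)^\top A\,\bigl(x_i(t)-x_j(t)\bigr)$ is differentiable. Differentiating and using the symmetry $A = A^\top$ gives
\begin{align*}
    g_{ij}'(t) = 2\,\bigl(x_i(t)-x_j(t)\bigr)^\top A \left( \frac{d}{dt}x_i(t) - \frac{d}{dt}x_j(t) \right).
\end{align*}
Then I would substitute the reformulation~\eqref{eq:attn_ode_f}, namely $A\cdot\frac{d}{dt}x_l(t) = \frac{\partial}{\partial u}f(t,x_l(t))$ (which is available precisely because $A$ being defined forces $V$, hence $(V^\top)^{-1}$, to exist), to obtain
\begin{align*}
    g_{ij}'(t) = 2\,\bigl(x_i(t)-x_j(t)\bigr)^\top \left( \frac{\partial f}{\partial u}\bigl(t,x_i(t)\bigr) - \frac{\partial f}{\partial u}\bigl(t,x_j(t)\bigr) \right).
\end{align*}

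The crucial observation --- and essentially the only real content of the argument --- is that for each fixed $t$ the function $u \mapsto f(t,u) = \log\left(\sum_{j=1}^L e^{u^\top W x_j(t)}\right)$ is convex on $\mathbb{R}^D$, being the log-sum-exp of the linear forms $u \mapsto u^\top W x_j(t)$. Gradient monotonicity for differentiable convex functions then yields $(a-b)^\top\bigl(\nabla f(a)-\nabla f(b)\bigr)\ge 0$ for all $a,b$; applying this with $a = x_i(t)$, $b = x_j(t)$ shows $g_{ij}'(t)\ge 0$ for all $t\ge 0$, so $g_{ij}$ is non-decreasing, which is the first assertion.

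For $(a)$ and $(b)$ I would simply unwind the definition of $\|\cdot\|_A$: when $A\succ 0$ we have $\|x_i(t)-x_j(t)\|_A = \sqrt{g_{ij}(t)}$, and post-composing a non-decreasing function with the increasing square root keeps it non-decreasing; when $A\prec 0$ we have $\|x_i(t)-x_j(t)\|_A = \sqrt{-g_{ij}(t)}$ with $-g_{ij}$ non-increasing, hence non-increasing after the square root. I do not expect a genuine obstacle here: the argument is a few lines once the reduction is set up. The one point deserving care is identifying the pulled-back field $A\,\dot x_l$ with $\nabla_u f(t,\cdot)$ so that convex-analytic gradient monotonicity applies verbatim, but this is immediate from~\eqref{eq:attn_ode_f} together with the log-sum-exp form of $f$.
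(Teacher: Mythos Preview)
Your proposal is correct and follows essentially the same route as the paper: differentiate $\mathbf{q}_A(x_i-x_j)$, use the symmetry of $A$ to write the derivative as $2(x_i-x_j)^\top\bigl(\nabla_u f(t,x_i)-\nabla_u f(t,x_j)\bigr)$ via \eqref{eq:attn_ode_f}, and conclude non-negativity from the convexity of $u\mapsto f(t,u)$ (the paper verifies this convexity directly by a midpoint inequality rather than citing log-sum-exp, but the content is identical). Parts $(a)$ and $(b)$ are dispatched exactly as you say, by composing with the square root.
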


Intuitively, Theorem~\ref{thm:token_distance_main} states that the tokens will move closer to each other when \( A \) is negative definite, while the tokens will either maintain constant distances or move farther away from each other when \( A \) is positive definite.
This result can be intuitively understood from equation~\eqref{eq:attn_ode_f}. Indeed, one can verify that \( f \) is a convex function in \( u \). Therefore, the subtraction \( x_i(t) - x_j(t) \) will have the same orientation as \( \frac{\partial}{\partial u}f(t, x_i(t)) - \frac{\partial}{\partial u}f(t, x_j(t)) \), which is equal to the derivative \( \frac{d}{dt}A \cdot (x_i(t) - x_j(t)) \).
As a consequence, the quantity \( \frac{d}{dt}\mathbf{q}_A(x_i(t) - x_j(t)) \) is always non-negative.
The formal proof of this theorem can be found in Appendix~\ref{app_subsec:distance}.

\begin{remark}[Distances between Tokens]
    In case $A$ is not necessarily symmetric, we observe from randomly chosen of model parameters that the distance $||x_i(t)-x_j(t)||_A$ is non-decreasing (respectively, non-increasing) whenever $A_{\text{sym}} \succ 0$ (respectively, $A_{\text{sym}} \prec 0$).
\end{remark}


Figure~\ref{fig:distance} illustrates the distances between tokens when \( A_{\text{sym}} \) is positive (on the left side) and when \( A_{\text{sym}} \) is negative (on the right side).
As shown in Figure~\ref{fig:distance}, token distances remain constant or increase when $A_{\text{sym}} \succ 0$, and remain constant or decrease when $A_{\text{sym}} \prec 0$, confirming Theorem~\ref{thm:token_distance_main}.




\subsection{Convergence Scenario}

Our simulations with randomly selected model parameters suggest that the convergence scenario occurs whenever $A_{\text{sym}} \prec 0$ and $W_{\text{sym}} \succ 0$.
In this section, we actually prove that when $A \prec 0$ and $W \succ 0$, then all tokens will tend to zero as $t$ approaches infinity.
First, we established a relation between the quadratic forms $\mathbf{q}_A$ and $\mathbf{q}_W$ in general setting below:
\begin{figure*}
    \centering
    \begin{subfigure}[b]{0.39\textwidth}
        \centering
        \includegraphics[width=0.9\textwidth]{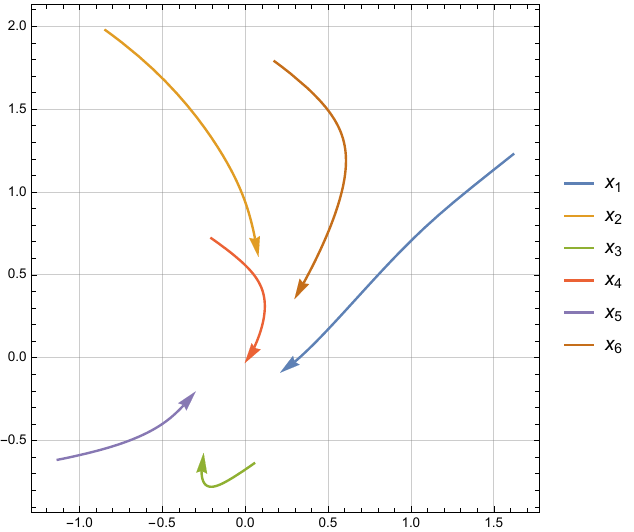}
        \label{fig:converge}
    \end{subfigure}
    \hspace{1em}  
    \begin{subfigure}[b]{0.3\textwidth}
        \centering
        \includegraphics[width=0.9\textwidth]{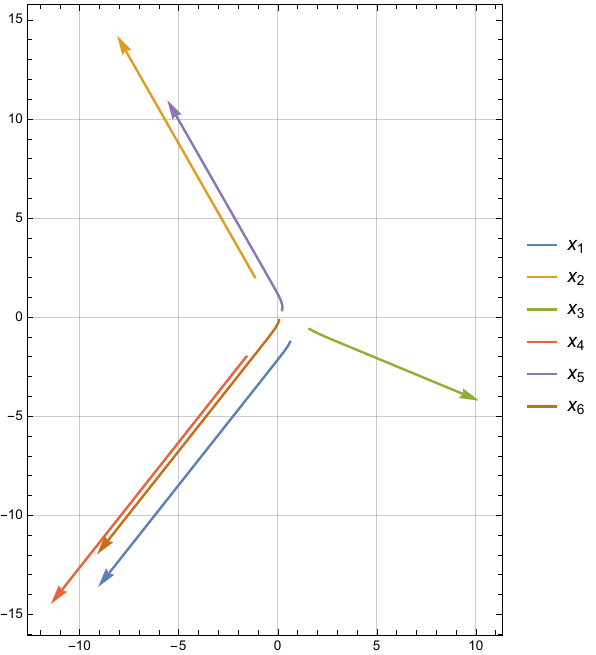}
        \label{fig:divergence}
    \end{subfigure}
    \caption{Token trajectories under two configurations. \textbf{Left: } When $A_{\text{sym}}$ is negative definite and $W_{\text{sym}}$ is positive definite, all tokens converge to zero as time $t \to \infty$. \textbf{Right: } When $A = 2W$, tokens diverge to infinity over time, forming a few distinct groups that move in aligned directions.}
    \label{fig:converge_and_diverge}
\end{figure*}

\begin{proposition}\label{prop:general_estimation_main}
    Assume that $V$ is an invertible matrix and $A=W \cdot (V^{\top})^{-1}$ is symmetric. Then, 
    \begin{align*}
        \frac{d}{dt} \mathbf{q}_A(x_l(t))
        \geq 
        2-\frac{2L}{e^{\mathbf{q}_W(x_l(t))}},
    \end{align*}
    for all $l=1,\ldots,L$ and $t \in [0,+\infty)$.
    As a consequence, if $A \prec 0$ and $W_{\text{sym}} \succ 0$, then all tokens are bounded, i.e there exists $c>0$ such that $||x_l(t)||_A<c$ for all $t \in [0,+\infty)$ and $l=1,\ldots,L$.
        
\end{proposition}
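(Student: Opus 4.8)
The plan is to first establish the differential inequality -- which is the core of the statement -- and then deduce boundedness by a barrier argument. Since $V$ is invertible, I would rewrite the dynamics in the form~\eqref{eq:attn_ode_f}, i.e., $A\cdot\frac{d}{dt}x_l(t)=\frac{\partial}{\partial u}f(t,x_l(t))$ with $A=W\cdot(V^{\top})^{-1}$. Because $A$ is symmetric, differentiating $\mathbf{q}_A(x_l(t))=x_l(t)^{\top}A\,x_l(t)$ and substituting the ODE gives $\frac{d}{dt}\mathbf{q}_A(x_l(t))=2\,x_l(t)^{\top}A\,\frac{d}{dt}x_l(t)=2\,x_l(t)^{\top}\frac{\partial}{\partial u}f(t,x_l(t))$. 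The key structural fact I would invoke is that, for each fixed $t$, the map $u\mapsto f(t,u)$ is convex, being a log-sum-exp of functions affine in $u$. Applying the first-order convexity inequality at $u=x_l(t)$ and evaluating at $0$ yields $f(t,0)\ge f(t,x_l(t))-\big(\frac{\partial}{\partial u}f(t,x_l(t))\big)^{\top}x_l(t)$, hence $x_l(t)^{\top}\frac{\partial}{\partial u}f(t,x_l(t))\ge f(t,x_l(t))-f(t,0)$.

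Next I would evaluate $f(t,0)=\log L$ and bound $f(t,x_l(t))=\log\big(\sum_{j=1}^{L}e^{x_l(t)^{\top}Wx_j(t)}\big)\ge x_l(t)^{\top}Wx_l(t)=\mathbf{q}_W(x_l(t))$ by keeping only the $j=l$ summand. Combining the last two displays gives $\frac{d}{dt}\mathbf{q}_A(x_l(t))\ge 2\big(\mathbf{q}_W(x_l(t))-\log L\big)$, and the proposition's bound then follows from the elementary scalar inequality $s-\log L\ge 1-Le^{-s}$, valid for all $s\in\mathbb{R}$; I would check this by observing that $g(s)=s-\log L-1+Le^{-s}$ satisfies $g'(s)=1-Le^{-s}$, so $g$ is minimized at $s=\log L$ with $g(\log L)=0$. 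Taking $s=\mathbf{q}_W(x_l(t))$ completes the estimate.

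For the consequence, suppose $A\prec 0$ and $W_{\text{sym}}\succ 0$. Then $\mathbf{q}_A(x_l(t))=-||x_l(t)||_A^2\le 0$ for all $t$, while $W_{\text{sym}}\succ 0$ supplies a constant $\lambda>0$ with $\mathbf{q}_W(x_l(t))\ge\lambda||x_l(t)||^2$. Using the equivalence of the norms $||\cdot||$ and $||\cdot||_A$, I would choose a threshold $c_0>0$ so that $||x_l(t)||_A^2>c_0$ forces $\mathbf{q}_W(x_l(t))>\log L$; on that set the right-hand side $2-2Le^{-\mathbf{q}_W(x_l(t))}$ is strictly positive, so $\frac{d}{dt}\mathbf{q}_A(x_l(t))>0$ whenever $\mathbf{q}_A(x_l(t))<-c_0$. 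A standard first-exit-time argument then gives $\mathbf{q}_A(x_l(t))\ge\min\{\mathbf{q}_A(x_l(0)),-c_0\}$ for all $t\ge 0$: were this level ever crossed, it would have to be crossed starting from a point already below $-c_0$, where $\mathbf{q}_A(x_l(\cdot))$ is strictly increasing, contradicting continuity. This bounds each $||x_l(t)||_A$ uniformly in $t$, and taking a maximum over $l=1,\dots,L$ produces a single constant $c$ with $||x_l(t)||_A<c$ throughout.

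The step I expect to demand the most care is this final barrier/invariance argument -- pinning down the threshold $c_0$ via the norm equivalence and making the ``first time the level is breached'' contradiction fully rigorous -- whereas the two inequalities are routine once the convex-duality reduction $x_l(t)^{\top}\frac{\partial}{\partial u}f(t,x_l(t))\ge f(t,x_l(t))-f(t,0)$ is in place.
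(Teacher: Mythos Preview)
Your proof is correct. For the differential inequality, the paper takes a slightly different route: it writes $\frac{1}{2}\frac{d}{dt}\mathbf{q}_A(x_l)=\big(\sum_i e^{\lambda_i}\lambda_i\big)/\big(\sum_j e^{\lambda_j}\big)$ with $\lambda_i=x_l^{\top}Wx_i$ and applies the pointwise inequality $\lambda e^{\lambda}\ge e^{\lambda}-1$ term-by-term in the numerator to reach $1-L/\sum_j e^{\lambda_j}\ge 1-Le^{-\mathbf{q}_W(x_l)}$. Your approach instead reuses the convexity of $f$ to obtain the sharper intermediate bound $\frac{d}{dt}\mathbf{q}_A(x_l)\ge 2(\mathbf{q}_W(x_l)-\log L)$ before the final scalar step; the two routes ultimately rest on the same elementary inequality ($\lambda\ge 1-e^{-\lambda}$) applied at different stages, and your organization is arguably cleaner. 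For boundedness, the paper does not use a barrier argument: after the norm comparison $\mathbf{q}_W\ge c\,\|\cdot\|_A^2$, it substitutes $z=e^{c\|x_l\|_A^2}$, integrates the resulting linear differential inequality explicitly, and obtains the quantitative bound $\|x_l(t)\|_A^2\le\frac{1}{c}\log\big(e^{-2ct}(e^{c\|x_l(0)\|_A^2}-L)+L\big)$. Your first-exit argument is a valid and more qualitative alternative; the paper's route buys an explicit decay-to-$\frac{1}{c}\log L$ estimate, while yours avoids the ODE integration entirely.
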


In the case $A \prec 0$ and $W \succ 0$, we prove a stronger result as stated in the following theorem:

\begin{theorem}[Convergence Scenario]\label{thm:collapse_main}
    If $A \prec 0$ and $W \succ 0$, then all tokens converge to zero, i.e. $\lim_{t \to +\infty} x_l(t) =0$
    for all $l=1,\ldots,L$.
\end{theorem}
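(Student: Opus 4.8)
The plan is to obtain the collapse not from a single sharp differential inequality, but from a LaSalle‑type invariance argument in which the pairwise ``spreading'' functionals from Theorem~\ref{thm:token_distance_main} play the role of the monotone quantity. First I would invoke Proposition~\ref{prop:general_estimation_main}: since $A$ is symmetric with $A \prec 0$ and $W \succ 0$, it yields a constant $c>0$ with $\|x_l(t)\|_A < c$ for all $t\ge 0$ and all $l$, so (by equivalence of $\|\cdot\|_A$ with the Euclidean norm) the trajectory $X(t)=(x_1(t),\ldots,x_L(t))$ stays in a fixed compact set $K\subseteq(\mathbb{R}^D)^L$. Because the right‑hand side of \eqref{eq:attn-ode-main} is smooth and the orbit is bounded and forward‑complete, its $\omega$‑limit set $\Omega\subseteq K$ is nonempty, compact, and invariant (it contains the whole forward and backward solution through each of its points).

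Second, I would prove that $\Omega$ is contained in the diagonal $\Delta=\{(x,\ldots,x):x\in\mathbb{R}^D\}$. For each pair $i,j$, Theorem~\ref{thm:token_distance_main} says $t\mapsto \mathbf{q}_A(x_i(t)-x_j(t))$ is non‑decreasing, and since $A\prec 0$ it is $\le 0$, hence convergent, to some $\ell_{ij}$. Fix $X^\ast=(x_1^\ast,\ldots,x_L^\ast)\in\Omega$, take $t_n\to\infty$ with $X(t_n)\to X^\ast$, and let $\hat X(s)$ be the solution with $\hat X(0)=X^\ast$; by continuous dependence, $\mathbf{q}_A(\hat x_i(s)-\hat x_j(s))=\lim_n \mathbf{q}_A(x_i(t_n+s)-x_j(t_n+s))=\ell_{ij}$ for every $s$, so this map is constant. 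Differentiating at $s=0$ and using \eqref{eq:attn_ode_f} (i.e. $A\,\tfrac{d}{ds}\hat x_l = \tfrac{\partial}{\partial u}\hat f(0,\hat x_l)$, with $\hat f$ the convex log‑sum‑exp from \eqref{eq:f} built from $X^\ast$) forces $(x_i^\ast-x_j^\ast)^{\top}\big(\tfrac{\partial}{\partial u}\hat f(0,x_i^\ast)-\tfrac{\partial}{\partial u}\hat f(0,x_j^\ast)\big)=0$. Since $\hat f(0,\cdot)$ is convex, this makes it affine on the segment $[x_j^\ast,x_i^\ast]$; writing $s\mapsto \hat f(0,x_j^\ast+s(x_i^\ast-x_j^\ast))=\log\sum_k e^{a_k+sb_k}$ with $b_k=(x_i^\ast-x_j^\ast)^{\top}Wx_k^\ast$, affineness forces its second derivative (a softmax‑variance of $\{b_k\}$) to vanish, so all $b_k$ are equal; taking $k\in\{i,j\}$ gives $(x_i^\ast-x_j^\ast)^{\top}W(x_i^\ast-x_j^\ast)=\mathbf{q}_W(x_i^\ast-x_j^\ast)=0$, and $W\succ 0$ yields $x_i^\ast=x_j^\ast$. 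Hence $\Omega\subseteq\Delta$.

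Third, on $\Delta$ all softmax weights equal $1/L$, so \eqref{eq:attn-ode-main} restricts to the linear ODE $\dot x = V^{\top}x$. From $A=W(V^{\top})^{-1}$ we get $V^{\top}=A^{-1}W$; putting $S=(-A^{-1})^{1/2}\succ 0$, the matrix $V^{\top}$ is similar (via $S$) to $-SWS$, whose symmetric part $-SW_{\mathrm{sym}}S$ is negative definite, so every eigenvalue of $V^{\top}$ has strictly negative real part. A compact invariant set of such a linear flow is $\{0\}$ (a nonzero point would have an unbounded backward orbit), so $\Omega=\{0\}$; since the orbit is bounded with singleton $\omega$‑limit, $x_l(t)\to 0$ for all $l$.

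I expect the main obstacle to be the second step — upgrading ``the spreading functionals are eventually stationary'' to ``the limit tokens coincide.'' Proposition~\ref{prop:general_estimation_main} alone only gives boundedness, and the naive estimate $\tfrac{d}{dt}\mathbf{q}_A(x_l)\ge 2-2L e^{-\mathbf{q}_W(x_l)}$ is not sharp enough near the origin to force convergence; what does the work is the precise convexity of the log‑sum‑exp potential (strict transverse to a subspace) together with $W\succ 0$, which is exactly what excludes a limiting configuration of several distinct clustered tokens and pins $\Omega$ to the diagonal. The remaining pieces — identifying the restricted dynamics and checking that $V^{\top}=A^{-1}W$ is Hurwitz — are then routine linear algebra, and one should double‑check the standard facts about $\omega$‑limit sets (nonemptiness, compactness, two‑sided invariance) and continuous dependence used to pass from $X(t_n)$ to the solution through $X^\ast$.
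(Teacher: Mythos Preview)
Your proof is correct and takes a genuinely different route from the paper. The paper does not use the pairwise spreading functionals at all: it introduces a separate Lyapunov function $h(t)=\sum_{i,j}e^{x_i(t)^\top W x_j(t)}$, shows (using the symmetry of $W$) that $h$ is non-increasing, deduces $\int_0^\infty\|\dot x_l(s)\|_A^2\,ds<\infty$ and hence $\dot x_l(t)\to 0$, and then proves in a standalone lemma that the only configuration satisfying the zero-velocity condition $\sum_j e^{(x_l^\ast)^\top W x_j^\ast}x_j^\ast=0$ for all $l$ is the origin. Your decomposition is different: you reuse Theorem~\ref{thm:token_distance_main} to force the $\omega$-limit set onto the diagonal $\Delta$, and then finish with the linear problem on $\Delta$, observing that $V^\top=A^{-1}W$ is Hurwitz. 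The convexity step that pins down the limit configuration is a close cousin of the paper's (both eventually reach $(x_i^\ast-x_j^\ast)^\top W x_k^\ast$ independent of $k$), but the paper reaches it from $\nabla g(x_l^\ast)=0$ and midpoint equality, while you reach it from equality in gradient monotonicity. What your approach buys is modularity and a tighter logical link to Theorem~\ref{thm:token_distance_main}; what the paper's approach buys is the quantitative byproduct $\int\|\dot x_l\|^2<\infty$ from a single scalar Lyapunov function, which could in principle be pushed toward rate estimates.
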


The special case where $A = -I_D$ and $W = I_D$ was already proved in \citep[Section~8.2]{geshkovski2024emergence}.  
We generalize their results to a more practical setting which requires weaker assumptions on the model parameters.   
The proofs of Proposition~\ref{prop:general_estimation_main} and Theorem~\ref{thm:collapse_main} can be found in Appendix~\ref{app_subsec:convergence}.

\begin{remark}[Convergence Scenario]\label{rem:convergence}
    We observe from randomly chosen of model parameters that when $A_{\text{sym}} \prec 0$ and $W_{\text{sym}} \succ 0$, all tokens will converge to zero.
    Otherwise, the tokens are likely to divergent to infinity.
\end{remark}

Figure~\ref{fig:converge_and_diverge}a depicts the trajectories of tokens for the scenario analyzed in Theorem~\ref{thm:collapse_main}. Under these conditions, all tokens converge to zero as $t \to \infty$, thus confirming Theorem~\ref{thm:collapse_main}.

\subsection{Divergence Scenario}

Our simulation on random selections of model parameters suggests that the divergence scenario is likely to occur whenever the assumption ``$A_{\text{sym}} \prec 0$ and $W_{\text{sym}} \succ 0$" in Remark~\ref{rem:convergence} is violated. 
In this subsection, we prove that in the special case where the value matrix $V$ has at least one positive eigenvalue, the divergence scenario indeed occurs under certain assumptions on the initial data.

For each nonzero vector $\mathbf{n} \in \mathbb{R}^D$, we denote by $H_{\mathbf{n}}$ the closed half-space of $\mathbb{R}^D$ with normal vector $\mathbf{n}$ such that zero belongs to its boundary $\partial H_{\mathbf{n}}$.
In this case, $\partial H_{\mathbf{n}}$ represents the hyperplane containing zero with normal vector $\mathbf{n}$.
In the following, we project the tokens onto the affine line along the eigenvector corresponding to a positive eigenvalue of $V$ to see when token tend to infinity.

\begin{theorem}[Divergence Scenario]\label{thm:divergence_main}
    Assume that the value matrix $V$ has at least one positive eigenvalue. Let $\mathbf{n}$ be an eigenvector of $V$ corresponding to a positive eigenvalue. Then
\begin{align*}
    \min_{1 \leq i \leq L} \left( \mathbf{n}^{\top} x_{i0} \right) \leq \mathbf{n}^{\top} e^{-tV^{\top}} x_l(t) \leq \max_{1 \leq i \leq L} \left( \mathbf{n}^{\top} x_{i0} \right),
\end{align*}
for all $t \in [0, +\infty)$ and $l = 1, \dots, L$.

As a consequence, if the initial points $x_{10}, \dots, x_{L0}$ are all in one side of the hyperplane $\partial H_{\mathbf{n}}$, and if $V$ has only positive eigenvalues, then
$\lim_{t \to +\infty} \| x_l(t) \| = +\infty$ for all l.
\end{theorem}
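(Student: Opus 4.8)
The plan is to reduce the nonlinear, time-varying system~\eqref{eq:attn-ode-main} to a scalar \emph{consensus} inequality along the eigendirection $\mathbf{n}$. Write the attention coefficients as $p_{li}(t) = e^{x_l(t)^{\top} W x_i(t)}\big/\sum_{j=1}^L e^{x_l(t)^{\top} W x_j(t)}$, so that $p_{li}(t) \geq 0$ and $\sum_{i=1}^L p_{li}(t) = 1$ for every $t$ and every $l$, regardless of $W$; then \eqref{eq:attn-ode-main} reads $x_l'(t) = V^{\top}\sum_{i=1}^L p_{li}(t)\,x_i(t)$. Since $\mathbf{n}$ is an eigenvector of $V$ for an eigenvalue $\lambda > 0$, we have $\mathbf{n}^{\top} V^{\top} = \lambda\,\mathbf{n}^{\top}$, hence $\mathbf{n}^{\top}(V^{\top})^k = \lambda^k\,\mathbf{n}^{\top}$ for all $k \geq 0$ and therefore $\mathbf{n}^{\top} e^{-tV^{\top}} = e^{-\lambda t}\,\mathbf{n}^{\top}$. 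I would then set $\phi_l(t) := \mathbf{n}^{\top} e^{-tV^{\top}} x_l(t) = e^{-\lambda t}\,\mathbf{n}^{\top} x_l(t)$, so that $\phi_l(0) = \mathbf{n}^{\top} x_{l0}$.

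First I would differentiate $\phi_l$ and simplify using $\mathbf{n}^{\top} V^{\top} = \lambda\,\mathbf{n}^{\top}$ and $\sum_i p_{li} = 1$, obtaining the closed linear relation
\[
\phi_l'(t) \;=\; \lambda\!\left(\sum_{i=1}^L p_{li}(t)\,\phi_i(t) \;-\; \phi_l(t)\right), \qquad l = 1,\dots,L,
\]
so that the right-hand side at index $l$ is $\lambda$ times a convex combination of the $\phi_i$ minus $\phi_l$. Next I would show that $M(t) := \max_l \phi_l(t)$ is non-increasing and $m(t) := \min_l \phi_l(t)$ is non-decreasing: both are pointwise extrema of finitely many $C^1$ functions, hence locally Lipschitz, and at any $t$ and any index $l^{\star}$ attaining the maximum we have $\sum_i p_{l^{\star} i}(t)\,\phi_i(t) \leq \sum_i p_{l^{\star} i}(t)\,M(t) = M(t)$, so $\phi_{l^{\star}}'(t) \leq 0$ and the upper Dini derivative of $M$ is $\leq 0$ at $t$; symmetrically the lower Dini derivative of $m$ is $\geq 0$. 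Evaluating at $t = 0$ then gives $m(0) \leq \phi_l(t) \leq M(0)$, which is exactly $\min_{i}\mathbf{n}^{\top} x_{i0} \leq \mathbf{n}^{\top} e^{-tV^{\top}} x_l(t) \leq \max_i \mathbf{n}^{\top} x_{i0}$, the first assertion.

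For the consequence, assume the $x_{i0}$ lie strictly on one side of $\partial H_{\mathbf{n}}$; replacing $\mathbf{n}$ by $-\mathbf{n}$ if necessary (still an eigenvector for the positive eigenvalue $\lambda$), we may take $c := \min_i \mathbf{n}^{\top} x_{i0} > 0$. The lower bound just proved reads $e^{-\lambda t}\,\mathbf{n}^{\top} x_l(t) \geq c$, so $\mathbf{n}^{\top} x_l(t) \geq c\,e^{\lambda t}$ and $\|x_l(t)\| \geq |\mathbf{n}^{\top} x_l(t)|/\|\mathbf{n}\| \to +\infty$ as $t \to +\infty$; note this step uses only the positive eigenvalue $\lambda$ attached to $\mathbf{n}$, so the hypothesis that $V$ has only positive eigenvalues is more than sufficient here. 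I expect the one genuinely delicate point to be the monotonicity of $M$ and $m$, which has to be argued through Dini derivatives (or a comparison lemma) because the extremizing index may change with $t$; spotting the substitution $x_l \mapsto e^{-tV^{\top}} x_l$, which converts the dynamics into the consensus identity above, is the load-bearing step, after which everything is routine.
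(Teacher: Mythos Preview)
Your proof is correct and follows essentially the same approach as the paper: both make the substitution $z_l(t) = e^{-tV^{\top}} x_l(t)$, project onto the eigendirection $\mathbf{n}$ to obtain the scalar consensus dynamics $\phi_l' = \lambda\sum_i p_{li}(\phi_i - \phi_l)$, and then argue that $\max_l \phi_l$ is non-increasing and $\min_l \phi_l$ is non-decreasing (the paper does this via a half-space distance lemma, you via Dini derivatives directly). Your observation that the divergence conclusion only requires the single positive eigenvalue $\lambda$ attached to $\mathbf{n}$, rather than the stronger hypothesis that all eigenvalues of $V$ are positive, is also correct.
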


The proofs of Theorem~\ref{thm:divergence_main} can be found in Appendix~\ref{app_subsec:divergence}.
Figure~\ref{fig:converge_and_diverge}b illustrates the divergence scenario with $A=2W$ (thus $V=2 I_D$). \camready{We further empirically verify that the divergence-scenario constraints arise in practical pretrained models; see Appendix~\ref{app:real_model_verify}.}

\begin{figure*}
     \centering
     \begin{subfigure}[b]{0.3\textwidth}
         \centering
         \includegraphics[width=\textwidth]{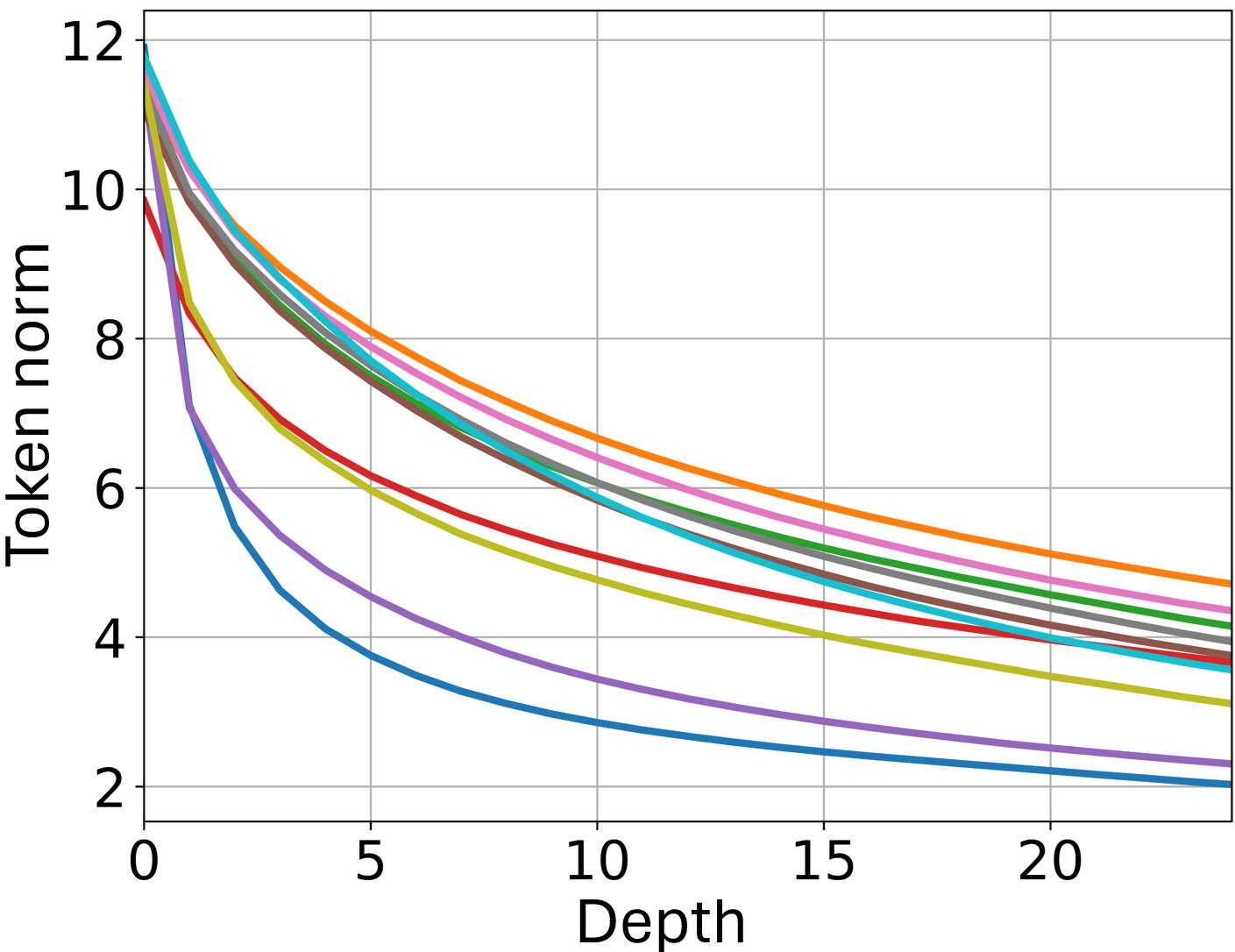}
         \caption{Token norms decrease under the convergence scenario (\( A_{\text{sym}} \prec 0 \) and \( W_{\text{sym}} \succ 0 \) ).}
        \label{fig:noffln-normAnegWpos}
     \end{subfigure}
     \hfill
     \begin{subfigure}[b]{0.298\textwidth}
         \centering
         \includegraphics[width=\textwidth]{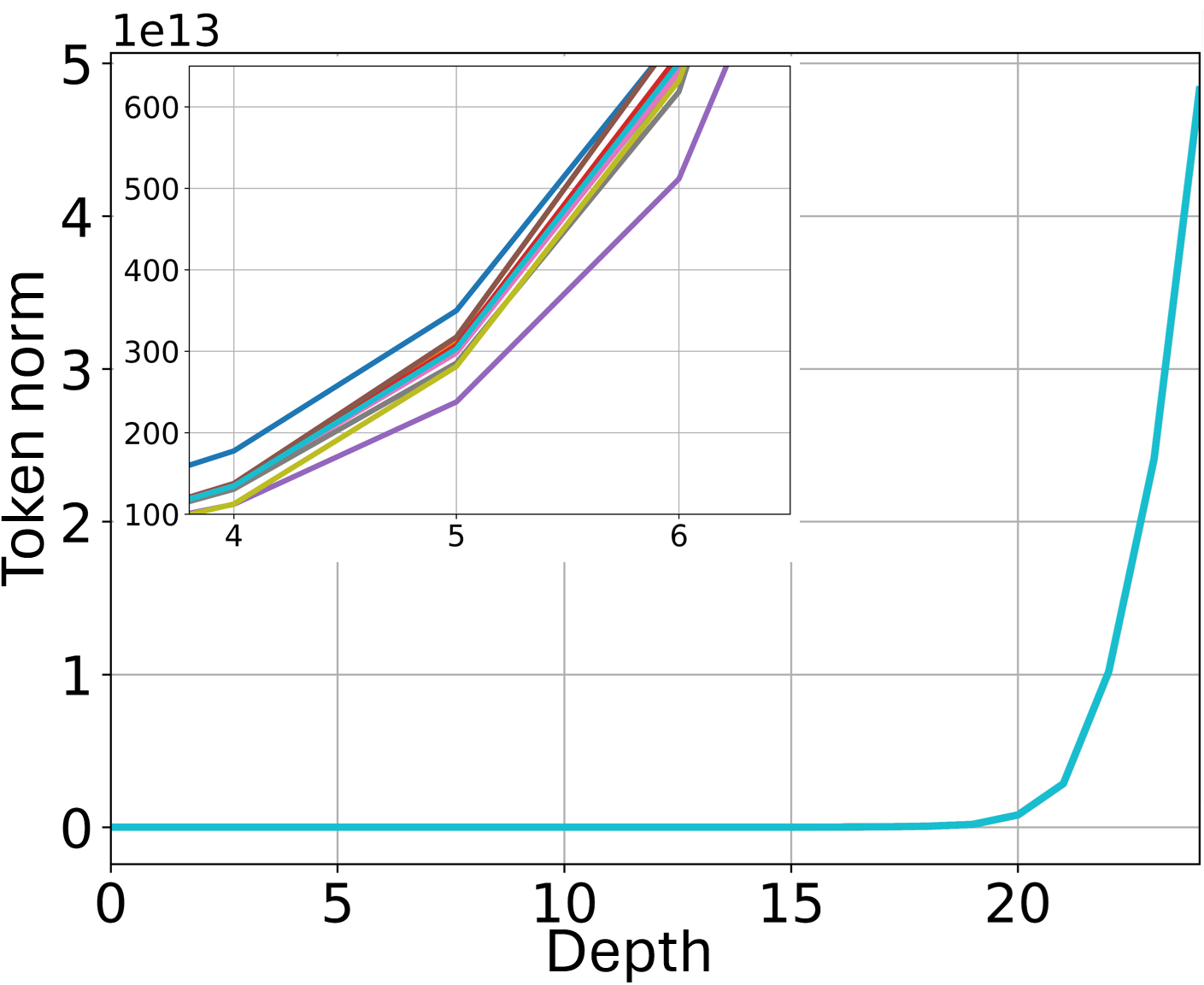}
         \caption{Token norms diverge under the divergence scenario (\( A_{\text{sym}} \succ 0 \) and \( W_{\text{sym}} \succ 0 \)).}
        \label{fig:noffln-normAposWpos}
     \end{subfigure}
     \hfill
     \begin{subfigure}[b]{0.318\textwidth}
         \centering
         \includegraphics[width=\textwidth]{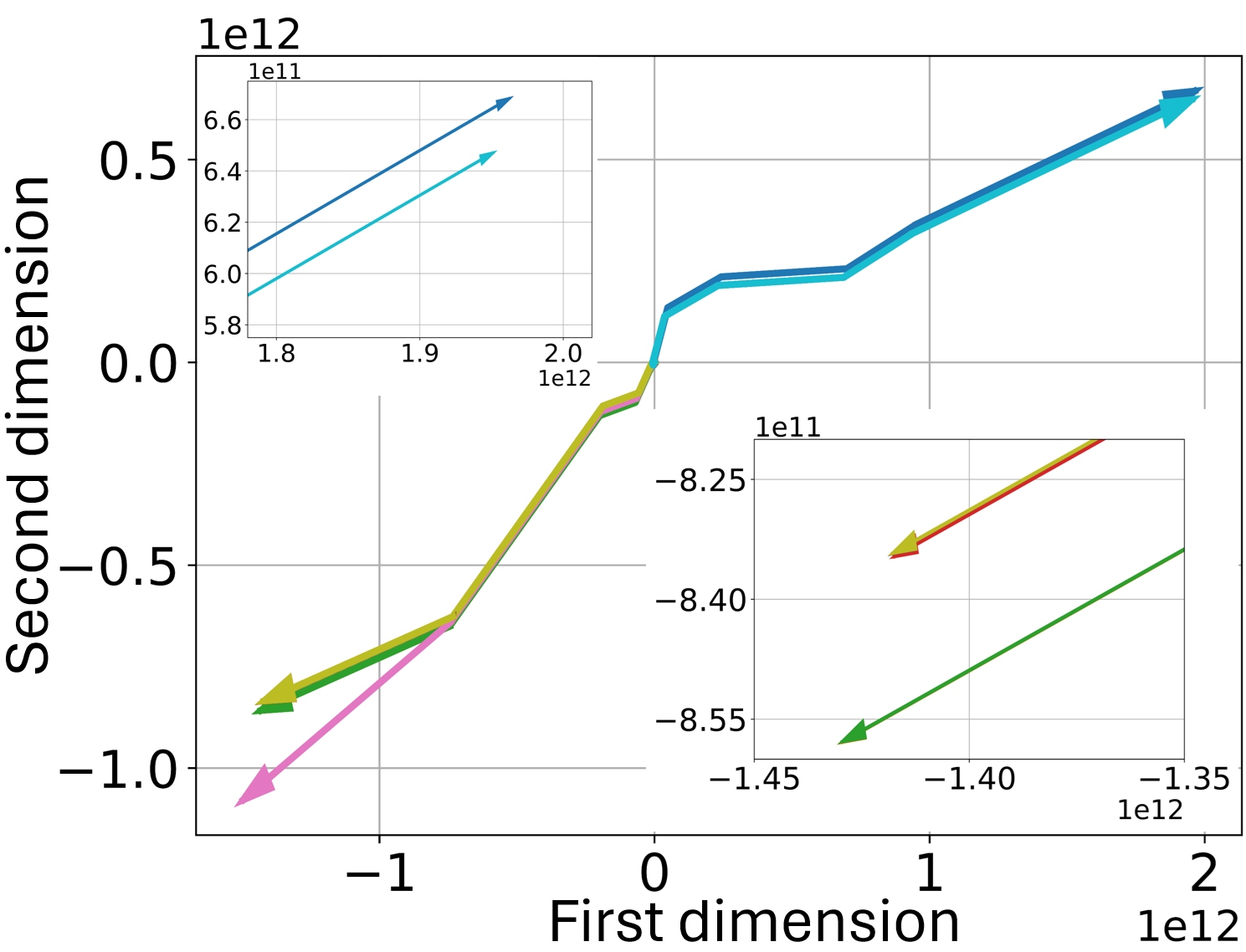}
         \caption{Tokens form distinct groups with similar trajectory shapes in divergence scenario.}
        \label{fig:noffln-trajectory}
     \end{subfigure}
     \caption{Token dynamic in pre-trained model when omitting $\operatorname{LayerNorm}$ and feed-forward network}
\end{figure*}
\begin{figure}
     \centering
     \begin{subfigure}[b]{0.32\textwidth}
         \centering
         \includegraphics[width=\linewidth]{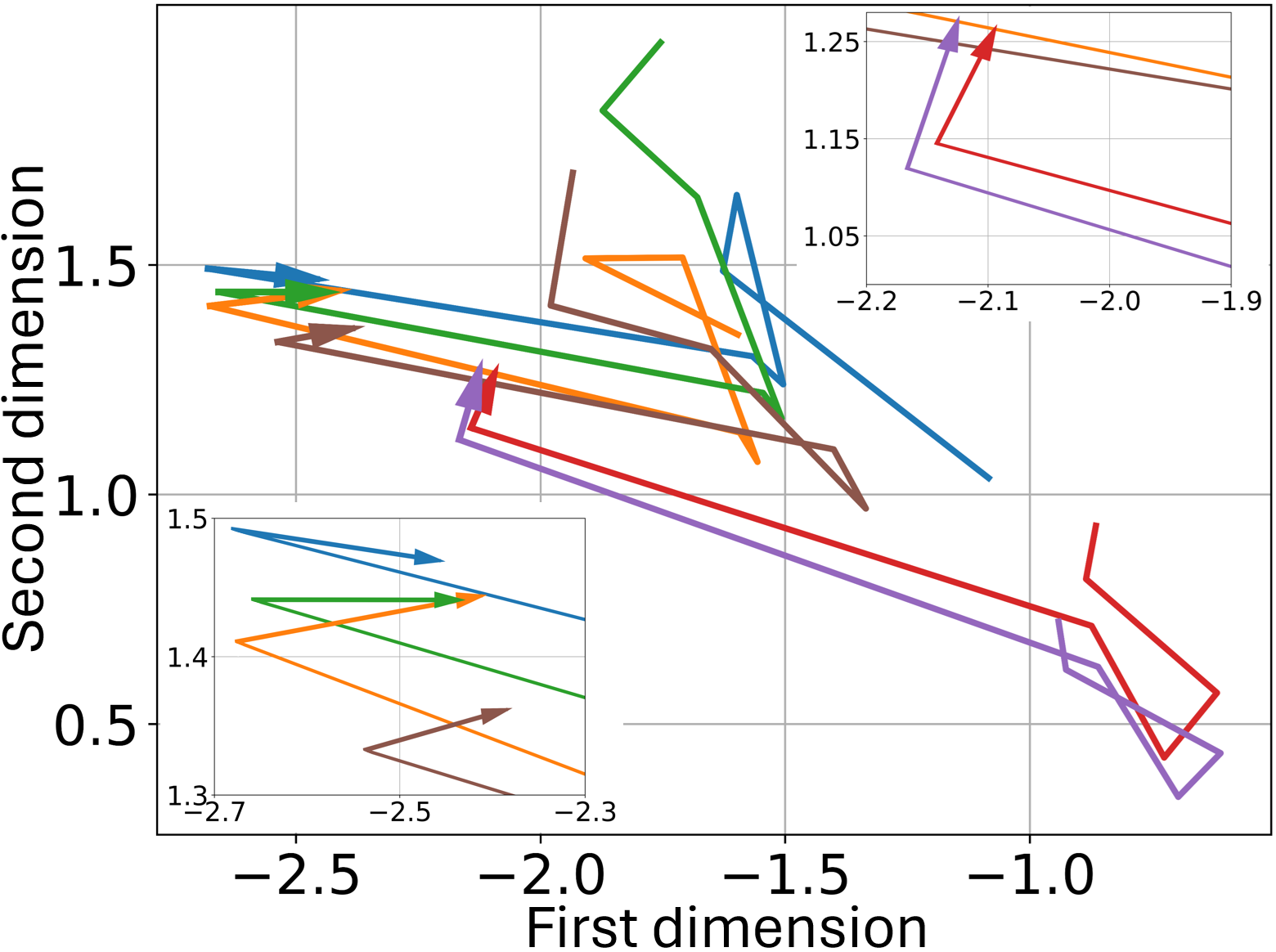}
         \label{fig:ln-trajAposWpos}
     \end{subfigure}
     \hspace{0.02\textwidth} 
     \begin{subfigure}[b]{0.32\textwidth}
         \centering
         \includegraphics[width=\linewidth]{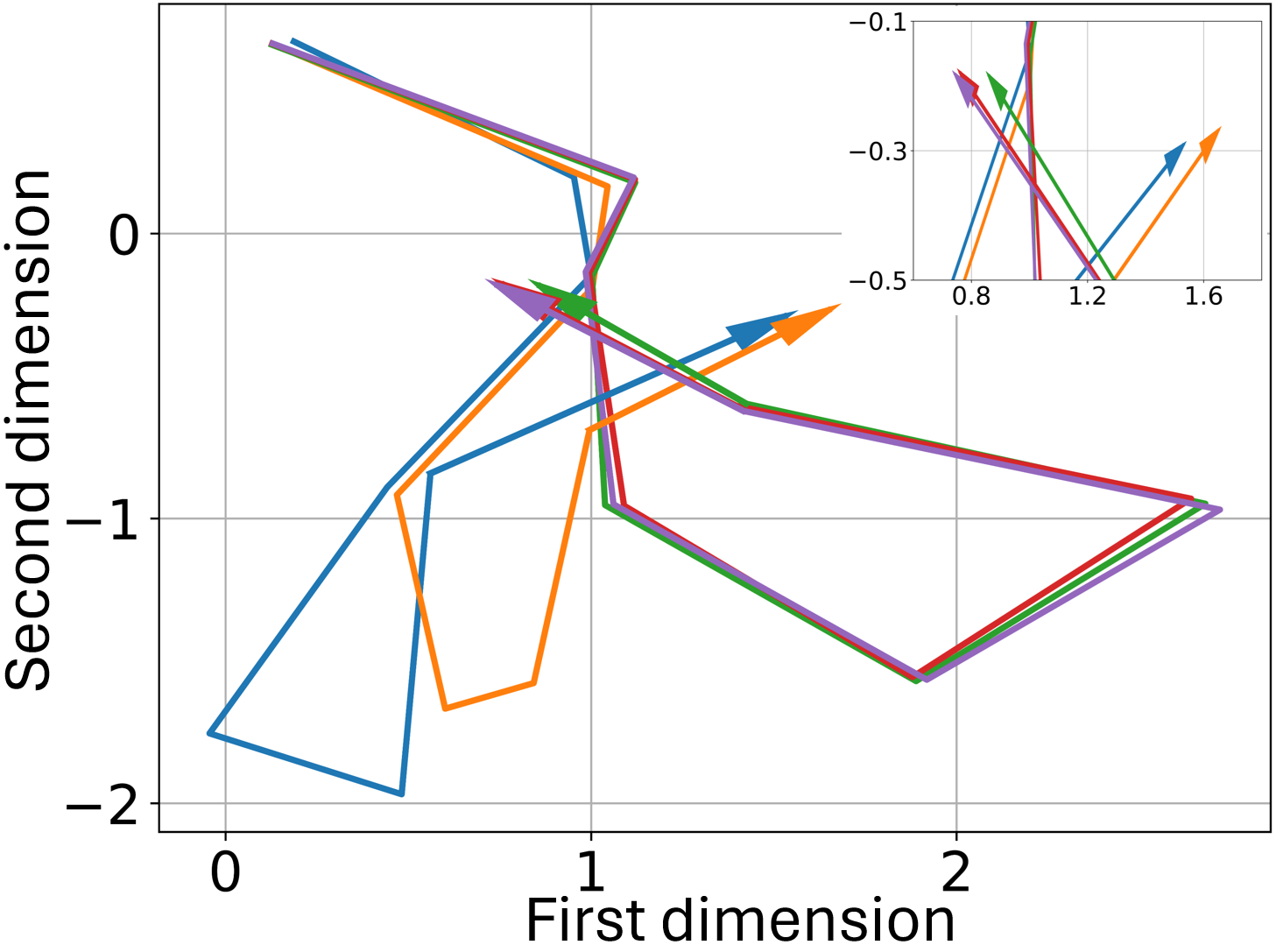}
         \label{fig:ffln-trajAposWpos}
     \end{subfigure}
     \caption{Tokens' trajectory in the later layers of a Transformer forms clusters with similar shapes in model (left) with $\operatorname{LayerNorm}$, no feedforward, and (right) with both $\operatorname{LayerNorm}$ and feedforward.}
     \label{fig:ln-lnff-trajAposWpos-1}
\end{figure}


\subsection{Tokens' Dynamic in Practical Transformers Architecture}
\label{sec:theory:practical}
This section presents an empirical study of token dynamics in a practical Transformer architecture. We analyze a 24-layer model pre-trained on WikiText-103 \cite{merity2016pointer}, enforcing our theoretical framework by constraining the symmetric matrices \(W_{\mathrm{sym}}\) and \(A_{\mathrm{sym}}\) to be either positive definite or negative definite (see Appendix~\ref{appendix:parametrizations} for details).

\camready{We begin by empirically verifying the convergence–divergence behavior of simplified model variants that omit feed-forward networks and $\operatorname{LayerNorms}$, directly validating our theoretical results. When \(A_{\mathrm{sym}}\prec 0\) and \(W_{\mathrm{sym}}\succ 0\), token norms contract monotonically across layers, potentially collapsing toward zero if number of layers increased (Figure~\ref{fig:noffln-normAnegWpos}). By contrast, under \(A_{\mathrm{sym}}\succ 0\) and \(W_{\mathrm{sym}}\succ 0\), norms grow without bound (Figure~\ref{fig:noffln-normAposWpos}), consistent with Proposition~\ref{prop:general_estimation_main}. Replicating these experiments on the GPT-2 pre-trained model \cite{radford2019language}--with $\operatorname{LayerNorms}$ and feed-forward networks removed--exhibits the same divergent behavior (see Appendix~\ref{app:sim:gpt2}).}

\camready{We then demonstrate clustering behavior in both simplified and full models. With \(A_{\mathrm{sym}}\succ 0\) and \(W_{\mathrm{sym}}\succ 0\), projecting token trajectories into two dimensions reveals groups with aligned directional patterns (Figure~\ref{fig:noffln-trajectory}). Even in the full configuration with FFNs and $\operatorname{LayerNorms}$, the model maintains persistent directional clustering in deeper layers (Figure~\ref{fig:ln-lnff-trajAposWpos-1}).} Finally, we observe similar clustering behavior on GPT-2 without enforcing any constraints (see Appendix~\ref{app:gpt2}).

\section{Beyond Self-Attention Dynamic: Effect of Positional Encoding}\label{sec:rotary}
In practice, Transformer models typically incorporate positional encodings into self-attention mechanisms to enhance expressivity and computational efficiency. In this section, we analyze how positional encodings influence token dynamics. Among common approaches, absolute and rotary positional encodings are predominant. We demonstrate that, under rotary positional encoding, tokens are more likely to exhibit divergent behavior rather than convergent dynamics in comparison with the self-attention with/without absolute positional encoding.



\subsection{Absolute Positional Encoding.}

The self-attention map with absolute positional encoding~\citep{vaswani2017attention, deit} transforms the input sequence $X$ into the output sequence $Y$ as
\begin{align*}
    y_l = V^{\top} \cdot \sum_{i=1}^L \left( \frac{e^{(x_l+p_l)^{\top} \cdot W \cdot (x_i+p_i)}}{\sum_{j=1}^Le^{(x_l+p_l)^{\top} \cdot W \cdot (x_j+p_j)}} \right) \cdot (x_i+p_i),
\end{align*}
for $l=1,\ldots,L$, where $p_i = [p_{i,1},\ldots,p_{i,D}]^{\top} \in \mathbb{R}^D$. A common choice is to either learn the positional embeddings $p_i$ jointly with the model parameters, or to fix them using a sinusoidal scheme:
\begin{equation*}
    p_{i,j} = \left\{
        \begin{aligned}
           &\operatorname{sin} (i \cdot 10000^{-\frac{j}{D}}), &\text{if } j \text{ is even},\\
           &\operatorname{cos}(i \cdot 10000^{-\frac{j-1}{D}}), &\text{if } j \text{ is odd}.
        \end{aligned}
    \right.
\end{equation*}   
The differential system that governs the continuous-time dynamics of the self-attention with absolute positional encoding is given by
\begin{align}\label{eq:attn-spe-ode-main}
    \frac{dx_l(t)}{dt} = V^{\top} \cdot \sum_{i=1}^L \left( \frac{e^{(x_l(t)+p_l)^{\top} \cdot W \cdot (x_i(t)+p_i)}}{\sum_{j=1}^L e^{(x_l(t)+p_l)^{\top} \cdot W \cdot (x_j(t)+p_j)}} \right) \cdot (x_i(t)+p_i),
\end{align}
for $l = 1, \ldots, L$.

\begin{remark}[Absolute positional encoding has minimal impact]
    The dynamical properties of the self-attention with and without absolute are similar, since the above differential system can be transformed into the original self-attention dynamic~\eqref{eq:attn-ode-main} by the transition $x_l \mapsto x_l+p_l$.
    In particular, it follows from Theorem~\ref{thm:collapse_main} that: when $A \prec 0$ and $W \succ 0$, the solution $X(t)$ of equation~\eqref{eq:attn-spe-ode-main} will converge to the sequence $P=(-p_1,\ldots,-p_L)$. 
    In addition, it follows from Theorem~\ref{thm:divergence_main} that, in case $V$ has at least one positive eigenvalue, tokens will diverge to infinite under certain assumptions on the initial conditions.
\end{remark}

\subsection{Rotary Positional Encoding.}

In contrast to absolute positional encoding, for even dimension $D$, the rotary positional encoding~\citep{liu2024deepseek,su2024roformer} maps the input $X$ into the output $Y$ as
\begin{align}
    y_l = V^{\top} \cdot \sum_{i=1}^L \left( \frac{e^{x_l^{\top} \cdot W_{li} \cdot x_i}}{\sum_{j=1}^Le^{x_l^{\top} \cdot W_{lj} \cdot x_j}} \right) \cdot x_i,
\end{align}
for $l=1,\ldots,L$, where 
\begin{align} \label{eq:rope}
    W_{li} = \frac{1}{\sqrt{D_k}} \left( Q  \cdot K^{\top} + \overline{Q}  \cdot R^D_{\theta,i-l} \cdot \overline{K}^{\top} \right) \in \mathbb{R}^{D \times D},
\end{align}
with $\overline{Q},\overline{K}$ are two additional learnable matrices and
$$
R^{D}_{\Theta, m} = \operatorname{blockdiag}\left(
\begin{pmatrix}
\cos m\theta_1 & -\sin m\theta_1 \\
\sin m\theta_1 & \cos m\theta_1
\end{pmatrix},
\ldots,
\begin{pmatrix}
\cos m\theta_{D/2} & -\sin m\theta_{D/2} \\
\sin m\theta_{D/2} & \cos m\theta_{D/2}
\end{pmatrix}
\right)
$$

where  $\Theta = \left\{ \theta_i = 10000^{-2(i-1)/D}, \quad i = 1, 2, \ldots, D/2 \right\}$.


The continuous-time dynamics of the self-attention with rotary positional encoding can be described via the differential system
\begin{align}\label{eq:attn-rpe-ode-main}
    \frac{dx_l(t)}{dt} = V^{\top} \cdot \sum_{i=1}^L \left( \frac{e^{x_l(t)^{\top} \cdot W_{li} \cdot x_i(t)}}{\sum_{j=1}^L e^{x_l(t)^{\top} \cdot W_{lj} \cdot x_j(t)}} \right) \cdot x_i(t),
\end{align}
for $l = 1, \ldots, L$.
Rotary positional encoding is an essential component of latent attention which is at the core of Deepseek~\cite{liu2024deepseek}. 

\begin{remark}[Rotary positional encoding encourages token divergence]
    Unlike absolute positional encoding, the rotary positional encoding exhibits markedly different behavior as its encoding encourages token divergence. 
    Indeed, the additional term $\Delta W_{li} = \overline{Q} \cdot R^D_{\theta,i-l} \cdot \overline{K}^{\top}$ in the query-key interaction matrix $W_{li}$, as defined in equation~\eqref{eq:rope}, can inhibit the system from entering the convergence regime, as shown in Figure~\ref{fig:rope-conv-to-div}. 
    As a result, in self-attention mechanisms with rotary positional encoding, the divergence scenario tends to occur more frequently compared to those employing absolute or no positional encoding.
\end{remark}

To support our findings, we visualize the evolution of token norms and pairwise L2 distances in a pre-trained Transformer model without $\operatorname{LayerNorm}$ and feed-forward layers, comparing Rotary Positional Encodings (RoPE) with sinusoidal positional encodings. Figure~\ref{fig:dist_norm_rope} shows that both token norms and token distances diverge faster with RoPE than with sinusoidal encodings, suggesting that RoPE mitigates the convergence aspect in self-attention.


\begin{figure*}
  \centering
  \begin{minipage}{.35\textwidth}
    \centering
    \includegraphics[width=0.88\linewidth]{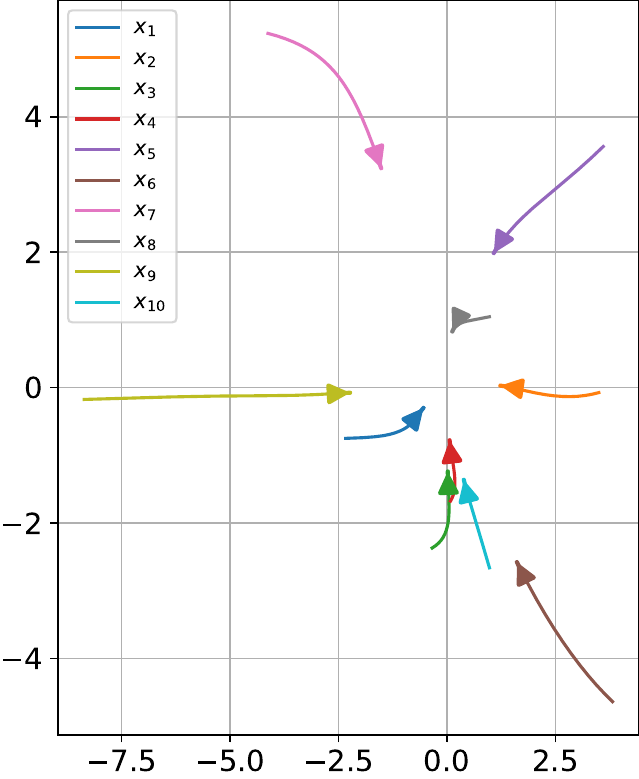}
    \label{fig:qk-convergence}
  \end{minipage}    \hspace{1em}  
  \begin{minipage}{.35\textwidth}
    \centering
    \includegraphics[width=0.9\linewidth]{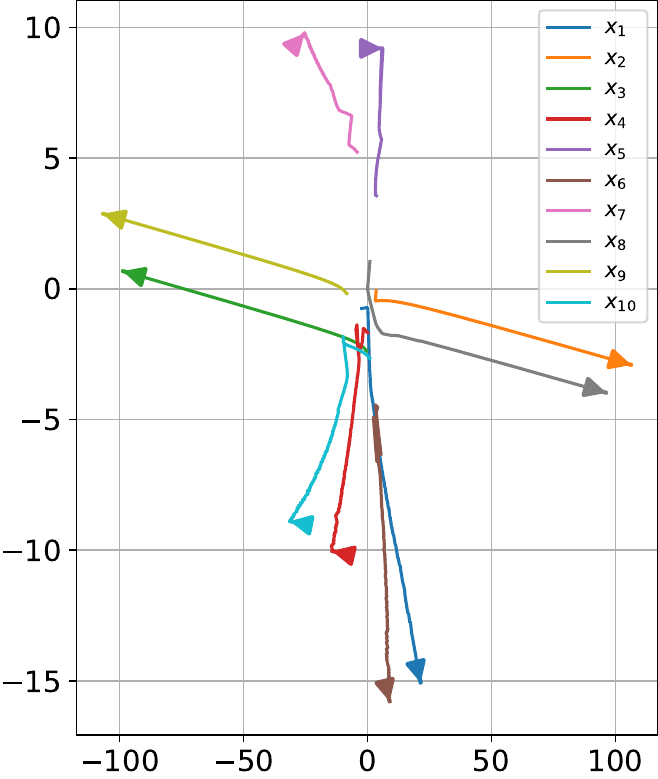}
    \label{fig:rope-divergence}
  \end{minipage}
    \caption{By introducing the additional term $\Delta W_{li}$, the system’s behavior shifts from a convergent regime (left) to a divergent regime (right).}
    \label{fig:rope-conv-to-div}
\end{figure*}

\begin{figure}
    \centering
  \resizebox{0.85\linewidth}{!}{%
    \begin{subfigure}[b]{0.48\linewidth}
      \centering
      \includegraphics[width=\linewidth]{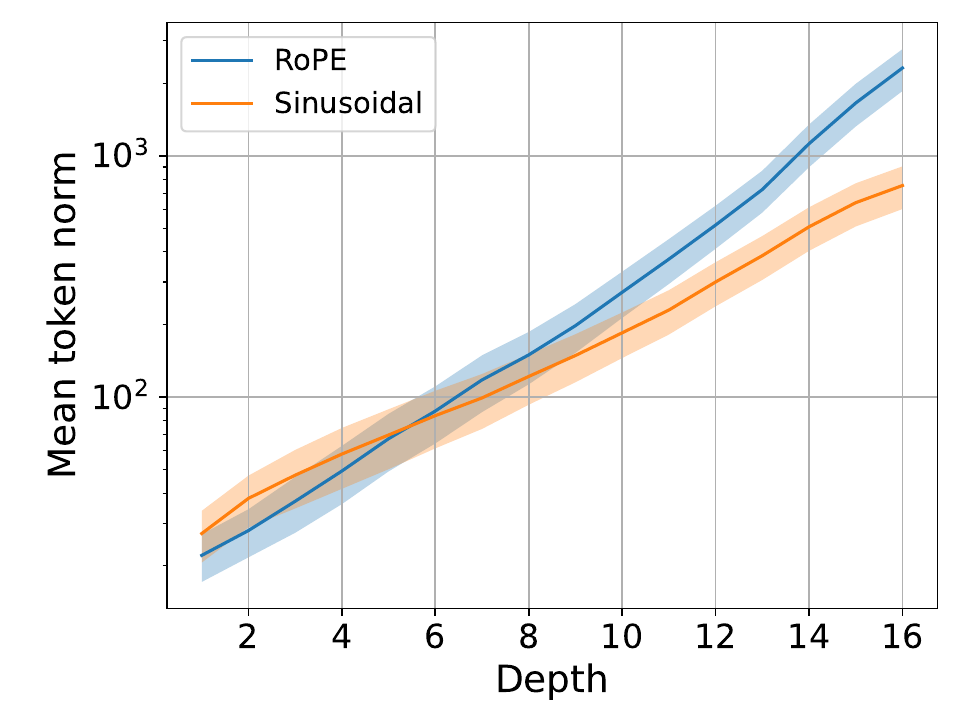}
      \label{fig:dist-div}
    \end{subfigure}
    \hfill
    \begin{subfigure}[b]{0.48\linewidth}
      \centering
      \includegraphics[width=\linewidth]{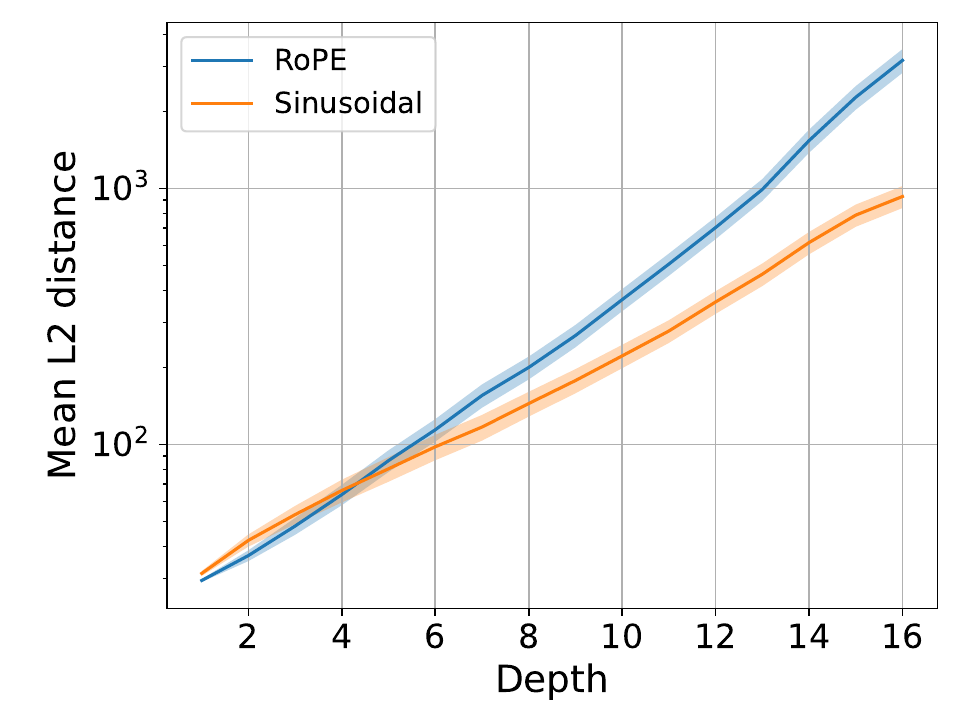}
      \label{fig:norm-div}
    \end{subfigure}
  \vspace{-\baselineskip}
  }    \caption{Token norm and distance throughout layers of a pre-trained model with RoPE and Sinusoidal positional encoding.}
    \label{fig:dist_norm_rope}
  \end{figure}




\section{Experiments}
\label{sec:experiment}

\label{sec:exp}
In this section, we provide empirical validation of our theoretical findings and introduce enhancements to absolute positional encoding and Rotary Positional Embedding (RoPE) in practical Transformer models. We conduct experiments on three benchmark tasks: language modeling on WikiText-103~\citep{merity2016pointer} and EnWik8~\citep{enwik8}, and image classification on ImageNet-1K~\citep{deng2009imagenet}. Our goals are twofold: (1) to show that convergence behavior adversely affects the performance of Transformers and should thus be mitigated; and (2) to demonstrate that encouraging divergence in Transformers improves performance.
Throughout the experiments, we compare our modified Transformers with the baseline Transformers of the same configuration. Our results are averaged over 5 runs. Detailed information about the model architecture, hyperparameters, and training procedures is provided in Appendix~\ref{appendix:exp}. 

\subsection{Negative Effects of the Convergence Scenario}
\label{exp:posdef}



\begin{table}[t]
\centering

\begin{minipage}[t]{0.5\textwidth}
\centering

\caption{Performance of Transformers with sinusoidal positional encoding across scenarios on language modeling task.}
\label{tab:exp:posdef}
{\small
\begin{tabular}{lccc}
\toprule
\multirow{2}{*}{Scenario} & EnWik8 & \multicolumn{2}{c}{WikiText-103 PPL (↓)} \\
& BPC (↓) & Valid & Test\\ 
\midrule
\textit{Baseline}     & 1.331 & 31.63 & 32.37 \\ 
Convergence           & 1.350 & 32.24 & 33.05 \\ 
Intermediate          & 1.345 & 31.91 & 32.78 \\ 
Divergence            & \textbf{1.324} & \textbf{31.12} & \textbf{32.07} \\ 
\bottomrule
\end{tabular}
}
\end{minipage}
\hfill
\begin{minipage}[t]{0.45\textwidth}
\centering
\caption{Top 1 and Top 5 Validation Accuracy of DeiT (learnable positional encoding) across different scenarios on ImageNet1K.}
\label{tab:exp:deit-div}
{\small
\begin{tabular}{lcc}
\toprule
\multirow{2}{*}{Scenario} & \multirow{2}{*}{Top 1 Acc (↑)} & \multirow{2}{*}{Top 5 Acc (↑)} \\ 
&&\\
\midrule
\textit{DeiT Baseline}   & 71.80 & 91.01 \\ 
Convergence              & 71.34 & 90.64 \\ 
Intermediate             & 71.60 & 90.91 \\ 
Divergence               & \textbf{71.96} & \textbf{91.05} \\ 
\bottomrule
\end{tabular}
}
\end{minipage}

\end{table}

\begin{table}[t]
\centering

\caption{Bits Per Characters (BPC) and Perplexity (PPL) of Transformers with Rotary Positional Encoding across different scenarios on EnWik8 and WikiText-103 language modeling.}
{\small
\begin{tabular}{lccc}
\toprule
\multirow{2}{*}{Scenario} & \multicolumn{1}{c}{EnWik8 Pretrain} & \multicolumn{2}{c}{WikiText-103 Pretrain} \\
& Test BPC (↓) &   Valid PPL (↓) & Test PPL (↓) \\ 
\hline \hline
\textit{Transformer + RoPE}             & 1.295    &  31.37	& 32.35\\ 
Transformer + RoPE + $\lambda I_D$   & 1.288  & 31.10 & 32.09 \\ 
Transformer + RoPE + $\lambda A$  & \textbf{1.281}    & \textbf{31.06} & \textbf{32.04} \\ 
\bottomrule
\end{tabular}
\label{tab:exp:rope-lambda}
}
\end{table}

We empirically examine the impact of convergence and divergence dynamics predicted by our analysis across different Transformer architectures. Specifically, we evaluate models on the WikiText-103 and EnWik8 language modeling tasks using sinusoidal positional encoding, and on ImageNet-1K using a Vision Transformer variant DeiT~\citep{deit} which uses learnable positional encoding. The parameters in attention layers are explicitly constrained to conform to these distinct scenarios. In the convergence scenario, we impose the conditions \( W_{\text{sym}} \succ 0 \) and \( A_{\text{sym}} \prec 0 \). Conversely, in the divergence scenario, we require \( W_{\text{sym}} \succ 0\) and \( A_{\text{sym}} \succ 0\). We additionally examine an intermediate scenario in which \( W_{\text{sym}} \succ 0 \) and \( A_{\text{sym}} \) contains an equal number of positive and negative eigenvalues. Further details on the benchmark setup and the implementation of these constraints are provided in Appendix~\ref{appendix:lmdetails} and Appendix~\ref{appendix:parametrizations}, respectively.

Table~\ref{tab:exp:posdef} reports model performance across the baseline and three constrained settings. As the constraints shift toward convergence, performance consistently deteriorates. In contrast, the divergence setting achieves the best results, yielding the lowest BPC and perplexity. On WikiText-103, the divergence case outperforms the convergence and baseline settings by reducing validation perplexity by \textbf{1.12} and \textbf{0.51}, respectively. On EnWik8, it lowers test BPC to 1.324, compared to 1.350 under convergence and 1.331 under the baseline. These results suggest that divergence constraints improve model performance, while convergence constraints have a detrimental effect.

To analyze the impact of convergence and divergence constraints, we plot the mean token norm and mean pairwise L2 distance across layers (without LayerNorm or feed-forward). As shown in Figure~\ref{fig:distance-norm-div-conv}, both metrics decrease under the convergence constraint, whereas they rise under the intermediate and divergence regimes, with divergence exhibiting the steepest increase. These empirical trends validate our theoretical analysis in Section~\ref{sec:tokens_dynamic} and suggest that the intermediate setting combines aspects of both convergence and divergence regimes. 

\begin{figure}[t]
  \centering

  \resizebox{0.85\linewidth}{!}{%
    \begin{subfigure}[b]{0.48\linewidth}
      \centering
      \includegraphics[width=\linewidth]{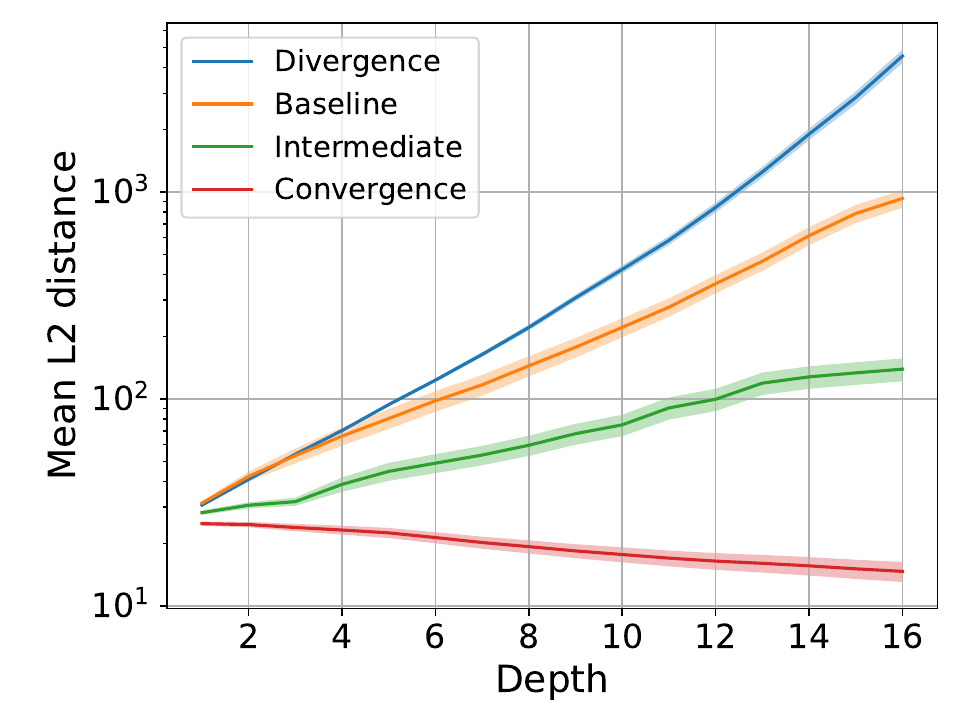}
      \label{fig:dist-div}
    \end{subfigure}
    \hfill
    \begin{subfigure}[b]{0.48\linewidth}
      \centering
      \includegraphics[width=\linewidth]{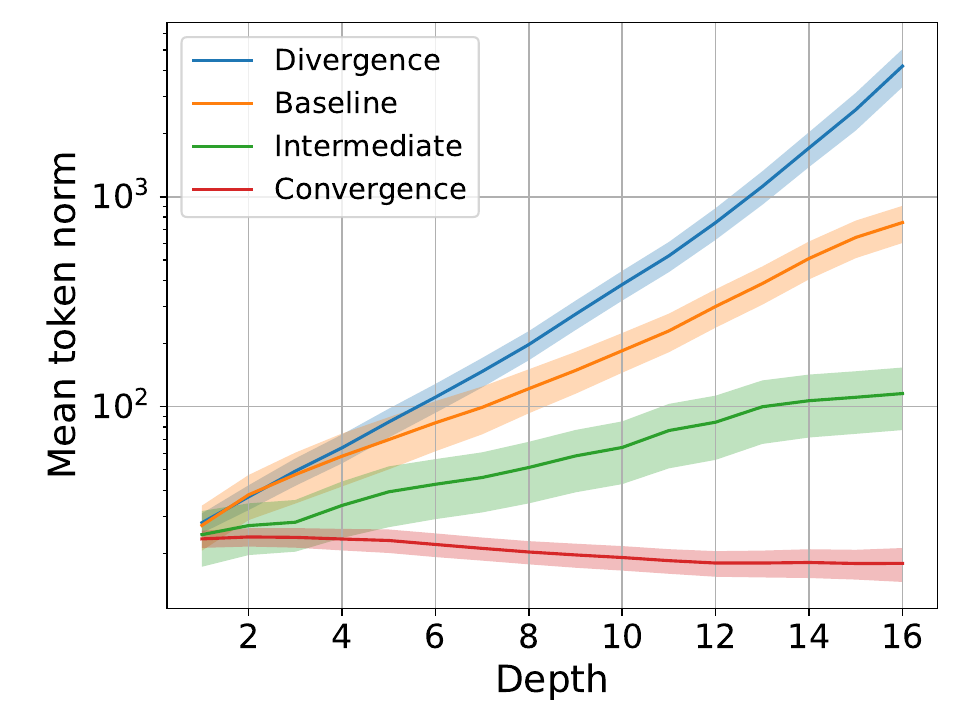}
      \label{fig:norm-div}
    \end{subfigure}
  }
  \vspace{-\baselineskip}
    \caption{Mean token distance (left) and mean token norm (right) across Transformer layers in a pre-trained model without LayerNorm or feedforward network.}
  \label{fig:distance-norm-div-conv}

\end{figure}

\subsection{Promoting Divergence in Rotary Positional Embedding}
\label{sec:lamda_I}
Motivated by the influence of the additional term $\Delta W_{li}$ on the query-key interaction matrix $W_{li}$ in promoting divergent token dynamics under RoPE, we propose a simple yet effective modification to further mitigate convergence - an undesirable regime shown to negatively affect model performance. Specifically, we add a learnable regularization term $\lambda I_D$ or $\lambda A \in \mathbb{R}^{D \times D}$ to $W_{li}$, where $\lambda$ is a learnable negative scalar, $I_D$ is the identity matrix, and $A$ is a learnable diagonal matrix with strictly positive entries. This modification effectively learns to subtract a positive quantity from the diagonal of $W_{li}$, promoting negative eigenvalues, thus discouraging convergence scenario in attention layers.

We provide empirical evidence on WikiText-103 and EnWik8 demonstrating our findings' practical relevance. As shown in Table~\ref{tab:exp:rope-lambda}, even minimal intervention yields measurable gains. On EnWik8, adding $\lambda I_D$ reduces Test BPC from 1.295 to 1.288, and $\lambda A$ further reduces it to 1.281. On WikiText-103, both variants outperform RoPE, with $\lambda A$ achieving a 0.3-point improvement in Validation and Test perplexity. These results, while modest, are consistent with our theoretical predictions and validate the utility of encouraging divergence in self-attention dynamics.


\section{Conclusion}
\label{sec:conclusion}
We analyzed token dynamics in self-attention and leveraged these insights to improve Transformer performance. Through theoretical investigation, we derived conditions under which tokens converge or diverge during self-attention and explored how positional encodings - such as absolute and rotary - affect these dynamics. Empirical evaluations on pre-trained Transformers supported our findings, highlighting the negative impact of token convergence on performance. To address this, we proposed simple enhancements to self-attention that yielded measurable improvements. A limitation of our analysis is its focus on self-attention, omitting components like layer normalization and feed-forward networks. However, empirical validation confirms our conclusions remain relevant in practice. Extending our framework to the full Transformer architecture is a promising direction for future research.

\newpage
\begin{ack}
This research / project is supported by the National Research Foundation Singapore under the AI
Singapore Programme (AISG Award No: AISG2-TC-2023-012-SGIL). This research / project is
supported by the Ministry of Education, Singapore, under the Academic Research Fund Tier 1
(FY2023) (A-8002040-00-00, A-8002039-00-00). This research / project is also supported by the
NUS Presidential Young Professorship Award (A-0009807-01-00) and the NUS Artificial Intelligence Institute--Seed Funding (A-8003062-00-00).
\end{ack}

\nocite{langley00}

\bibliography{main}
\bibliographystyle{plain}

\newpage
\appendix
\onecolumn





\begin{center}
    {\Large 
    \textbf{Supplement to ``Token Dynamics of Self-Attention"}
    }
\end{center}


\DoToC

\newpage
\section{Related Works}\label{sec:related_works}


\paragraph{Transformers as Interacting Particle Systems.} Particle systems offer a novel perspective to understand the dynamics of Transformer models and inspire architectural innovations. In \cite{lu2019understanding}, the Transformer architecture is mathematically framed as a numerical ODE solver for a convection-diffusion equation in a multi-particle dynamic system. Similarly, leveraging insights from numerical ODE solvers, \cite{dutta2021redesigning} introduces TransEvolve, a temporal evolution scheme inspired by interacting particle dynamics.
Furthermore, the ODE governing Transformer dynamics is closely connected to the extensive literature on nonlinear systems, including flocking phenomena \cite{ha2019complete}, the Kuramoto model \cite{kuramoto1975self, acebron2005kuramoto}, consensus formation \cite{krause2000discrete, motsch2014heterophilious}, opinion formation \cite{jabin2014clustering, rainer2002opinion}, and systems of self-driven particles \cite{vicsek1995novel}. 

\paragraph{Clustering Effects of Transformers.} Research on interacting particle systems has shown that tokens in Transformer models exhibit a long-time clustering phenomenon. Geshkovski et al proved in \cite{geshkovski2024emergence} that tokens cluster around limiting objects determined by their initial values, highlighting their context awareness. 
This was extended to the study of metastability of self-attention with layer normalization in \cite{geshkovski2023mathematical,geshkovski2024dynamic}. 
Additionally, \cite{alcalde2024clustering} proved that tokens in pure-attention hardmax Transformer models asymptotically converge to clustered equilibria.

\section{Dynamical Properties of Tokens in Self-Attention}
\label{appendix:proof}



Recall that the dynamical system governing the continuous-time dynamic of the self-attention is given by:
\begin{equation}\label{eq:attn-ode}
    \begin{aligned}
        &\frac{dx_l(t)}{dt}= V^{\top} \cdot \sum_{i=1}^L \left( \frac{e^{x_l(t)^{\top} \cdot W \cdot x_i(t)}}{\sum_{j=1}^Le^{x_l(t)^{\top} \cdot W \cdot x_j(t)}} \right) \cdot x_i(t), \quad l=1,\ldots,L,    
    \end{aligned}
\end{equation}
with the initial condition $(x_1(0),\ldots,x_L(0)) = (x_{10},\ldots,x_{L0}) \in (\mathbb{R}^D)^L$.
In this system, $W=Q \cdot K^{\top}$. The matrices $Q,K,V$ are learnable matrices and they are called queries, keys and values matrices, respectively.
In our setting, the parameters $W$ and $V$ are assumed to be time-independent.

Set $A = W \cdot (V^{\top})^{-1}$.
We will prove the following observations in the subsequent subsections.

\begin{itemize}
    \item[] \textbf{Distances Between Tokens.}
    If \( A \prec 0 \), then all tokens will tend to move closer and closer to each other as time \( t \) approaches infinity. 
    In contrast, if \( A \succ 0 \), the tokens will either tend to maintain constant distances or move farther away from each other as time \( t \) approaches infinity (see Theorem~\ref{thm:token_distance}).

    \item[] \textbf{Convergence Scenario.}
    If $A \prec 0$ and $W \succ 0$, then all tokens will tend to zero as the time $t$ tends to infinity (see Theorem~\ref{thm:collapse}).

    \item[] \textbf{Divergence Scenario.} 
    If $V$ has at least one positive eigenvalue and $W$ is arbitrary, then all tokens will diverge to infinity under certain assumption on the initial data (see Theorem~\ref{thm:divergence_main}). 
\end{itemize}

\subsection{Distances Between Tokens Over Time} \label{app_subsec:distance}

In this subsection, we analyze the dynamical properties of the distances between tokens as the time $t$ increases. 
We start with the following lemma, which can be seen as a refinement of \citep[Lemma~7.1]{geshkovski2024emergence}.

\begin{lemma}\label{lem:f_convex}
    Let $W$ be an arbitrary matrix in $\mathbb{R}^{D \times D}$.
    For each $x_1,\ldots,x_L$ in $\mathbb{R}^D$, the function $f \colon \mathbb{R}^D \to \mathbb{R}$ defined by
    \begin{align*}
        f(u) = \log \left( \sum_{j=1}^L e^{u^{\top} \cdot W \cdot x_j} \right), \quad u \in \mathbb{R}^D,
    \end{align*}
    is a convex function.
\end{lemma}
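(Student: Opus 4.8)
The plan is to exhibit $f$ as a composition of a log-sum-exp function with an affine map, both of which are well known to preserve convexity. First I would fix the matrix $W$ and the vectors $x_1,\ldots,x_L$, and introduce the linear map $\Lambda\colon \mathbb{R}^D \to \mathbb{R}^L$ whose $j$-th coordinate is $u \mapsto (W^\top x_j)^\top u = u^\top W x_j$. In other words, $\Lambda(u) = M u$ where $M \in \mathbb{R}^{L \times D}$ has rows $x_j^\top W^\top$. Then $f(u) = g(\Lambda(u))$, where $g\colon \mathbb{R}^L \to \mathbb{R}$ is the log-sum-exp function $g(v_1,\ldots,v_L) = \log\bigl(\sum_{j=1}^L e^{v_j}\bigr)$.

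The key step is the convexity of $g$. I would recall the standard argument: a short computation gives the Hessian $\nabla^2 g(v) = \operatorname{diag}(s) - s s^\top$, where $s_j = e^{v_j}/\sum_k e^{v_k}$ is the softmax vector (so $s_j \ge 0$ and $\sum_j s_j = 1$). For any $w \in \mathbb{R}^L$, one has $w^\top \nabla^2 g(v) w = \sum_j s_j w_j^2 - \bigl(\sum_j s_j w_j\bigr)^2 \ge 0$ by the Cauchy--Schwarz inequality (equivalently, it is the variance of the random variable taking value $w_j$ with probability $s_j$, hence non-negative). Thus $g$ is convex on $\mathbb{R}^L$. Since precomposition of a convex function with an affine map is convex --- directly, for $u_1,u_2 \in \mathbb{R}^D$ and $\alpha \in [0,1]$, $f(\alpha u_1 + (1-\alpha) u_2) = g(\alpha \Lambda(u_1) + (1-\alpha)\Lambda(u_2)) \le \alpha g(\Lambda(u_1)) + (1-\alpha) g(\Lambda(u_2)) = \alpha f(u_1) + (1-\alpha) f(u_2)$ --- we conclude that $f$ is convex.

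I do not anticipate a genuine obstacle here; the only mild subtlety worth flagging is that $W$ need not be symmetric or definite, so one should not try to identify $f$ with a log-partition function of a symmetric quadratic form --- the clean route is precisely the affine-composition argument above, which uses nothing about $W$ beyond linearity of $u \mapsto u^\top W x_j$. An alternative, if one prefers a self-contained calculation over citing the composition rule, is to compute $\nabla^2 f(u)$ directly: writing $p_j(u) = e^{u^\top W x_j}/\sum_k e^{u^\top W x_k}$ and $\xi_j = W^\top x_j$, one gets $\nabla^2 f(u) = \sum_j p_j \xi_j \xi_j^\top - \bigl(\sum_j p_j \xi_j\bigr)\bigl(\sum_j p_j \xi_j\bigr)^\top$, which is a covariance matrix of the $\mathbb{R}^D$-valued random vector equal to $\xi_j$ with probability $p_j$, hence positive semidefinite. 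Either presentation suffices; I would favor the affine-composition version for brevity.
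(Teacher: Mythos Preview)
Your argument is correct. Writing $f = g \circ \Lambda$ with $g$ the log-sum-exp and $\Lambda$ linear is the cleanest textbook route, and your alternative Hessian-as-covariance computation is equally valid.

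The paper takes a different, more hands-on path: it verifies midpoint convexity directly by computing
\[
e^{f(u)+f(v)} - e^{2f\left(\tfrac{u+v}{2}\right)}
= \sum_{i,j} \tfrac{1}{2}\left( e^{\frac{u+v}{2}\cdot W\cdot x_i} - e^{\frac{u+v}{2}\cdot W\cdot x_j} \right)^2 \ge 0,
\]
and concludes $f(u)+f(v)\ge 2f\bigl(\tfrac{u+v}{2}\bigr)$. Your approach has the advantage of invoking a standard fact and giving full convexity in one stroke; the paper's version is entirely self-contained algebra but, strictly speaking, only establishes midpoint convexity and silently relies on the continuity of $f$ to upgrade to convexity. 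Either argument is perfectly adequate here; yours is arguably tidier, while the paper's has the small bonus of exhibiting exactly when the midpoint inequality is an equality (namely when $(u+v)^\top W x_i$ is independent of $i$), a fact the paper notes in passing.
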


\begin{proof}
    For arbitrary $u,v \in \mathbb{R}^D$, we have
    \begin{align*}
        e^{f(u)+f(v)} - e^{2f\left( \frac{u+v}{2}\right)}
        & = \left( \sum_{i=1}^L e^{u^{\top} \cdot W \cdot x_i} \right) \cdot \left( \sum_{j=1}^L e^{v^{\top} \cdot W \cdot x_j} \right) - \left( \sum_{k=1}^L e^{\left( \frac{u+v}{2} \right)^{\top} \cdot W \cdot x_k} \right)^2\\
        &= \sum_{i,j} \frac{1}{2} \left( e^{(u+v)^{\top} \cdot W \cdot x_i} + e^{(u+v)^{\top} \cdot W \cdot x_j} \right)
            - \sum_{i,j} e^{\frac{u+v}{2} \cdot W \cdot x_i + \frac{u+v}{2} \cdot W \cdot x_j}\\
        &= \sum_{i,j} \frac{1}{2} \left( e^{\frac{u+v}{2} \cdot W \cdot x_i} - e^{\frac{u+v}{2} \cdot W \cdot x_j} \right)^2\\
        &\geq 0.
    \end{align*}
    Therefore, $f(u)+f(v) \geq 2 f\left( \frac{u+v}{2} \right)$. Hence, $f$ is convex.
\end{proof}

The function $f$ defined in the above lemma is not strictly convex as the equality happens when $u+v=0$.

\begin{theorem}\label{thm:token_distance}
    Let $(x_1(t),\ldots,x_L(t)) \in C^{\infty}([0,+\infty)])^L$ be a solution of the dynamical system~\eqref{eq:attn-ode}.
    Assume that $V$ is invertible and $A=W \cdot (V^{\top})^{-1}$ is symmetric.
    Then the map $t \mapsto \mathbf{q}_A(x_i(t)-x_j(t))$ is non-decreasing on $[0,+\infty)$.

    \noindent
    As a consequence,
    \begin{itemize}
        \item[$(a)$] if $A \succ 0$, then $||x_i(t)-x_j(t)||_A$ is non-decreasing on $[0,+\infty)$;
        \item[$(b)$] if $A \prec 0$, then $||x_i(t)-x_j(t)||_A$ is non-increasing on $[0,+\infty)$.
    \end{itemize}
\end{theorem}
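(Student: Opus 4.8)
The plan is to compute the time derivative of $t \mapsto \mathbf{q}_A(x_i(t)-x_j(t))$ directly and show it is nonnegative, using the convexity of $f$ from Lemma~\ref{lem:f_convex} together with the rewriting of the dynamics in the form \eqref{eq:attn_ode_f}. First I would fix indices $i \ne j$ and write $w(t) := x_i(t) - x_j(t)$. Since $A$ is assumed symmetric, $\mathbf{q}_A(w(t)) = w(t)^\top A\, w(t)$ and hence
\[
    \frac{d}{dt}\mathbf{q}_A(w(t)) = 2\, w(t)^\top A\, \frac{d}{dt}w(t) = 2\,(x_i(t)-x_j(t))^\top \left( A\frac{d}{dt}x_i(t) - A\frac{d}{dt}x_j(t) \right).
\]
Now I invoke the key identity $A \cdot \frac{d}{dt}x_l(t) = \frac{\partial}{\partial u} f(t, x_l(t))$ from \eqref{eq:attn_ode_f}, which is valid because $V$ is invertible and $A = W(V^\top)^{-1}$; here $f(t,\cdot)$ is exactly the function of Lemma~\ref{lem:f_convex} with the $x_j$'s frozen at their time-$t$ values. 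This turns the derivative into
\[
    \frac{d}{dt}\mathbf{q}_A(w(t)) = 2\,(x_i(t)-x_j(t))^\top \left( \nabla_u f(t,x_i(t)) - \nabla_u f(t,x_j(t)) \right).
\]

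The next step is to recognize the right-hand side as a monotonicity inequality for the gradient of a convex function. For any differentiable convex function $g$ on $\mathbb{R}^D$ one has $(a - b)^\top(\nabla g(a) - \nabla g(b)) \ge 0$ for all $a,b$; applying this with $g = f(t,\cdot)$ (convex by Lemma~\ref{lem:f_convex}), $a = x_i(t)$, $b = x_j(t)$ gives $\frac{d}{dt}\mathbf{q}_A(w(t)) \ge 0$, so $t \mapsto \mathbf{q}_A(x_i(t)-x_j(t))$ is non-decreasing on $[0,+\infty)$. For the consequences: when $A \succ 0$, by definition $\|x_i(t)-x_j(t)\|_A = \sqrt{\mathbf{q}_A(x_i(t)-x_j(t))}$ and $\sqrt{\cdot}$ is increasing, so monotonicity of $\mathbf{q}_A(w(t))$ transfers directly to $\|x_i(t)-x_j(t)\|_A$, proving $(a)$. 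When $A \prec 0$, we have $\|x_i(t)-x_j(t)\|_A = \sqrt{-\mathbf{q}_A(x_i(t)-x_j(t))}$; since $\mathbf{q}_A(w(t))$ is non-decreasing, $-\mathbf{q}_A(w(t))$ is non-increasing, and composing with the increasing function $\sqrt{\cdot}$ shows $\|x_i(t)-x_j(t)\|_A$ is non-increasing, proving $(b)$.

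I expect the only subtle point to be the justification of differentiating under the flow and applying \eqref{eq:attn_ode_f}: one must check that $A\frac{d}{dt}x_l(t)$ really equals $\nabla_u f(t,x_l(t))$, i.e. that $\frac{\partial}{\partial u}\log\sum_j e^{u^\top W x_j(t)}\big|_{u=x_l(t)} = W^\top \cdot (\text{softmax-weighted average of } x_j(t))$ and that left-multiplying the ODE \eqref{eq:attn-ode} by $A = W(V^\top)^{-1}$ produces exactly this — a short computation using $A V^\top = W$. Everything else is a direct consequence of convexity (Lemma~\ref{lem:f_convex}) and the standard gradient-monotonicity inequality, so there is no real obstacle beyond bookkeeping; the smoothness $C^\infty$ of the solution (guaranteed since the vector field is smooth and, per \citep[Proposition~6.2]{geshkovski2024emergence}, the solution exists globally) makes the differentiation rigorous.
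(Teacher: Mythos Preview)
Your proof is correct and follows essentially the same route as the paper's: differentiate $\mathbf{q}_A(x_i(t)-x_j(t))$ using the symmetry of $A$, rewrite $A\,\frac{d}{dt}x_l(t)$ as $\nabla_u f(t,x_l(t))$ via \eqref{eq:attn_ode_f}, and then apply gradient monotonicity of the convex function $f(t,\cdot)$ from Lemma~\ref{lem:f_convex}. One cosmetic slip: in your parenthetical check you wrote $W^\top$ where the gradient of $u\mapsto \log\sum_j e^{u^\top W x_j(t)}$ is actually $W\cdot(\text{softmax average})$, but this does not affect the argument since you correctly rely on \eqref{eq:attn_ode_f} and $AV^\top=W$.
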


\begin{proof}
    Fix an arbitrary time $s \in (0,+\infty)$.
    Let $f$ be the function defined in Lemma~\ref{lem:f_convex} (with $x_j$ there is replaced by $x_j(s)$ in this proof).
    Then we see that
    \begin{align*}
        \frac{\partial f}{\partial u}(u) = W \cdot \sum_{i=1}^L \frac{e^{u^{\top} \cdot W \cdot x_i(s)}}{\sum_{j=1}^Le^{ u^{\top} \cdot W \cdot x_j(s)}} \cdot x_i(s).
    \end{align*}
    Therefore, from the dynamical system~\eqref{eq:attn-ode}, we have
    \begin{align}\label{eq:f_via_xl}
        \frac{\partial f}{ \partial u}(x_l(s)) = A \cdot \frac{d}{ds}x_l(s), \quad l=1,\ldots,L.
    \end{align}
    Since $f$ is convex, we have
    \begin{align*}
        (x_i(s)-x_j(s))^{\top} \left( \frac{\partial f}{ \partial u}(x_i(s))-\frac{\partial f}{ \partial u}(x_j(s)) \right) \geq 0.
    \end{align*}
    From equation~\eqref{eq:f_via_xl}, we have
    \begin{align*}
        (x_i(s)-x_j(s))^{\top} \cdot A \cdot \left( \frac{d}{ds} x_i(s)-\frac{d}{ds} x_j(s) \right) \geq 0.
    \end{align*}
    Since $A$ is symmetric, it follows from the above inequality that:
    \begin{align*}
        \frac{d}{ds} \mathbf{q}_A(x_i(s)-x_j(s)) \geq 0.
    \end{align*}
    This shows that the map $t \mapsto \mathbf{q}_A(x_i(s)-x_j(s))$
    is non-decreasing on $[0,+\infty)$.
    The items $(a)$ and $(b)$ are obtained due to the definition of $||\cdot||_A$.
\end{proof}


\subsection{Convergence Scenario} \label{app_subsec:convergence}

In this section, we will prove that when  $A \prec 0$ and $W \succ 0$, the solution of the dynamical system~\eqref{eq:attn-ode} tends to zero as $t$ approaches infinity.  
The special case where $A = -I_D$ and $W = I_D$ was already proved in \citep[Section~8.2]{geshkovski2024emergence}.  
We borrow the proof technique from \citep[Section~8.2]{geshkovski2024emergence} and carefully refine it so that it applies to a broader generalization.

\begin{proposition}\label{prop:general_estimation}
    Let $(x_1(t),\ldots,x_L(t)) \in C^{\infty}([0,+\infty)])^L$ be a solution of the dynamical system~\eqref{eq:attn-ode}.
    Assume that $V$ is invertible and $A=W \cdot (V^{\top})^{-1}$ is symmetric. Then, we have
    \begin{align*}
        \frac{d}{dt} \mathbf{q}_A(x_l(t))
        \geq 
        2-\frac{2L}{e^{\mathbf{q}_W(x_l(t))}},
    \end{align*}
    for all $l=1,\ldots,L$ and $t \in [0,+\infty)$.
    As a consequence, if $A \prec 0$ and $W_{\text{sym}} \succ 0$, then $||x_l(t)||_A$ is bounded for all $l=1,\ldots,L$.
        
\end{proposition}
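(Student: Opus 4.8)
\textbf{Proof proposal for Proposition~\ref{prop:general_estimation}.}

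The plan is to establish the differential inequality by computing $\frac{d}{dt}\mathbf{q}_A(x_l(t))$ directly from the dynamical system and then bounding the resulting expression below. Using equation~\eqref{eq:f_via_xl}, which gives $A\cdot \frac{d}{dt}x_l(t) = \frac{\partial f}{\partial u}(x_l(t))$ with $f$ as in Lemma~\ref{lem:f_convex} (and $x_j$ replaced by $x_j(t)$), and the symmetry of $A$, I would write
\[
\frac{d}{dt}\mathbf{q}_A(x_l(t)) = 2\, x_l(t)^\top A\, \frac{d}{dt}x_l(t) = 2\, x_l(t)^\top \frac{\partial f}{\partial u}(x_l(t)).
\]
Since $\frac{\partial f}{\partial u}(x_l(t)) = W\cdot \sum_{i=1}^L p_{li}\, x_i(t)$ where $p_{li} = e^{x_l(t)^\top W x_i(t)}/\sum_j e^{x_l(t)^\top W x_j(t)}$ are the softmax weights (summing to $1$), the right-hand side becomes $2\sum_{i=1}^L p_{li}\, x_l(t)^\top W x_i(t)$. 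Now recognize $\sum_i p_{li}\, x_l^\top W x_i = \sum_i p_{li}\log(e^{x_l^\top W x_i})$, i.e. a $p_{li}$-weighted average of the log-numerators in $f$.

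The key inequality is then a concavity/Jensen-type bound: by the weighted AM–GM (or equivalently Jensen applied to $\log$), $\sum_i p_{li}\log(e^{x_l^\top W x_i}) \geq$ something I can compare to $\log\big(\sum_j e^{x_l^\top W x_j}\big)$. More directly, I expect the cleanest route is to bound each softmax weight from below: since every term in the denominator is positive, $p_{li} = e^{x_l^\top W x_i}/\sum_j e^{x_l^\top W x_j}$, so $\sum_i p_{li}\, x_l^\top W x_i \geq$ the minimum value of $x_l^\top W x_i$ over $i$, but that is too weak. Instead, I would isolate the diagonal term $i=l$: $\sum_i p_{li}\, x_l^\top W x_i = p_{ll}\,\mathbf{q}_W(x_l) + \sum_{i\neq l} p_{li}\, x_l^\top W x_i$. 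A better tactic: use $\log\big(\sum_j e^{a_j}\big) \le \max_j a_j + \log L$ together with $\sum_i p_{li} a_i \ge$ (a lower bound). Concretely, writing $a_i = x_l^\top W x_i$, one has $\sum_i p_{li} a_i = \sum_i \frac{e^{a_i}}{\sum_j e^{a_j}} a_i$; since $t\mapsto te^t$ is not monotone this needs care, but the standard fact $\sum_i \frac{e^{a_i}}{\sum_j e^{a_j}} a_i \ge a_l - \log L + \log\big(\sum_j e^{a_j}\big) \cdot (\text{something})$ is not quite it either. The cleanest correct statement: by Jensen's inequality applied to the convex function $\exp$, $\exp\big(\sum_i p_{li} a_i\big) \le \sum_i p_{li} e^{a_i} = \frac{\sum_i e^{2a_i}}{\sum_j e^{a_j}}$; this goes the wrong way. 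The route that works is: $\sum_i p_{li} a_i \ge \log\big(\sum_i e^{a_i}\big) - \log L$ is \emph{false} in general, but $\sum_i p_{li}a_i \ge a_l \cdot p_{ll} + (\text{negative part controlled})$. I would instead follow the estimate $\sum_{i} p_{li}a_i \geq a_l - \sum_i p_{li}(a_l - a_i)$ and bound $a_l - a_i$; combined with $p_{li} \ge e^{a_i - a_l}/L \cdot (\text{normalization})$, leading to $\frac{d}{dt}\mathbf{q}_A(x_l) \ge 2\mathbf{q}_W(x_l) - 2\log\big(\sum_j e^{a_j - a_l}\big) \ge 2\mathbf{q}_W(x_l) - 2\log L$ when $\mathbf{q}_W(x_l)$ is large; the stated form $2 - 2L e^{-\mathbf{q}_W(x_l)}$ suggests instead bounding $p_{ll} \ge e^{\mathbf{q}_W(x_l)}/(L e^{\mathbf{q}_W(x_l)}) $-type manipulations — the precise chain is $\sum_i p_{li}a_i \ge a_l p_{ll} \ge \mathbf{q}_W(x_l)\cdot\frac{e^{\mathbf{q}_W(x_l)}}{\sum_j e^{a_j}}$ after noting $a_l = \mathbf{q}_W(x_l)$, and then using $e^{a_l} \le \sum_j e^{a_j} \le L e^{a_l}$ only partially. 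I would settle the exact constants by working backward from the target $2 - 2Le^{-\mathbf{q}_W(x_l)}$.

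For the boundedness consequence: once the differential inequality holds, suppose for contradiction that $\|x_l(t)\|_A$ is unbounded for some $l$. Since $A \prec 0$, $\mathbf{q}_A(x_l) = -\|x_l\|_A^2 \to -\infty$ along a subsequence; but whenever $\mathbf{q}_W(x_l(t))$ is bounded below (which it is when $W_{\text{sym}} \succ 0$, giving $\mathbf{q}_W(x_l) = \|x_l\|_W^2 \ge 0$), the inequality $\frac{d}{dt}\mathbf{q}_A(x_l) \ge 2 - 2L e^{-\mathbf{q}_W(x_l)} \ge 2 - 2L$ — wait, this can be negative. The correct argument: when $\|x_l\|$ is large, $\mathbf{q}_W(x_l) = \|x_l\|_W^2$ is large (since $W_{\text{sym}}\succ 0$), so $2Le^{-\mathbf{q}_W(x_l)}$ is tiny, hence $\frac{d}{dt}\mathbf{q}_A(x_l) \ge 1 > 0$; thus $\mathbf{q}_A(x_l)$ is increasing whenever $\|x_l\|$ exceeds some threshold $R$. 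But $\mathbf{q}_A(x_l) = -\|x_l\|_A^2$ and $\|\cdot\|_A \asymp \|\cdot\|$, so $\mathbf{q}_A(x_l)$ increasing means $\|x_l\|_A$ decreasing — this forbids $\|x_l\|$ from growing past $R$ once it is there, and a standard argument (the sublevel set $\{\|x_l\|_A \le c\}$ with $c$ chosen so that $\|x_l\|_A \le c \implies \|x_l\| \le R$ fails... rather $c$ large enough to contain $x_l(0)$ and to force $\|x_l\| \le R$ on its complement) shows $\|x_l(t)\|_A$ stays bounded by $\max\{\|x_l(0)\|_A,\, \text{threshold}\}$. I would phrase this as: the set of $t$ with $\|x_l(t)\|_A \ge c$ (for suitable $c$) is forward-invariant-free, since at any such $t$ the derivative of $\|x_l\|_A^2 = -\mathbf{q}_A(x_l)$ is strictly negative.

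The main obstacle I anticipate is pinning down the exact constant chain yielding precisely $2 - 2Le^{-\mathbf{q}_W(x_l(t))}$ rather than a weaker bound with $\log L$ — this requires carefully exploiting that the $i=l$ term contributes $p_{ll}\mathbf{q}_W(x_l)$ with $p_{ll} = e^{\mathbf{q}_W(x_l)}/\sum_j e^{a_j}$, together with a sign-sensitive handling of the off-diagonal terms $\sum_{i\neq l}p_{li}a_i$, which can be negative. I would expect the proof to use $\sum_i p_{li}a_i \ge \mathbf{q}_W(x_l) - \big(\mathbf{q}_W(x_l) - \sum_i p_{li}a_i\big)$ and bound the bracket by $\log$ of a sum of $L$ exponentials each at most $1$, giving the clean $L$ (not $\log L$) after exponentiating — the bookkeeping there is the delicate part.
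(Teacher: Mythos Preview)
Your proposal never actually proves the main differential inequality. You correctly reduce the problem to bounding
\[
\sum_{i=1}^L p_{li}\, a_i \;=\; \frac{\sum_{i=1}^L e^{a_i} a_i}{\sum_{j=1}^L e^{a_j}}, \qquad a_i := x_l(t)^\top W\, x_i(t),
\]
but then cycle through Jensen, diagonal-isolation, and log-sum-exp estimates without closing the argument --- you end by saying you would ``settle the exact constants by working backward from the target,'' which is an admission that the bound is not established. None of the routes you sketch actually produces $1 - L/\sum_j e^{a_j}$: Jensen for $\exp$ goes the wrong way, isolating the $i=l$ term gives no control on the possibly very negative off-diagonal contributions, and the log-sum-exp bounds produce an additive $\log L$, not the multiplicative $L$ in the statement.

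The missing idea is a one-line elementary inequality applied \emph{termwise} to the numerator: for every real $\lambda$, $\lambda e^\lambda \ge e^\lambda - 1$ (check that $h(\lambda)=(\lambda-1)e^\lambda+1$ has $h(0)=0$ and $h'(\lambda)=\lambda e^\lambda$, hence a global minimum at $0$). Summing over $i$ with $\lambda=a_i$ gives
\[
\sum_{i} e^{a_i} a_i \;\ge\; \sum_i (e^{a_i}-1) \;=\; \Big(\sum_j e^{a_j}\Big) - L,
\]
whence $\sum_i p_{li}a_i \ge 1 - L/\sum_j e^{a_j} \ge 1 - L e^{-\mathbf{q}_W(x_l)}$, the last step using that the $j=l$ term alone contributes $e^{a_l}=e^{\mathbf{q}_W(x_l)}$ to the denominator sum. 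Multiplying by $2$ finishes the inequality. Your boundedness paragraph is fine qualitatively; the paper sharpens it by setting $u=\|x_l\|_A^2$, using norm equivalence $\|\cdot\|_W^2 \ge c\,\|\cdot\|_A^2$ to obtain the scalar ODE $u' \le -2 + 2Le^{-cu}$, and integrating explicitly via the substitution $v=e^{cu}$.
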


\begin{proof}
    Since $A$ is symmetric, we have
    \begin{align*}
        \frac{1}{2} \frac{d}{dt} \mathbf{q}_A(x_l(t))
        &= x_l(t)^{\top} \cdot W \cdot (V^{\top})^{-1} \cdot \frac{d}{dt} x_l(t)\\
        &= \frac{\sum_{i=1}^Le^{x_l(t)^{\top} \cdot W \cdot x_i(t)} \cdot x_l(t)^{\top} \cdot W \cdot x_i(t)}{\sum_{j=1}^Le^{x_l(t)^{\top} \cdot W \cdot x_j(t)}}\\
        &\geq \frac{\sum_{i=1}^Le^{x_l(t)^{\top} \cdot W \cdot x_i(t)} - L}{\sum_{j=1}^Le^{x_l(t)^{\top} \cdot W \cdot x_j(t)}}\\
        &= 1- \frac{L}{\sum_{j=1}^Le^{x_l(t)^{\top} \cdot W \cdot x_j(t)}}.
    \end{align*}
    In the above estimation, to obtain the inequality in the third line, we used the fact that $e^{\lambda}\lambda \geq e^\lambda-1$ for all $\lambda \in \mathbb{R}$.
    As a consequence, we have
    \begin{align}\label{eq:solution_bounded_1}
        \frac{d}{dt} \mathbf{q}_A(x_l(t))
        \geq 
        2-\frac{2L}{e^{\mathbf{q}_W(x_l(t))}},
    \end{align}
    as claimed.

    Next, assume that $A \prec 0$ and $W_{\text{sym}} \succ 0$.
    There is a constant $c>0$ such that $||\cdot||_A^2 \geq c ||\cdot||_W^2$.
    Then it follows from equation~\eqref{eq:solution_bounded_1} that
    \begin{align*}
        -\frac{d}{dt} ||x_l(t)||_A^2
        \geq 
        2-\frac{2L}{e^{c||x_l(t)||_A^2}},
    \end{align*}
    or equivalently,
    \begin{align*}
        \frac{d}{dt} ||x_l(t)||_A^2
        \leq 
        -2+\frac{2L}{e^{c||x_l(t)||_A^2}},
    \end{align*}
    Hence, $||x_l(t)||_A^2 \leq \frac{1}{c} \log \left( e^{-2ct} \left( e^{c ||x_l(0)||_A^2}-L \right) + L \right)$, which is bounded.
\end{proof}
    %
    
    
    %
    %

The following lemma studies the limitation of the derivative of tokens in case $A \prec 0$ and $W \succ 0$.

\begin{lemma}\label{lem:derivative_tend_zero}
    Let $(x_1(t),\ldots,x_L(t)) \in C^{\infty}([0,+\infty)])^L$ be the unique solution of the dynamical system~\eqref{eq:attn-ode}.
    Set $A = W \cdot (V^{\top})^{-1}$.
    If $A \prec 0$ and $W \succ 0$, then $$\int_{0}^{+\infty} \left\Vert \frac{d}{ds} x_l(s) \right\Vert^2_A ds < +\infty,$$
    for all $l$.
    In particular, we have
    $\lim_{t \to +\infty}\frac{d}{dt} x_l(t) = 0$ for all $l$.
\end{lemma}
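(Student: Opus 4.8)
The plan is to turn the identity~\eqref{eq:attn_ode_f} into a pointwise energy balance and integrate it. Recall $f(t,u)=\log\big(\sum_{j=1}^{L}e^{u^{\top}Wx_j(t)}\big)$; since $V$ is invertible and $A=W(V^{\top})^{-1}$ is symmetric, \eqref{eq:attn_ode_f} reads $A\,\frac{d}{dt}x_l(t)=\frac{\partial f}{\partial u}(t,x_l(t))$. First I would differentiate $t\mapsto f(t,x_l(t))$ along the trajectory: the chain rule and the symmetry of $A$ give
\begin{equation*}
  \frac{d}{dt}f(t,x_l(t))=\frac{\partial f}{\partial t}(t,x_l(t))+\Big(\tfrac{d}{dt}x_l(t)\Big)^{\!\top}A\,\tfrac{d}{dt}x_l(t)=\frac{\partial f}{\partial t}(t,x_l(t))-\Big\|\tfrac{d}{dt}x_l(t)\Big\|_A^2 ,
\end{equation*}
the last equality using $\mathbf{q}_A(\cdot)=-\|\cdot\|_A^2$, valid since $A\prec 0$. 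Integrating on $[0,T]$,
\begin{equation*}
  \int_0^T\Big\|\tfrac{d}{ds}x_l(s)\Big\|_A^2\,ds=f(0,x_{l0})-f\big(T,x_l(T)\big)+\int_0^T\frac{\partial f}{\partial t}\big(s,x_l(s)\big)\,ds ,
\end{equation*}
so it suffices to bound the right-hand side uniformly in $T$.

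The boundary term is easy: keeping only the $j=l$ summand gives $f(t,x_l(t))\ge x_l(t)^{\top}Wx_l(t)=\mathbf{q}_W(x_l(t))\ge 0$ since $W\succ 0$, hence $-f(T,x_l(T))\le 0$, while $f(0,x_{l0})$ is a constant. The core of the argument is therefore an upper bound on $\int_0^T\frac{\partial f}{\partial t}(s,x_l(s))\,ds$. Writing $g_{li}(t)=x_l(t)^{\top}Wx_i(t)$, $Z_l(t)=\sum_j e^{g_{lj}(t)}$, $p_{li}(t)=e^{g_{li}(t)}/Z_l(t)$ and $z_l(t)=\sum_i p_{li}(t)x_i(t)$, one has $\frac{\partial f}{\partial t}(t,x_l(t))=\sum_i p_{li}(t)\,x_l(t)^{\top}W\,\frac{d}{dt}x_i(t)$, and by Proposition~\ref{prop:general_estimation} all $x_l(t)$ -- hence all $\frac{d}{dt}x_l(t)$ and all $Z_l(t)^{\pm1}$ -- remain in a fixed compact set. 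The structural fact I would exploit is that, $A$ and $W$ being symmetric, $VW=WV^{\top}\prec 0$ and $\frac{d}{dt}x_l=V^{\top}z_l$, hence $\|\frac{d}{dt}x_l\|_A^2=\mathbf{q}_{-VW}(z_l)$ and $x_l^{\top}W\frac{d}{dt}x_i=x_l^{\top}(VW)z_i$. Setting $w_i=\sum_l p_{li}x_l$ and $S(t)=\sum_l\|\frac{d}{dt}x_l(t)\|_A^2$ and summing over $l$ then gives $\sum_l\frac{\partial f}{\partial t}(t,x_l(t))=-S(t)+\sum_i(w_i-z_i)^{\top}(VW)z_i$, and a Cauchy--Schwarz estimate in the $(-VW)$-inner product followed by Young's inequality yields $\tfrac32 S\le-\frac{d}{dt}\big(\sum_l f(t,x_l(t))\big)+\tfrac12\sum_i\|w_i-z_i\|_{-VW}^2$. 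Integrating and using $\sum_l f(T,x_l(T))\ge 0$, everything reduces to proving $\int_0^{+\infty}\sum_i\|w_i(s)-z_i(s)\|^2\,ds<+\infty$.

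I expect this last reduction to be the main obstacle. Since $w_i-z_i=\sum_l(p_{li}-p_{il})x_l$ and $g_{li}=g_{il}$, one gets $|p_{li}-p_{il}|\le C\,|Z_i-Z_l|$ on the compact set, so the claim becomes a quantitative statement about how fast the partition functions $Z_l$ equalise along the flow; this is where one has to bring in the clustering tendency of the tokens -- once more through Proposition~\ref{prop:general_estimation} and the monotone behaviour of $\mathbf{q}_A$ from Theorem~\ref{thm:token_distance} -- and making this clustering estimate integrable is the delicate point.

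Finally, granting $\int_0^{+\infty}\|\frac{d}{ds}x_l(s)\|_A^2\,ds<+\infty$, the conclusion $\frac{d}{dt}x_l(t)\to 0$ follows by a Barbalat-type argument: the solution is $C^{\infty}$ and the tokens are bounded, so $\frac{d}{dt}x_l$ and $\frac{d^2}{dt^2}x_l$ are bounded, whence $\frac{d}{dt}\|\frac{d}{dt}x_l(t)\|_A^2=-2\big(\frac{d^2}{dt^2}x_l(t)\big)^{\top}A\,\frac{d}{dt}x_l(t)$ is bounded; thus $t\mapsto\|\frac{d}{dt}x_l(t)\|_A^2$ is nonnegative, integrable on $[0,+\infty)$ and uniformly continuous, hence tends to $0$, and $\frac{d}{dt}x_l(t)\to 0$ by equivalence of $\|\cdot\|_A$ with the Euclidean norm.
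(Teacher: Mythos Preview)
Your overall strategy---find a Lyapunov functional along the flow and show it dissipates $\|\dot x_l\|_A^2$---is exactly the right one, and your Barbalat argument at the end is correct. The gap is in the middle: you reduce everything to the integrability of $\sum_i\|w_i(s)-z_i(s)\|^2$ and then stop, openly calling this ``the delicate point''. There is no reason to expect this quantity to be square-integrable a priori; the identity $p_{li}-p_{il}=e^{g_{li}}(Z_l^{-1}-Z_i^{-1})$ you derive only shows that $w_i-z_i$ vanishes once all partition functions $Z_l$ coincide, which is essentially the \emph{conclusion} of Theorem~\ref{thm:collapse_main}, not something available at this stage. Neither Proposition~\ref{prop:general_estimation} nor Theorem~\ref{thm:token_distance} gives any rate, so the clustering estimate you allude to is not there to be used.

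The fix is to change the Lyapunov functional, not to fight the cross term. Instead of $\sum_l f(t,x_l(t))=\sum_l\log Z_l(t)$, the paper uses
\[
  h(t)=\sum_{l=1}^{L} Z_l(t)=\sum_{i,j} e^{x_i(t)^{\top}Wx_j(t)}.
\]
Differentiating and using the symmetry of $W$ together with the ODE identity $\sum_j e^{g_{lj}}x_j=Z_l(V^{\top})^{-1}\dot x_l$ gives, after one swap of summation indices,
\[
  \frac{d}{dt}h(t)=-2\sum_{l=1}^{L} Z_l(t)\,\Big\|\frac{d}{dt}x_l(t)\Big\|_A^2.
\]
The weight $Z_l$ in front of $\partial_t f$ is precisely what makes your cross term $\sum_i(w_i-z_i)^{\top}(VW)z_i$ disappear: with this weighting the ``$\partial_t f$'' contribution and the ``$\nabla_u f\cdot\dot x_l$'' contribution become \emph{equal}, so they add rather than compete. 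Since $h$ is positive and non-increasing it has a finite limit, and boundedness of the tokens (Proposition~\ref{prop:general_estimation}) gives a uniform lower bound $Z_l(t)\ge\epsilon>0$, whence $\int_0^{\infty}\|\dot x_l\|_A^2\le \frac{1}{2\epsilon}\big(h(0)-\lim_{t\to\infty}h(t)\big)<\infty$. Your final paragraph then finishes the proof.
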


\begin{proof}
    Consider the function $h \colon [0,+\infty) \to \mathbb{R}$ defined by
    $$h(t) = \sum_{i=1}^L \sum_{j=1}^L e^{x_i(t)^{\top} \cdot W \cdot x_j(t)}.$$   
    Then $h$ is a positive function.
    Since $W$ is symmetric, the derivative of $h$ is
    \begin{align*}
        \frac{d}{dt}h(t) 
        &= 2\sum_{i=1}^L \sum_{j=1}^L e^{x_i(t)^{\top} \cdot W \cdot x_j(t)} \cdot \frac{d}{dt} x_i(t)^{\top} \cdot W \cdot x_j(t)\nonumber\\
        &= 2\sum_{i=1}^L \frac{d}{dt} x_i(t)^{\top} \cdot W \cdot  \left( \sum_{j=1}^L e^{x_i(t)^{\top} \cdot W \cdot x_j(t)} \cdot x_j(t) \right)\nonumber\\
        &= 2\sum_{i=1}^L \frac{d}{dt} x_i(t)^{\top} \cdot A \cdot \frac{d}{dt} x_i(t) \cdot \left( \sum_{j=1}^L e^{x_i(t)^{\top} \cdot W \cdot x_j(t)} \right).
    \end{align*}
    Since $A \prec 0$, we have $\frac{d}{dt} x_i(t)^{\top} \cdot A \cdot \frac{d}{dt} x_i(t) = - \left\Vert \frac{d}{dt} x_i(t) \right\Vert_A^2$.
    Therefore, we can proceed the above expression as
    \begin{align}\label{eq:derivative_tend_zero_1}
        \frac{d}{dt}h(t) 
        = -2\sum_{i=1}^L \left\Vert \frac{d}{dt} x_i(t) \right\Vert_A^2 \cdot \left( \sum_{j=1}^L e^{x_i(t)^{\top} \cdot W \cdot x_j(t)} \right),
    \end{align}
    which is nonpositive.
    As a consequence, $h(t)$ is non-increasing.
    Thus, $\lim_{t \to +\infty}h(t)$ exists and finite.

    Next, since $A \prec 0$ and $W \succ 0$, it follows from Proposition~\ref{prop:general_estimation} that $x_l(t)$ are bounded for all $l=1,\ldots,L$.
    Therefore, there exists $\epsilon>0$ such that
    $$\sum_{j=1}^L e^{x_i(t)^{\top} \cdot W \cdot x_j(t)} \geq \epsilon,$$
    for all $t \in [0,+\infty)$ and $i=1,\ldots,L$.
    It follows from equation~\eqref{eq:derivative_tend_zero_1} that
    $$\frac{d}{dt}h(t) \leq -2 \epsilon \left\Vert \frac{d}{dt} x_l(t) \right\Vert^2_A.$$
    By taking the integral both sides, we see that
    \begin{align*}
        \int_{0}^{+\infty} \left\Vert \frac{d}{ds} x_l(s) \right\Vert^2_A ds \leq \frac{1}{2 \epsilon} (h(0)- \lim_{s \to +\infty} h(s)) < +\infty.
    \end{align*}
    The lemma is then proved.
\end{proof}

We will also require the following lemma, which holds for any matrix $W$ whose symmetric component $W_{\text{sym}}$ is either positive definite or negative definite. 
The special case where $W = I_D$ was established in \citep[Lemma~8.8]{geshkovski2024emergence}. Our proof, however, is simpler and extends to more general matrices $W$.

\begin{lemma}\label{lem:stationary}
    Assume that $W_{\text{sym}}$ is (either positive or negative) definite.
    Let $x_1^*,\ldots,x_L^*$ be point in $\mathbb{R}^D$ such that
    $$\sum_{j=1}^L e^{(x_l^*)^{\top} \cdot W \cdot x_j^*} \cdot x_j^*=0, \quad \forall l=1,\ldots,L.$$
    Then $x_1^*=\ldots=x_L^*=0$.
\end{lemma}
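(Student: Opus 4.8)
The plan is to exploit that the hypothesis makes each $x_l^*$ a critical point of one fixed convex function, deduce that all the $x_l^*$ must coincide, and then read off $x_l^*=0$ from the leftover scalar identity. Concretely, with $x_1^*,\dots,x_L^*$ fixed, let $f(u)=\log\!\big(\sum_{j=1}^L e^{u^{\top}\cdot W\cdot x_j^*}\big)$, which is convex by Lemma~\ref{lem:f_convex}, and set $g(u)=e^{f(u)}=\sum_{j=1}^L e^{u^{\top}\cdot W\cdot x_j^*}$. A direct computation gives $\nabla g(u)=W\cdot\sum_{j=1}^L e^{u^{\top}\cdot W\cdot x_j^*}\,x_j^*$, so the hypothesis says precisely that $\nabla g(x_l^*)=0$, equivalently $\nabla f(x_l^*)=0$, for every $l$. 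Since $f$ (hence $g$) is convex and differentiable, a critical point is a global minimizer, so every $x_l^*$ is a global minimizer of $f$; in particular $f(x_1^*)=\dots=f(x_L^*)=\min f$.

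Next I would extract a linear relation among the tokens. Fix indices $l,m$, put $w:=x_m^*-x_l^*$ and $\xi_j:=w^{\top}\cdot W\cdot x_j^*$. Because $x_l^*$ and $x_m^*$ both minimize the convex function $f$, so does every point on the segment between them, so $g=e^f$ is constant on that segment; that is, $t\mapsto g(x_l^*+tw)=\sum_{j=1}^L e^{(x_l^*)^{\top}\cdot W\cdot x_j^*}\,e^{t\xi_j}$ is constant on $[0,1]$. Differentiating twice at $t=0$ gives $\sum_{j=1}^L e^{(x_l^*)^{\top}\cdot W\cdot x_j^*}\,\xi_j^2=0$, and since every coefficient is strictly positive, $\xi_j=0$ for all $j$. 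Hence $(x_m^*-x_l^*)^{\top}\cdot W\cdot x_j^*=0$ for all $l,m,j$.

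To conclude, take $j=m$ and $j=l$ in the relation just obtained and subtract: $(x_m^*-x_l^*)^{\top}\cdot W\cdot(x_m^*-x_l^*)=0$, i.e. $\mathbf{q}_W(x_m^*-x_l^*)=0$. Since $W_{\text{sym}}$ is definite, the quadratic form $\mathbf{q}_W$ vanishes only at the origin, so $x_l^*=x_m^*$ for all $l,m$; write $x^*$ for the common value. Plugging this back into the hypothesis gives $L\,e^{(x^*)^{\top}\cdot W\cdot x^*}\,x^*=0$, and as the scalar factor is positive we obtain $x^*=0$, i.e. $x_1^*=\dots=x_L^*=0$.

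The step I expect to be the crux is the middle one: converting ``$f$ is constant on the segment joining two of its minimizers'' into the pointwise identities $(x_m^*-x_l^*)^{\top}\cdot W\cdot x_j^*=0$. This is exactly where the log-sum-of-exponentials structure is used — equivalently, one invokes that $\nabla^2 g(u)=\sum_j e^{u^{\top}\cdot W\cdot x_j^*}(W x_j^*)(W x_j^*)^{\top}$ is a sum of positive semidefinite rank-one pieces, so its quadratic form can vanish only in directions orthogonal to every $W x_j^*$. Everything else is routine; notably the argument uses neither symmetry nor invertibility of $W$, only definiteness of $W_{\text{sym}}$, which is why it holds in the stated generality and strengthens the $W=I_D$ case of \citep[Lemma~8.8]{geshkovski2024emergence}.
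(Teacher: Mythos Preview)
Your proof is correct and follows essentially the same route as the paper: define $g(u)=\sum_j e^{u^\top W x_j^*}$, observe it is convex with $\nabla g(x_l^*)=0$ so all $x_l^*$ are global minimizers, extract the relations $(x_m^*-x_l^*)^\top W x_j^*=0$ from the exponential structure, and finish via definiteness of $W_{\text{sym}}$. The only cosmetic difference is in the extraction step: the paper plugs $u=x_i^*,\,v=x_j^*$ into the midpoint identity $g(u)+g(v)-2g((u+v)/2)=\sum_l\bigl(e^{\frac12 u^\top W x_l^*}-e^{\frac12 v^\top W x_l^*}\bigr)^2$, whereas you differentiate $g$ twice along the segment (equivalently, use the rank-one Hessian decomposition) to reach the same conclusion.
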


\begin{proof}
Consider the function $g \colon \mathbb{R}^D \to \mathbb{R}$ defined by
$$g(u) = \sum_{l=1}^L e^{u^{\top} \cdot W \cdot x_l^*}, \quad \forall u \in \mathbb{R}^D.$$
Then for arbitrary $u,v \in \mathbb{R}^D$, we have
\begin{align*}
    g(u)+g(v)-2g\left( \frac{u+v}{2} \right)
    = \sum_{l=1}^L \left( e^{\frac{1}{2}u^{\top} \cdot W \cdot x_l^*} - e^{\frac{1}{2}v^{\top} \cdot W \cdot x_l^*} \right)^2 \geq 0.
\end{align*}
Therefore $g$ is convex.
From the hypothesis, we have
$$\nabla g(x_1^*)=\ldots=\nabla g(x_L^*)=0.$$
This means that $x_1^*,\ldots,x_L^*$ are global minimum of $g$ and the values of $g$ at these points are all equal.
Since $g$ is convex, $g$ achieves the global minimal value on the convex hull $\mathbf{conv}(\{x_l^*\}_{l=1}^L)$.
As a consequence, we have
$$g(x_i^*)=g(x_j^*)=g\left(\frac{x_i^*+x_j^*}{2}\right),$$ 
for all $i,j$.
Therefore,
\begin{align*}
    0=g(x_i^*)+g(x_j^*)-2g\left(\frac{x_i^*+x_j^*}{2}\right) = \sum_{l=1}^L \left( e^{\frac{1}{2}x_i^{\top} \cdot W \cdot x_l^*} - e^{\frac{1}{2}x_j^{\top} \cdot W \cdot x_l^*} \right)^2.
\end{align*}
This happens only when 
$$\frac{1}{2}x_i^{\top} \cdot W \cdot x_l^*=\frac{1}{2}x_j^{\top} \cdot W \cdot x_l^*,$$ 
or equivalently, 
$$(x_i-x_j)^{\top} \cdot W \cdot x_l^*=0,$$
for all $l=1,\ldots,L$.
In particular, we have
$$\mathbf{q}_W(x_i^*-x_j^*)=(x_i^*-x_j^*)^{\top} \cdot W \cdot x_i^*-(x_i-x_j)^{\top} \cdot W \cdot x_j^*=0.$$
Since $W_{\text{sym}}$ is definite, $\mathbf{q}_W$ is nondegenerate and $x_i^*=x_j^*$.
Thus $x_1^*=\ldots=x_L^*$.
The only possibility is $x_1^*=\ldots=x_L^*=0$.
\end{proof}

We are ready to prove the main result of the convergence scenario.

\begin{theorem}\label{thm:collapse}
    Let $(x_1(t),\ldots,x_L(t)) \in C^{\infty}([0,+\infty))$ be a solution of the dynamical system~\eqref{eq:attn-ode}.
    If $A \prec 0$ and $W \succ 0$, then $\lim_{t \to +\infty} x_l(t) =0$
    for all $l=1,\ldots,L$.
\end{theorem}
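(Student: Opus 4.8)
The plan is to prove convergence by a standard $\omega$-limit set argument that stitches together the three results already established: boundedness of the trajectory (Proposition~\ref{prop:general_estimation}), vanishing of the token velocities (Lemma~\ref{lem:derivative_tend_zero}), and the rigidity of stationary configurations (Lemma~\ref{lem:stationary}). Throughout, note that the hypotheses $A \prec 0$ and $W \succ 0$ force $A$ and $W$ to be symmetric and invertible, and since $A = W\cdot(V^{\top})^{-1}$ this also makes $V$ invertible, so each of the cited statements is applicable.

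First I would invoke Proposition~\ref{prop:general_estimation}: because $A\prec 0$ and $W_{\mathrm{sym}} = W \succ 0$, there is a constant $c>0$ with $\|x_l(t)\|_A < c$ for all $t\in[0,+\infty)$ and all $l$. Since $\|\cdot\|_A$ is equivalent to the Euclidean norm, the whole trajectory $t\mapsto(x_1(t),\ldots,x_L(t))$ remains in a fixed compact set $\mathcal{K}\subseteq(\mathbb{R}^D)^L$, hence its $\omega$-limit set is nonempty. Second, I would apply Lemma~\ref{lem:derivative_tend_zero}, whose hypotheses $A\prec 0$, $W\succ 0$ are exactly ours, to conclude $\lim_{t\to+\infty}\frac{d}{dt}x_l(t)=0$ for every $l$.

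The heart of the argument is identifying the limit points. Take any sequence $t_n\to+\infty$ with $(x_1(t_n),\ldots,x_L(t_n))\to(x_1^\ast,\ldots,x_L^\ast)$, which exists by compactness of $\mathcal{K}$. The right-hand side of \eqref{eq:attn-ode} is a continuous function of $(x_1,\ldots,x_L)$, since every softmax denominator $\sum_j e^{x_l^{\top}\cdot W\cdot x_j}$ is strictly positive; combining continuity with $\frac{d}{dt}x_l(t_n)\to 0$ and passing to the limit in \eqref{eq:attn-ode} gives
\[
    V^{\top}\cdot\sum_{i=1}^L \frac{e^{(x_l^\ast)^{\top}\cdot W\cdot x_i^\ast}}{\sum_{j=1}^L e^{(x_l^\ast)^{\top}\cdot W\cdot x_j^\ast}}\,x_i^\ast = 0,\qquad l=1,\ldots,L.
\]
Since $V^{\top}$ is invertible and the common positive scalar $\bigl(\sum_j e^{(x_l^\ast)^{\top}\cdot W\cdot x_j^\ast}\bigr)^{-1}$ may be cancelled, this is precisely the hypothesis of Lemma~\ref{lem:stationary} with the definite matrix $W_{\mathrm{sym}} = W$, so $x_1^\ast=\cdots=x_L^\ast=0$.

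Finally I would close with the elementary observation that a precompact trajectory whose only $\omega$-limit point is the origin must converge to the origin: otherwise there would exist $\varepsilon>0$ and $t_n\to+\infty$ with $\|(x_1(t_n),\ldots,x_L(t_n))\|\geq\varepsilon$, and extracting a convergent subsequence would produce a limit point of norm $\geq\varepsilon\neq 0$, a contradiction. Hence $\lim_{t\to+\infty}x_l(t)=0$ for all $l$. The only genuinely delicate step is the passage to the limit inside the nonlinear softmax term; this is legitimate precisely because the denominators are uniformly bounded away from zero on the compact set $\mathcal{K}$, so I expect no real obstacle — the rest is bookkeeping on top of the cited lemmas.
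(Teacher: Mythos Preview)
Your proposal is correct and follows essentially the same approach as the paper's proof: both argue via compactness (Proposition~\ref{prop:general_estimation}), pass to the limit in~\eqref{eq:attn-ode} along a convergent subsequence using Lemma~\ref{lem:derivative_tend_zero}, and invoke Lemma~\ref{lem:stationary} to force the limit point to be zero. The paper phrases the final step as a direct proof by contradiction, while you spell out the $\omega$-limit argument and the justification for passing to the limit more carefully, but the logical content is the same.
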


\begin{proof}
    Set $X(t)=(x_1(t),\ldots,x_L(t))$. 
    We need to prove that $\lim_{t \to +\infty}X(t)=0$.
    Assume that this is not the case.
    According to item $(a)$ of Proposition~\ref{prop:general_estimation}, $X(t)$ lies in a compact subspace of $(\mathbb{R}^D)^L$.
    Therefore, there exists a sequence $\{t_k\}_k$ in $[0,+\infty)$ such that $\lim_{k \to +\infty}t_k=+\infty$ and $\lim_{k \to +\infty}X(t_k) = X^*$ for some $X^*=(x_1^*,\ldots,x_L^*) \in (\mathbb{R}^D)^L$ and $X^* \neq 0$.
    As a consequence, we have $\lim_{k \to +\infty}x_l(t_k) = x_l^*$ for each $l=1,\ldots,L$.
    While, according to Lemma~\ref{lem:derivative_tend_zero}, $\lim_{k \to +\infty} \frac{d}{dt} x_l(t_k) = 0$.
    Therefore, from the dynamical system~\eqref{eq:attn-ode}, we obtain
    \begin{align}\label{eq:thm_collapse_zero_1}
        \sum_{i=1}^L \frac{e^{(x_l^*)^{\top} \cdot W \cdot x_i^*}}{ \sum_{j=1}^L e^{(x_l^*)^{\top} \cdot W \cdot x_j^*}} \cdot x_i^* = 0, \quad l=1,\ldots,L.
    \end{align}
    Then it follows from Lemma~\ref{lem:stationary} that $x_1^*=\ldots=x_L^*=0$. 
    However, this contradict to the fact that $X^* \neq 0$.
    Hence, $\lim_{t \to +\infty}X(t)=0$ and the theorem is proved.
\end{proof}

 \subsection{Divergence Scenario}
 \label{app_subsec:divergence}

To simplify the technical details, we will consider the case where $A = \lambda W$, i.e. $V=\frac{1}{\lambda}I_D$, for some positive real number $\lambda$. 
The case where $A$ and $W$ have the same signs but $A \neq \lambda W$ may require certain adaptations.  
In particular, we will prove that when $V = \lambda I_D$ and $W$ is arbitrary, all tokens tend to infinity at an exponential rate (under certain assumptions on the initial conditions).  
The case where $V = W = I_D$ was already solved in \citep[Section~8]{geshkovski2024emergence}.  
We borrow the technique from there and modify it to ensure it works for all $V = \lambda I_D$ and arbitrary $W$.

In the following, for each subset $H \subseteq \mathbb{R}^D$, we denote by $\mathbf{conv}(H)$ the convex hull of $H$, which is the smallest convex set containing $H$ in $\mathbb{R}^D$.  
For a point $u$, the notation $\mathbf{d}(u, H)$ represents the Euclidean distance from $u$ to $H$, which is defined as  
$$\mathbf{d}(u, H) = \inf_{v \in H} \|u - v\|.$$  
If $H$ is a closed half-space of $\mathbb{R}^D$ with an outer normal vector $\mathbf{n}$ and $u \notin H$, then  
$$\mathbf{d}(u, H) = \mathbf{n}^\top \cdot \left(u - \mathbf{proj}_H(u)\right),$$  
where $\mathbf{proj}_H(u)$ is the projection of $u$ onto $H$.  

We begin with the following lemma, whose proof can be found in \citep{geshkovski2024emergence}.

\begin{lemma}\label{lem:distance_one_to_H}
    Let $H$ be a closed half-space of $\mathbb{R}^D$ with an outer unit normal vector $\mathbf{n}$.
    Let $u_1,\ldots,u_L \colon [0,+\infty) \to \mathbb{R}^D$ be an arbitrary sequence of differentiable functions.
    For each $t \in [0,+\infty)$, set
        $$\mathbf{Min}(t) = \{1 \leq l \leq L \,|\, \mathbf{d}(u_l(t),H) = \min_{1 \leq i \leq L} \mathbf{d}(u_i(t),H)\}.$$
        Then we have
            $$\frac{d}{dt} \min_{1 \leq i \leq L} \mathbf{d}(u_i(t),H) = \min_{i \in \mathbf{Min}(t)} \left( \mathbf{n}^{\top} \cdot  \frac{d}{dt} u_i(t) \right).$$
\end{lemma}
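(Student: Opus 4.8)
The plan is to reduce the identity to the elementary rule for differentiating a pointwise minimum of finitely many differentiable scalar functions, after first disposing of the non-smoothness of the map $u \mapsto \mathbf{d}(u,H)$. Since $\mathbf{n}$ is an outer \emph{unit} normal, we may write $H = \{x \in \mathbb{R}^D : \mathbf{n}^{\top} x \le c\}$ for a suitable constant $c$, and then $\mathbf{d}(u,H) = \max\{0,\, \mathbf{n}^{\top} u - c\}$ (consistent with the formula $\mathbf{d}(u,H)=\mathbf{n}^{\top}(u-\mathbf{proj}_H(u))$ recorded above when $u\notin H$). The point is that on the open half-space $\mathbb{R}^D \setminus H = \{\mathbf{n}^{\top}u > c\}$ the distance coincides with the affine functional $u \mapsto \mathbf{n}^{\top} u - c$, which is smooth with constant gradient $\mathbf{n}$. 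Writing $\phi_i(t) := \mathbf{d}(u_i(t),H)$ and $\Phi(t) := \min_{1\le i\le L}\phi_i(t)$, each $\phi_i$ is continuous because $u_i$ is continuous and $\mathbf{d}(\cdot,H)$ is $1$-Lipschitz.

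Next, fix $t_0$ and set $M = \mathbf{Min}(t_0)$. For every $i \notin M$ we have $\phi_i(t_0) > \Phi(t_0) = \max_{j\in M}\phi_j(t_0)$, and by continuity this strict inequality persists on some neighbourhood $I$ of $t_0$; hence $\Phi(t) = \min_{j\in M}\phi_j(t)$ for $t \in I$, so only the finitely many functions $\{\phi_j\}_{j\in M}$ are relevant. For each such $j$ the active particle satisfies $u_j(t_0) \notin H$ — this is the regime in which the lemma is applied, where $\Phi(t_0)>0$ — so by continuity $u_j(t) \notin H$ throughout a smaller neighbourhood, where $\phi_j(t) = \mathbf{n}^{\top}u_j(t) - c$; therefore $\phi_j$ is differentiable at $t_0$ with $\phi_j'(t_0) = \mathbf{n}^{\top}\tfrac{d}{dt}u_j(t_0)$. (When an active particle sits exactly on $\partial H$ one argues identically with one-sided derivatives.)

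It then remains to differentiate $\Phi = \min_{j\in M}\phi_j$ at $t_0$. Substituting the first-order expansions $\phi_j(t_0+h) = \Phi(t_0) + h\,\mathbf{n}^{\top}\tfrac{d}{dt}u_j(t_0) + o(h)$, valid for all $j\in M$ since $\phi_j(t_0) = \Phi(t_0)$, into $\Phi(t_0+h) = \min_{j\in M}\phi_j(t_0+h)$ and letting $h \downarrow 0$ gives $\Phi(t_0+h) = \Phi(t_0) + h\min_{j\in M}\big(\mathbf{n}^{\top}\tfrac{d}{dt}u_j(t_0)\big) + o(h)$, which is exactly $\tfrac{d}{dt}\Phi(t_0) = \min_{j\in M}\big(\mathbf{n}^{\top}\tfrac{d}{dt}u_j(t_0)\big)$, the asserted formula.

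The only genuine subtlety — and the step I would expect to need the most care — is the interplay of two non-differentiabilities: the kink of $\mathbf{d}(\cdot,H)$ along $\partial H$ and the kink of $\min$ at ties. The first is neutralised because in the divergence regime the active particles lie in the open set $\mathbb{R}^D\setminus H$, where $\mathbf{d}(\cdot,H)$ is affine with gradient precisely $\mathbf{n}$ (this is exactly why $\mathbf{n}$ is taken to be the outer unit normal); the second is handled by the continuity-based reduction to the active index set $M$ together with the direct Taylor expansion above, which also makes clear that the identity is most naturally read as an equality of right derivatives, coinciding with an honest derivative whenever the quantities $\mathbf{n}^{\top}\tfrac{d}{dt}u_j(t_0)$, $j\in M$, agree.
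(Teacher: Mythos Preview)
Your argument is correct and self-contained, whereas the paper does not actually give a proof: it simply refers the reader to the proof of Proposition~8.2 in \cite{geshkovski2024emergence}. Your reduction to the active index set $M=\mathbf{Min}(t_0)$ via continuity, followed by the first-order expansion of a finite minimum, is the standard route and is precisely what underlies the cited argument. You also rightly flag the two caveats that the lemma's statement glosses over: the identity is properly an equality of \emph{right} derivatives (for $h<0$ the minimum over $j\in M$ becomes a maximum, so the two-sided derivative exists only when all $\mathbf{n}^{\top}\tfrac{d}{dt}u_j(t_0)$, $j\in M$, coincide), and the formula as written requires the active particles to lie in $\mathbb{R}^D\setminus H$ so that $\mathbf{d}(\cdot,H)$ is affine there. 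Both caveats are harmless in the application to Proposition~\ref{prop:V=lambda}, where $H$ is chosen disjoint from the initial convex hull and one only needs the right derivative to be nonnegative.
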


\begin{proof}
    This lemma is already proved in the proof of \citep[Proposition~8.2]{geshkovski2024emergence}.
\end{proof}

The following proposition refines \citep[Proposition~8.2]{geshkovski2024emergence}, where the condition $V=W=I_D$ was assumed.
Here, we extend the proof by relaxing this condition and demonstrating that the result remains valid for matrices of the form $V=\lambda I_D$ with $\lambda>0$, without imposing any constraints on $W$.
Another proof of this proposition can also be found in \citep[Proposition~2.1]{jabin2014clustering}.

\begin{proposition}\label{prop:V=lambda}
    Assume that $V = \lambda I_D$ for some real number $\lambda > 0$ and $W$ is an arbitrary matrix in $\mathbb{R}^{D \times D}$.
    Let $(x_1(t),\ldots,x_L(t)) \in C^{\infty}([0,+\infty)])^L$ be the unique solution of the dynamical system~\eqref{eq:attn-ode}.
    Then 
    $$e^{-|\lambda|t} x_l(t) \in \mathbf{conv}\left(\{x_{i0}\}_{i=1}^L\right),$$ 
    for all $l=1,\ldots,L$ and $t \in [0,+\infty)$.
\end{proposition}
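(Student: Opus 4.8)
The plan is to show that the set $\mathbf{conv}(\{x_{i0}\}_{i=1}^L)$, after rescaling by $e^{-\lambda t}$, is forward-invariant under the flow. Equivalently, fix an arbitrary closed half-space $H$ with outer unit normal $\mathbf{n}$ such that all initial data lie in $H$, i.e. $\mathbf{n}^\top x_{i0} \le b$ for all $i$ where $b = \max_i \mathbf{n}^\top x_{i0}$; I want to prove that $\mathbf{n}^\top \bigl(e^{-\lambda t} x_l(t)\bigr) \le b$ for all $l$ and all $t \ge 0$. Since $\mathbf{conv}(\{x_{i0}\}_{i=1}^L)$ is the intersection of all such half-spaces (every point outside a convex hull can be separated from it by a hyperplane), establishing this inequality for every admissible $\mathbf{n}$ yields the claim. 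A symmetric argument with $-\mathbf{n}$ handles the lower bound, so in fact it suffices to control $\max_i \mathbf{d}(e^{-\lambda t} x_i(t), H)$ — or, more directly, I will track the quantity $M(t) := \max_{1\le i \le L} \mathbf{n}^\top x_i(t)$ and show it satisfies $M(t) \le e^{\lambda t} M(0)$ (taking $M(0)=b$; if $b<0$ the same computation with $\mathbf{n}$ flipped still works).

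First I would note that $-M(t) = \min_i \mathbf{d}(x_i(t), H')$ for the half-space $H' = \{u : \mathbf{n}^\top u \le 0\}$ reflected appropriately, so that Lemma~\ref{lem:distance_one_to_H} applies and $M$ is differentiable except at finitely many times, with $\frac{d}{dt} M(t) = \max_{i \in \mathbf{Max}(t)} \bigl(\mathbf{n}^\top \frac{d}{dt} x_i(t)\bigr)$, where $\mathbf{Max}(t)$ is the set of indices achieving the maximum. Then I pick any $l \in \mathbf{Max}(t)$ and compute, using the ODE~\eqref{eq:attn-ode} with $V = \lambda I_D$ (so $V^\top = \lambda I_D$):
\begin{align*}
\mathbf{n}^\top \frac{d}{dt} x_l(t) = \lambda \sum_{i=1}^L \left( \frac{e^{x_l(t)^\top W x_i(t)}}{\sum_{j=1}^L e^{x_l(t)^\top W x_j(t)}} \right) \mathbf{n}^\top x_i(t) \le \lambda \sum_{i=1}^L \left( \frac{e^{x_l(t)^\top W x_i(t)}}{\sum_{j=1}^L e^{x_l(t)^\top W x_j(t)}} \right) M(t) = \lambda M(t),
\end{align*}
since the attention weights are nonnegative and sum to $1$, and $\mathbf{n}^\top x_i(t) \le M(t)$ by definition of $M$. (If $\lambda < 0$ one uses $|\lambda|$ after flipping $\mathbf{n}$; the statement's exponent $e^{|\lambda| t}$ accommodates this, though here $\lambda>0$ so $|\lambda| = \lambda$.) Hence $\frac{d}{dt} M(t) \le \lambda M(t)$ at every point of differentiability, and since $M$ is Lipschitz (the $x_i$ are $C^\infty$ on $[0,\infty)$ and $M$ is a max of finitely many such functions), Grönwall's inequality gives $M(t) \le e^{\lambda t} M(0)$ for all $t \ge 0$. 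Applying this with $M(0) = \max_i \mathbf{n}^\top x_{i0}$ yields $\mathbf{n}^\top e^{-\lambda t} x_l(t) \le \max_i \mathbf{n}^\top x_{i0}$.

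To conclude, I would invoke the separating-hyperplane characterization: a point $y \in \mathbb{R}^D$ lies in $\mathbf{conv}(\{x_{i0}\}_{i=1}^L)$ if and only if $\mathbf{n}^\top y \le \max_i \mathbf{n}^\top x_{i0}$ for every unit vector $\mathbf{n}$. Since we have just shown $y = e^{-\lambda t} x_l(t)$ satisfies this for all $\mathbf{n}$, we get $e^{-\lambda t} x_l(t) \in \mathbf{conv}(\{x_{i0}\}_{i=1}^L)$, as desired. The main obstacle, and the only delicate point, is the nonsmoothness of $M(t) = \max_i \mathbf{n}^\top x_i(t)$ at times where the index attaining the maximum changes: one must either appeal to Lemma~\ref{lem:distance_one_to_H} (which already packages this Danskin-type differentiation for the minimum-distance function) or argue directly that a Lipschitz function whose derivative is bounded above by $\lambda M$ wherever it exists still obeys the Grönwall bound — a standard fact, but worth stating carefully. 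Everything else is the one-line convexity estimate above, exploiting only that softmax produces a probability vector.
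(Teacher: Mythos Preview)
Your argument is correct and follows essentially the same route as the paper's proof: both rely on the fact that softmax produces a probability vector (so the right-hand side is a convex combination), both handle the nonsmooth maximum via Lemma~\ref{lem:distance_one_to_H}, and both conclude through the supporting-hyperplane characterization of the convex hull. The only presentational difference is that the paper first substitutes $z_l(t) = e^{-\lambda t} x_l(t)$, obtaining the consensus-type system $\dot z_l = \lambda \sum_i P_{l,i}(z_i - z_l)$ for which the minimum distance to any half-space disjoint from $\mathbf{conv}(\{x_{i0}\})$ is directly seen to be non-decreasing (no Gr\"onwall needed), whereas you work with the unrescaled $x_l$ and absorb the exponential via Gr\"onwall; these two treatments are equivalent.
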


\begin{proof}
    Let $z_l(t) = e^{-\lambda t}x_l(t)$.
    Then we have
    \begin{align*}
        \frac{d}{dt} z_l(t) &= e^{-\lambda t} \left( \frac{d}{dt}x_l(t) - \lambda x_l(t) \right)\\
        &= e^{-\lambda t} \left( \lambda \sum_{i=1}^L \frac{e^{e^{2\lambda t} z_l^{\top} \cdot W \cdot z_i}}{\sum_{j=1}^Le^{e^{2\lambda t} z_l^{\top} \cdot W \cdot z_i}}  e^{\lambda t} z_i(t) - \lambda e^{\lambda t}  z_l(t) \right)\\
        &= \lambda \sum_{i=1}^L \left( \frac{e^{e^{2\lambda t} z_l^{\top} \cdot W \cdot z_i}}{\sum_{j=1}^Le^{e^{2\lambda t} z_l^{\top} \cdot W \cdot z_i}} \right) (z_i(t) - z_l(t)).
    \end{align*}
        Therefore, the function $(z_1(t),\ldots,z_L(t))$ satisfies the dynamical system:
        \begin{equation}\label{eq:V=lambda1}
        \begin{aligned}
            &\frac{dz_l(t)}{dt}= \sum_{i=1}^L P_{l,i}(t,z_1(t),\ldots,z_L(t)) \cdot (z_i(t)-z_l(t)), \quad l=1,\ldots,L,
        \end{aligned}
        \end{equation}
        with the initial conditions $z_l(0)=x_{l0}$,
        where 
        $$P_{l,i}(t,z_1,\ldots,z_L) = \frac{e^{e^{2\lambda t} z_l^{\top} \cdot W \cdot z_i}}{\sum_{j=1}^Le^{e^{2\lambda t} z_l^{\top} \cdot W \cdot z_i}} \cdot \lambda.$$

        We claim that, for every closed half-space $H$ of $\mathbb{R}^D$ such that $\mathbf{conv}(\{x_{i0}\}_{i=1}^L) \cap H = \emptyset$, the map $\alpha \colon [0,+\infty) \to \mathbb{R}$ defined by
        $$\alpha(t) = \min_{1 \leq i \leq L} \mathbf{d}(z_i(t),H)$$
        is non-decreasing.
        Indeed, using item $(a)$ of Lemma~\ref{lem:distance_one_to_H}, we have
        \begin{align*}
            \frac{d}{dt} \alpha(t) 
            &= \min_{i \in \mathbf{Min}(t)} \left( \mathbf{n} \cdot \frac{d}{dt} z_i(t) \right)\\
            &= \min_{i \in \mathbf{Min}(t)} \left( \sum_{i=1}^L P_{l,i}(t,z_1(t),\ldots,z_L(t)) \cdot \mathbf{n}^{\top} \cdot \left( z_j(t) - z_i(t) \right) \right).
        \end{align*}
        On the right hand side, we have
        \begin{align*}
            \mathbf{n}^{\top} \cdot \left( z_j(t) - z_i(t) \right) 
            &= \mathbf{n}^{\top} \cdot \left( z_j(t) - \mathbf{proj}_H(z_j(t)) \right)
            - \mathbf{n}^{\top} \cdot \left( z_i(t) - \mathbf{proj}_H(z_i(t)) \right)\\
            &\quad+ \mathbf{n}^{\top} \cdot \left( \mathbf{proj}_H(z_j(t)) - \mathbf{proj}_H(z_j(t)) \right)\\
            &= \mathbf{d}(z_j(t),H)-\mathbf{d}(z_i(t),H)
        \end{align*}
        which is nonnegative since $i \in \mathbf{Min}(t)$.
        Therefore, $\frac{d}{dt}\alpha(t) \geq 0$ and $\alpha$ is non-decreasing.
        As a consequence, we have
        $$\mathbf{d}(z_l(t),H) \geq \min_{1 \leq i \leq L} \mathbf{d}(x_{l0},H)>0,$$
        for all $l=1,\ldots,L$ and $t \in [0,+\infty)$.
        This means that, $z_l(t)$ is outside $H$ as long as $H \cap \mathbf{conv}(\{x_{l0}\}_{i=1}^L) = \emptyset$.
        Hence, 
        $$z_l(t) \in \bigcap_{\substack{H \text{ closed half-space}\\H \cap \mathbf{conv}(\{x_{l0}\}_{i=1}^L) = \emptyset}}H = \bigcap_{\substack{H' \text{ open half-space}\\H' \supset \mathbf{conv}(\{x_{l0}\}_{i=1}^L)}}H' = \mathbf{conv}\left(\{x_{l0}\}_{i=1}^L\right).$$
    The proposition is then proved.
\end{proof}

In the following, for each nonzero vector $\mathbf{n} \in \mathbb{R}^D$, we denote by $H_{\mathbf{n}}$ the closed half-space of $\mathbb{R}^D$ with normal vector $\mathbf{n}$ such that zero belongs to its boundary $\partial H_{\mathbf{n}}$.
In this case, $\partial H_{\mathbf{n}}$ represents the hyperplane containing zero with normal vector $\mathbf{n}$.

\begin{theorem}\label{thm:divergence}
    Assume that the value matrix $V$ has at least one positive eigenvalue. Let $\mathbf{n}$ be an eigenvector of $V$ corresponding to a positive eigenvalue. Then
\begin{align*}
    \min_{1 \leq i \leq L} \left( \mathbf{n}^{\top} x_{i0} \right) \leq \mathbf{n}^{\top} e^{-tV^{\top}} x_l(t) \leq \max_{1 \leq i \leq L} \left( \mathbf{n}^{\top} x_{i0} \right),
\end{align*}
for all $t \in [0, +\infty)$ and $l = 1, \dots, L$.

As a consequence, if the initial points $x_{10}, \dots, x_{L0}$ are all on one side of the hyperplane $\partial H_{\mathbf{n}}$, and if $V$ has only positive eigenvalues, then
\begin{align*}
    \lim_{t \to +\infty} \| x_l(t) \| = +\infty, \quad \text{for all } l.
\end{align*}
\end{theorem}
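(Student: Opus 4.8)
The plan is to remove the ``skip-free'' drift $V^\top x_l$ by conjugating with the matrix exponential of $V^\top$, exactly as in the proof of Proposition~\ref{prop:V=lambda}, and then project onto the eigendirection $\mathbf{n}$ to reduce the statement to a scalar consensus inequality. Set $z_l(t) = e^{-tV^\top} x_l(t)$ and write the attention weights as $p_{li}(t) = e^{x_l(t)^\top W x_i(t)} / \sum_{j=1}^L e^{x_l(t)^\top W x_j(t)}$, so that $p_{li}(t) \geq 0$ and $\sum_{i=1}^L p_{li}(t) = 1$. Differentiating $x_l(t) = e^{tV^\top} z_l(t)$ and using that $e^{\pm tV^\top}$ commutes with $V^\top$, the term $V^\top x_l$ cancels against the ``$-V^\top x_l$'' coming from the derivative of $e^{tV^\top}$, and one lands on the consensus system
\begin{equation*}
    \frac{d}{dt} z_l(t) = V^\top \sum_{i=1}^L p_{li}(t)\, (z_i(t) - z_l(t)), \qquad l = 1,\dots,L.
\end{equation*}

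Now project onto $\mathbf{n}$. Since $V\mathbf{n} = \mu\mathbf{n}$ with $\mu > 0$, we have $\mathbf{n}^\top V^\top = \mu\,\mathbf{n}^\top$, so the scalars $\phi_l(t) := \mathbf{n}^\top z_l(t) = \mathbf{n}^\top e^{-tV^\top} x_l(t)$ obey $\frac{d}{dt}\phi_l(t) = \mu \sum_{i=1}^L p_{li}(t)(\phi_i(t) - \phi_l(t))$. This is exactly where positivity of $\mu$ enters: at an index $l$ realizing $\max_k \phi_k(t)$ every summand is $\leq 0$, so $\frac{d}{dt}\phi_l(t) \leq 0$, and symmetrically $\frac{d}{dt}\phi_l(t) \geq 0$ at an index realizing $\min_k\phi_k(t)$. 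Running the same envelope argument as in Proposition~\ref{prop:V=lambda} --- phrasing the $\mathbf{n}$-projection as the distance from $z_l$ to a closed half-space with normal $\pm\mathbf{n}$ and invoking Lemma~\ref{lem:distance_one_to_H} --- shows that $t \mapsto \min_l \phi_l(t)$ is non-decreasing and $t \mapsto \max_l \phi_l(t)$ is non-increasing. Since $\phi_l(0) = \mathbf{n}^\top x_{l0}$, this is precisely the asserted two-sided bound on $\mathbf{n}^\top e^{-tV^\top} x_l(t)$.

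For the divergence consequence, suppose the $x_{i0}$ all lie strictly on one side of $\partial H_{\mathbf{n}}$; replacing $\mathbf{n}$ by $-\mathbf{n}$ if needed (still an eigenvector for the positive eigenvalue), we may assume $\delta := \min_{1\le i\le L} \mathbf{n}^\top x_{i0} > 0$. The lower bound just proved gives $\mathbf{n}^\top z_l(t) \geq \delta$ for all $t \geq 0$. Since $x_l(t) = e^{tV^\top} z_l(t)$ and $\mathbf{n}^\top e^{tV^\top} = e^{\mu t}\,\mathbf{n}^\top$, we get $\mathbf{n}^\top x_l(t) \geq \delta\, e^{\mu t}$, whence $\|x_l(t)\| \geq \|\mathbf{n}\|^{-1}\,\mathbf{n}^\top x_l(t) \geq \|\mathbf{n}\|^{-1}\,\delta\, e^{\mu t} \to +\infty$. (A single positive eigenvalue already suffices for this last step; full positivity of the spectrum of $V$ is not actually needed here.)

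I expect the only real obstacle to be the technical one shared with Proposition~\ref{prop:V=lambda}: the maps $t\mapsto\max_l\phi_l(t)$ and $t\mapsto\min_l\phi_l(t)$ are merely Lipschitz, not $C^1$, at times when the active index set jumps, so the monotonicity must be extracted through one-sided (Dini) derivatives rather than naive differentiation --- which is exactly what Lemma~\ref{lem:distance_one_to_H} is built to handle, so the cleanest write-up reuses that lemma verbatim. The remaining ingredients (commutativity of $e^{-tV^\top}$ with $V^\top$, and the observation that $p_{li}(t)$ enters the conjugated system only through $p_{li}\geq 0$ and $\sum_i p_{li}=1$, so the change of variables leaves the consensus structure intact) are routine.
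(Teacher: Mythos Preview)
Your proposal is correct and follows essentially the same route as the paper: conjugate by $e^{-tV^\top}$ to obtain the consensus system for $z_l$, project onto the eigendirection $\mathbf{n}$ to reduce to a scalar equation $\frac{d}{dt}\phi_l = \mu\sum_i p_{li}(\phi_i-\phi_l)$, and then invoke the envelope argument (Lemma~\ref{lem:distance_one_to_H} via Proposition~\ref{prop:V=lambda}) to get monotonicity of $\min_l\phi_l$ and $\max_l\phi_l$. Your parenthetical observation that a single positive eigenvalue already forces $\|x_l(t)\|\to\infty$ (via $\mathbf{n}^\top x_l(t) = e^{\mu t}\,\mathbf{n}^\top z_l(t) \geq \delta e^{\mu t}$) is correct and in fact sharper than the paper's stated hypothesis of full positive spectrum.
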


\begin{proof}
    Let $z_l(t) = e^{-t V^{\top}}x_l(t)$.
    Then the function $(z_1(t),\ldots,z_L(t))$ satisfies the dynamical system:
    \begin{equation}\label{eq:divergence_1}
        \begin{aligned}
            &\frac{dz_l(t)}{dt}= \sum_{i=1}^L P_{l,i}(t,z_1(t),\ldots,z_L(t)) \cdot V^{\top} \cdot (z_i(t)-z_l(t)), \quad l=1,\ldots,L,
        \end{aligned}
    \end{equation}
    with the initial conditions $z_l(0)=x_{l0}$,
    where 
    $$P_{l,i}(t,z_1,\ldots,z_L) = \frac{e^{z_l^{\top} \cdot e^{tV} \cdot W \cdot e^{tV^{\top}} \cdot z_i}}{\sum_{j=1}^Le^{z_l^{\top} \cdot e^{tV} \cdot W \cdot e^{tV^{\top}} \cdot z_j}}.$$
    Let $\lambda>0$ be the eigenvalue associated to $\mathbf{n}$.
    By multiplying $\mathbf{n}^{\top}$ into both sides of equation~\eqref{eq:divergence_1}, and set $y_l(t) = \mathbf{n}^{\top} \cdot z_l(t)$, we obtain
    \begin{equation}\label{eq:divergence_2}
        \begin{aligned}
            &\frac{dy_l(t)}{dt}= \sum_{i=1}^L P_{l,i}(t,z_1(t),\ldots,z_L(t)) \cdot \lambda \cdot (y_i(t)-y_l(t)), \quad l=1,\ldots,L,
        \end{aligned}
    \end{equation}
    with the initial conditions $y_l(0)=\mathbf{n}^{\top} \cdot x_{l0} \in \mathbb{R}$.
    By using the same argument in Proposition~\ref{prop:V=lambda}, with $z_l$ is replaced by $y_l$ here, we see that
    $$y_l(t) \in \mathbf{conv}\left( \{y_{i0}\}_{i=1}^L \right),$$
    and hence,
    $$\min_{1 \leq i \leq L} y_{i0} \leq y_l(t) \leq \max_{1 \leq i \leq L} y_{i0},$$
    for all $t \in [0,+\infty)$ and $l=1,\ldots,L$.
    Therefore,
    \begin{align*}
    \min_{1 \leq i \leq L} \left( \mathbf{n}^{\top} \cdot x_{i0} \right) \leq \mathbf{n}^{\top} \cdot e^{-tV^{\top}} \cdot x_l(t) \leq \max_{1 \leq i \leq L} \left( \mathbf{n}^{\top} \cdot x_{i0} \right),
    \end{align*}
    as claimed.

    In case the initial points $x_{10}, \dots, x_{L0}$ are all one side of the hyperplane $\partial H_{\mathbf{n}}$, then either $\min_{1 \leq i \leq L} \left( \mathbf{n}^{\top} \cdot x_{i0} \right) > 0$ or $\max_{1 \leq i \leq L} \left(\mathbf{n}^{\top} \cdot x_{i0} \right) < 0$.
    In both case, there is $\epsilon>0$ such that 
    $$\left| \mathbf{n} \cdot e^{-t V^{\top}} \cdot x_l(t) \right| > \epsilon$$
    for all $t \in [0,+\infty)$ and $l$.
    Hence, when $V$ has only positive eigenvalues, we must have
    \begin{align*}
        \lim_{t \to +\infty} \| x_l(t) \| = +\infty, \quad \text{for all } l.
    \end{align*}
    The theorem is then proved.
\end{proof}

\section{Effects of Absolute and Rotary Positional Encodings}

In this section, we analyze the influence of absolute and rotary positional encodings on the dynamical behavior of tokens within self-attention mechanisms.

\subsection{Absolute Positional Encoding}
\label{sec:appendix_spe}

Recall that the dynamical system governing the continuous-time limit of the self-attention with absolute positional encoding is given by
\begin{align}\label{eq:attn-spe-ode}
    \frac{dx_l(t)}{dt} = V^{\top} \cdot \sum_{i=1}^L \left( \frac{e^{(x_l(t)+p_l)^{\top} \cdot W \cdot (x_i(t)+p_i)}}{\sum_{j=1}^L e^{(x_l(t)+p_l)^{\top} \cdot W \cdot (x_j(t)+p_j)}} \right) \cdot (x_i(t)+p_i),
\end{align}
for $l = 1, \ldots, L$, with the initial conditions 
\[
(x_1(0), \ldots, x_L(0)) = (x_{10}, \ldots, x_{L0}) \in (\mathbb{R}^D)^L.
\]
Here $p_i = [p_{i,1},\ldots,p_{i,D}]^{\top} \in \mathbb{R}^D$. A common choice is to either learn the positional embeddings $p_i$ jointly with the model parameters, or to fix them using a sinusoidal scheme:
 with
\begin{equation*}
    p_{i,j} = \left\{
        \begin{aligned}
           &\operatorname{sin} (i \cdot 10000^{-\frac{j}{D}}), &\text{if } j \text{ is even},\\
           &\operatorname{cos}(i \cdot 10000^{-\frac{j-1}{D}}), &\text{if } j \text{ is odd}.
        \end{aligned}
    \right.
\end{equation*} 
This differential system can be easily transformed into equation~\eqref{eq:attn-ode-main} by the transition $x_l \mapsto x_l+p_l$.
Therefore, the dynamical properties of the self-attention with and without absolute are the similar as we will see in the following corollaries.

\begin{corollary}\label{cor:collapse_spe}
    Let $(x_1(t),\ldots,x_L(t)) \in C^{\infty}([0,+\infty))$ be a solution of the dynamical system~\eqref{eq:attn-spe-ode}.
    If $A \prec 0$ and $W \succ 0$, then $\lim_{t \to +\infty} x_l(t) = -p_l$
    for all $l=1,\ldots,L$.
\end{corollary}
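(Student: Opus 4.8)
The plan is to reduce this corollary directly to Theorem~\ref{thm:collapse} (equivalently Theorem~\ref{thm:collapse_main}) by the shift of variables $x_l \mapsto x_l + p_l$. Concretely, I would set $\widetilde{x}_l(t) := x_l(t) + p_l$ for each $l = 1,\ldots,L$. Since each $p_l \in \mathbb{R}^D$ is constant in $t$, we have $\frac{d}{dt}\widetilde{x}_l(t) = \frac{d}{dt}x_l(t)$, so substituting $x_l(t) + p_l = \widetilde{x}_l(t)$ throughout the right-hand side of~\eqref{eq:attn-spe-ode} shows that $(\widetilde{x}_1(t),\ldots,\widetilde{x}_L(t))$ solves precisely the original self-attention system~\eqref{eq:attn-ode} (with the same matrices $W$ and $V$, hence the same $A = W\cdot(V^{\top})^{-1}$), now with initial conditions $\widetilde{x}_l(0) = x_{l0} + p_l$.

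Next I would invoke the well-posedness of~\eqref{eq:attn-ode} (cited as \citep[Proposition~6.2]{geshkovski2024emergence}) together with the fact that the map $(x_1,\ldots,x_L) \mapsto (x_1+p_1,\ldots,x_L+p_L)$ is a bijection of $(\mathbb{R}^D)^L$, so that $(\widetilde{x}_l)_l$ is the unique solution of~\eqref{eq:attn-ode} with the shifted initial data and inherits the regularity $C^\infty([0,+\infty))$. Then, since by hypothesis $A \prec 0$ and $W \succ 0$, Theorem~\ref{thm:collapse} applies to $(\widetilde{x}_l)_l$ and yields $\lim_{t\to+\infty}\widetilde{x}_l(t) = 0$ for every $l$.

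Finally I would translate back: $\lim_{t\to+\infty} x_l(t) = \lim_{t\to+\infty}\bigl(\widetilde{x}_l(t) - p_l\bigr) = -p_l$, which is the claim. There is no real obstacle here — the only thing to be careful about is verifying that the substitution transforms~\eqref{eq:attn-spe-ode} into exactly~\eqref{eq:attn-ode} rather than into a system with shifted exponents or an extra drift term; a one-line check of the numerator, denominator, and the summand $(x_i+p_i)$ confirms this. The argument is purely a reparametrization, so all the analytic work is already contained in Theorem~\ref{thm:collapse}.
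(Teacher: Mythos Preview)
Your proposal is correct and matches the paper's approach exactly: the paper states (just before the corollary) that system~\eqref{eq:attn-spe-ode} is transformed into~\eqref{eq:attn-ode} by the shift $x_l \mapsto x_l + p_l$, and then records the corollary without further proof. Your write-up simply makes this one-line reduction explicit and invokes Theorem~\ref{thm:collapse}, which is precisely the intended argument.
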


\begin{corollary}\label{cor:divergence_spe}
    Let $X(t)=(x_1(t),\ldots,x_L(t)) \in C^{\infty}([0,+\infty))^L$ be a solution of the differential system~\eqref{eq:attn-spe-ode}.
    Assume that the value matrix $V$ has at least one positive eigenvalue. Let $\mathbf{n}$ be an eigenvector of $V$ corresponding to a positive eigenvalue. Then
    \begin{align*}
        \min_{1 \leq i \leq L} \left( \mathbf{n}^{\top} (x_{i0}+p_i) - e^{-tV^{\top}} p_l \right) \leq \mathbf{n}^{\top} e^{-tV^{\top}} x_l(t) \leq \max_{1 \leq i \leq L} \left( \mathbf{n}^{\top} (x_{i0}+p_i) \right) - e^{-tV^{\top}} p_l,
    \end{align*}
    for all $t \in [0, +\infty)$ and $l = 1, \dots, L$.
    
    As a consequence, if the points $x_{10}+p_1, \dots, x_{L0}+p_L$ are all on one side of the hyperplane $\partial H_{\mathbf{n}} + e^{-tV^{\top}} p_l$, and if $V$ has only positive eigenvalues, then
    \begin{align*}
        \lim_{t \to +\infty} \| x_l(t) \| = +\infty, \quad \text{for all } l.
    \end{align*}
\end{corollary}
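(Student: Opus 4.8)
The plan is to reduce Corollary~\ref{cor:divergence_spe} to Theorem~\ref{thm:divergence} by the affine change of variables $\tilde x_l(t) := x_l(t) + p_l$, exactly the substitution already mentioned before the statement. First I would observe that, since every $p_l$ is a constant vector, $\frac{d}{dt}\tilde x_l(t) = \frac{d}{dt}x_l(t)$, and the right-hand side of the absolute-positional system \eqref{eq:attn-spe-ode} is literally the right-hand side of the plain self-attention system \eqref{eq:attn-ode} evaluated at the tuple $(\tilde x_1(t),\ldots,\tilde x_L(t))$. Hence $(\tilde x_1,\ldots,\tilde x_L)$ is the solution of \eqref{eq:attn-ode} with initial data $\tilde x_l(0) = x_{l0}+p_l$, so every conclusion of Theorem~\ref{thm:divergence} applies to it verbatim.

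Applying Theorem~\ref{thm:divergence} to $(\tilde x_1,\ldots,\tilde x_L)$ with the same eigenvector $\mathbf{n}$ of $V$ and its positive eigenvalue $\lambda$ gives
\[
\min_{1\le i\le L}\bigl(\mathbf{n}^\top(x_{i0}+p_i)\bigr)\ \le\ \mathbf{n}^\top e^{-tV^\top}\bigl(x_l(t)+p_l\bigr)\ \le\ \max_{1\le i\le L}\bigl(\mathbf{n}^\top(x_{i0}+p_i)\bigr)
\]
for all $t\ge 0$ and all $l$. Then I would use $V\mathbf{n}=\lambda\mathbf{n}$, which yields $\mathbf{n}^\top e^{-tV^\top} = (e^{-tV}\mathbf{n})^\top = e^{-\lambda t}\mathbf{n}^\top$, to split $\mathbf{n}^\top e^{-tV^\top}(x_l(t)+p_l) = \mathbf{n}^\top e^{-tV^\top}x_l(t) + \mathbf{n}^\top e^{-tV^\top}p_l$. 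Subtracting the correction $\mathbf{n}^\top e^{-tV^\top}p_l$ from the displayed chain of inequalities produces the two-sided estimate on $\mathbf{n}^\top e^{-tV^\top}x_l(t)$ asserted in the corollary (the term $\mathbf{n}^\top e^{-tV^\top}p_l$ being independent of $i$, it may be pulled outside the $\min$ and $\max$).

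For the divergence consequence, I would assume the shifted initial points $x_{10}+p_1,\ldots,x_{L0}+p_L$ lie strictly on one side of $\partial H_{\mathbf{n}}$, so either $\min_i\mathbf{n}^\top(x_{i0}+p_i)>0$ or $\max_i\mathbf{n}^\top(x_{i0}+p_i)<0$; the displayed bound then keeps $\mathbf{n}^\top e^{-tV^\top}(x_l(t)+p_l)$ at least a fixed distance $\delta>0$ from $0$ for all $t$. Since $\lambda>0$, the correction $\mathbf{n}^\top e^{-tV^\top}p_l = e^{-\lambda t}\mathbf{n}^\top p_l$ tends to $0$, so $|\mathbf{n}^\top e^{-tV^\top}x_l(t)| \ge \delta/2$ for $t$ large. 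Rewriting $\mathbf{n}^\top e^{-tV^\top}x_l(t) = e^{-\lambda t}\mathbf{n}^\top x_l(t)$ gives $|\mathbf{n}^\top x_l(t)| \ge \tfrac{\delta}{2}e^{\lambda t}$, hence $\|x_l(t)\| \ge |\mathbf{n}^\top x_l(t)|/\|\mathbf{n}\| \to +\infty$; as in Theorem~\ref{thm:divergence}, the single positive eigenvalue already suffices and the blanket positivity of $V$ is used only to guarantee $\lambda>0$.

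I expect essentially no real obstacle: the whole argument is bookkeeping on top of Theorem~\ref{thm:divergence}. The only delicate points are (i) checking that the time-dependent correction $\mathbf{n}^\top e^{-tV^\top}p_l$ genuinely decays — immediate from $V\mathbf{n}=\lambda\mathbf{n}$ and $\lambda>0$ — and (ii) interpreting the slightly ambiguous ``one side of the hyperplane $\partial H_{\mathbf{n}} + e^{-tV^\top}p_l$'' hypothesis as the clean requirement that the shifted data $x_{i0}+p_i$ be strictly separated from $\partial H_{\mathbf{n}}$, after which the estimate above closes the argument.
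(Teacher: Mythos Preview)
Your approach is correct and is exactly the reduction the paper intends: the corollary is stated without proof precisely because the substitution $\tilde x_l = x_l + p_l$ transforms \eqref{eq:attn-spe-ode} into \eqref{eq:attn-ode}, after which Theorem~\ref{thm:divergence} applies verbatim and the correction $\mathbf{n}^\top e^{-tV^\top}p_l$ is subtracted off. Your handling of the divergence consequence, including the observation that the eigenvalue relation $\mathbf{n}^\top e^{-tV^\top}=e^{-\lambda t}\mathbf{n}^\top$ makes the correction decay, is also sound.
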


\subsection{Rotary Positional Encoding}

In this section, we assume that the token dimension $D$ is an even number. In contrast to absolute positional encoding, the continuous-time dynamics of the self-attention with rotary positional encoding can be described via the differential system~\citep{su2024rope}
\begin{align}\label{eq:attn-rpe-ode}
    \frac{dx_l(t)}{dt} = V^{\top} \cdot \sum_{i=1}^L \left( \frac{e^{x_l(t)^{\top} \cdot W_{li} \cdot x_i(t)}}{\sum_{j=1}^L e^{x_l(t)^{\top} \cdot W_{lj} \cdot x_j(t)}} \right) \cdot x_i(t),
\end{align}
for $l = 1, \ldots, L$,
where 
\begin{align} \label{eq:app_rope}
    W_{li} = \frac{1}{\sqrt{D_k}} \left( Q  \cdot K^{\top} + \overline{Q}  \cdot R^D_{\theta,i-l} \cdot \overline{K}^{\top} \right),
\end{align}
with $Q,K,\overline{Q},\overline{K} \in \mathbb{R}^{D \times D}$ are two additional learnable matrices and
\[
R^{D}_{\Theta, m} = 
\begin{pmatrix}
\cos m\theta_1 & -\sin m\theta_1 & 0 & 0 & \cdots & 0 & 0 \\
\sin m\theta_1 & \cos m\theta_1  & 0 & 0 & \cdots & 0 & 0 \\
0 & 0 & \cos m\theta_2 & -\sin m\theta_2 & \cdots & 0 & 0 \\
0 & 0 & \sin m\theta_2 & \cos m\theta_2  & \cdots & 0 & 0 \\
\vdots & \vdots & \vdots & \vdots & \ddots & \vdots & \vdots \\
0 & 0 & 0 & 0 & \cdots & \cos m\theta_{D/2} & -\sin m\theta_{D/2} \\
0 & 0 & 0 & 0 & \cdots & \sin m\theta_{D/2} & \cos m\theta_{D/2}
\end{pmatrix}
\]
and
$\Theta = \{ \theta_i = 10000^{-2(i-1)/D}, \; i \in [1, 2, \ldots, D/2] \}$.

Rotary positional encoding is an essential component of latent attention, which lies at the core of DeepSeek~\cite{liu2024deepseek}. 
Unlike absolute positional encoding, the differential system governing rotary positional encoding exhibits markedly different behavior. 
In cases where token trajectories diverge to infinity under absolute positional encoding, we observe a similar divergence scenario in the presence of rotary positional encoding, as demonstrated below:

\begin{corollary}\label{cor:divergence_rpe}
    Let $X(t)=(x_1(t),\ldots,x_L(t)) \in C^{\infty}([0,+\infty))^L$ be a solution of the differential system~\eqref{eq:attn-rpe-ode}.
    Assume that the value matrix $V$ has at least one positive eigenvalue. Let $\mathbf{n}$ be an eigenvector of $V$ corresponding to a positive eigenvalue. Then
    \begin{align*}
        \min_{1 \leq i \leq L} \left( \mathbf{n}^{\top} x_{i0} \right) \leq \mathbf{n}^{\top} e^{-tV^{\top}} x_l(t) \leq \max_{1 \leq i \leq L} \left( \mathbf{n}^{\top} x_{i0} \right),
    \end{align*}
    for all $t \in [0, +\infty)$ and $l = 1, \dots, L$.
    
    As a consequence, if the points $x_{10}, \dots, x_{L0}$ are all on one side of the hyperplane $\partial H_{\mathbf{n}}$, and if $V$ has only positive eigenvalues, then
    \begin{align*}
        \lim_{t \to +\infty} \| x_l(t) \| = +\infty, \quad \text{for all } l.
    \end{align*}
\end{corollary}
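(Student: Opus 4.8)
The plan is to mirror the proof of Theorem~\ref{thm:divergence} almost verbatim, since the rotary case differs from the vanilla self-attention case only in the query--key interaction matrix, which plays no role in the argument. First I would set $z_l(t) = e^{-tV^{\top}} x_l(t)$ and compute $\frac{d}{dt} z_l(t)$ using the chain rule together with the dynamical system~\eqref{eq:attn-rpe-ode}. As in the proof of Theorem~\ref{thm:divergence}, the term $-V^{\top} e^{-tV^{\top}} x_l(t) = -V^{\top} z_l(t)$ coming from differentiating $e^{-tV^{\top}}$ cancels against part of the drift, leaving a system of the form
\begin{equation*}
    \frac{dz_l(t)}{dt} = \sum_{i=1}^L P_{l,i}(t, z_1(t),\ldots,z_L(t)) \cdot V^{\top} \cdot (z_i(t) - z_l(t)),
\end{equation*}
where now $P_{l,i}(t,z_1,\ldots,z_L) = \dfrac{e^{z_l^{\top} \cdot e^{tV} \cdot W_{li} \cdot e^{tV^{\top}} \cdot z_i}}{\sum_{j=1}^L e^{z_l^{\top} \cdot e^{tV} \cdot W_{lj} \cdot e^{tV^{\top}} \cdot z_j}}$. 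The only change from Theorem~\ref{thm:divergence} is that $W$ is replaced by the position-dependent $W_{li}$ inside the softmax coefficients; crucially, these coefficients are still nonnegative and sum to one over $i$, which is all that the convexity-of-convex-hull argument requires.

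Next I would project along the eigenvector $\mathbf{n}$ of $V$ (equivalently of $V^{\top}$ on the left) associated with the positive eigenvalue $\lambda$. Multiplying the $z_l$-system by $\mathbf{n}^{\top}$ and setting $y_l(t) = \mathbf{n}^{\top} z_l(t)$, I obtain the scalar system
\begin{equation*}
    \frac{dy_l(t)}{dt} = \sum_{i=1}^L P_{l,i}(t, z_1(t),\ldots,z_L(t)) \cdot \lambda \cdot (y_i(t) - y_l(t)),
\end{equation*}
with $y_l(0) = \mathbf{n}^{\top} x_{l0}$. Then I would invoke the half-space argument exactly as in the proof of Proposition~\ref{prop:V=lambda} (applied here to the scalars $y_l$): for any closed interval $[a,b] \supseteq \mathbf{conv}(\{y_{i0}\})$, the quantity $\min_i \mathbf{d}(y_i(t), \mathbb{R}\setminus(a,b))$ is non-decreasing because the derivative at an index achieving the minimum is a nonnegative combination of differences $y_j(t) - y_i(t) \ge 0$. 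This yields $\min_i y_{i0} \le y_l(t) \le \max_i y_{i0}$ for all $t$, which is precisely the claimed two-sided bound on $\mathbf{n}^{\top} e^{-tV^{\top}} x_l(t)$.

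For the consequence, I would argue as in Theorem~\ref{thm:divergence}: if all $x_{i0}$ lie strictly on one side of $\partial H_{\mathbf{n}}$, then either $\min_i (\mathbf{n}^{\top} x_{i0}) > 0$ or $\max_i (\mathbf{n}^{\top} x_{i0}) < 0$, so there is $\epsilon > 0$ with $|\mathbf{n}^{\top} e^{-tV^{\top}} x_l(t)| > \epsilon$ for all $t$ and $l$. Since $V$ has only positive eigenvalues, $e^{tV^{\top}}$ expands every direction exponentially, so $\|x_l(t)\| = \|e^{tV^{\top}} z_l(t)\| \to +\infty$. I do not anticipate a genuine obstacle here; the only point needing a line of care is verifying that the cancellation in the first step goes through verbatim when $W$ is replaced by $W_{li}$ -- one should simply note that the softmax structure (nonnegativity and normalization in $i$) is all that was used, and the specific form of $W_{li}$ in~\eqref{eq:app_rope} never enters. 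One small bookkeeping item: the well-posedness of~\eqref{eq:attn-rpe-ode} and smoothness of its solution should be noted, but this follows from the same Cauchy--Lipschitz argument as in the no-encoding case since $W_{li}$ are fixed matrices.
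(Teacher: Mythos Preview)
Your proposal is correct and follows exactly the approach the paper takes: its proof of this corollary is literally the single sentence ``Apply the same argument as in the proof of Theorem~\ref{thm:divergence}.'' You have simply unpacked that sentence, correctly isolating the one point that matters---that the softmax weights $P_{l,i}$ remain nonnegative and sum to one regardless of whether the exponent uses $W$ or the position-dependent $W_{li}$---which is indeed all the half-space/convex-hull argument needs.
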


\begin{proof}
    Apply the same argument as in the proof of Theorem~\ref{thm:divergence}.
\end{proof}

\begin{remark}
    In contrast to absolute positional encoding, rotary positional encoding induces notably different dynamical behavior by promoting token divergence (even for the cases when tokens converge to a finite point in the absolute positional encoding).
    Specifically, the presence of the additional term $\overline{Q} \cdot R^D_{\theta,i-l} \cdot \overline{K}^{\top}$ in the query-key interaction matrix $W_{li}$, as defined in equation~\eqref{eq:app_rope}, can hinder the system from transitioning into a convergence regime. 
    Consequently, self-attention equipped with rotary positional encoding tends to exhibit divergence behavior more frequently than those using absolute encoding or no positional encoding at all.
\end{remark}

\section{Additional Experimental Details}
\label{appendix:exp}
\subsection{Parameter settings used in the simulations}
\label{appendix:param_fig}
\subsubsection{Figure~\ref{fig:distance}: Distances between tokens over time}
On the left side of Figure~\ref{fig:distance}, we choose
\begin{itemize}
    \item $A=\begin{pmatrix}
        1.72628&-3.79592\\-0.914069&3.49779
    \end{pmatrix}$ whose symmetric component $A_{\text{sym}}$ has positive eigenvalues $5.12809$ and $0.0959758$; 
    \item $W=\begin{pmatrix}
        0.534636&-0.798866\\-1.17152&-1.92153
    \end{pmatrix}$; and
    \item the initial values $x_{10}=(-1.17525,1.99834)$, $x_{20}=(-0.0231564,0.591678)$, $x_{30}=(-0.94811,-1.37996)$, $x_4=(1.00246,-1.69335)$.
\end{itemize}
On the right side of Figure~\ref{fig:distance}, we choose
\begin{itemize}
    \item $A=\begin{pmatrix}
        -1.43778&-1.10989\\0.563455&-0.401696
    \end{pmatrix}$ whose symmetric component $A_{\text{sym}}$ has negative eigenvalues $-1.50541$ and $-0.334061$;
    \item $W=\begin{pmatrix}
        0.433083&-0.0371911\\-0.715343&-1.53568
    \end{pmatrix}$; and
    \item the initial values $x_{10}=(0.123688,0.20691)$, $(x_{20}=(0.53086,1.47281)$, $x_{30}=(-0.78388,-1.24115)$, $x_{40}=(1.63476,0.321809)$.
\end{itemize}
\subsubsection{Figure~\ref{fig:converge_and_diverge}: Token trajectories under convergence and divergence scenarios}
In Figure~\ref{fig:converge_and_diverge}a, we choose the following parameters:  
\begin{itemize}
    \item $A = \begin{pmatrix}
        -2.94058 & -2.12076 \\ 
        -5.14498 & -4.58104
    \end{pmatrix}$, thus the symmetric component $A_{\text{sym}}$ has negative eigenvalues $-7.48513$ and $-0.0364942$;
    \item $W = \begin{pmatrix}
        0.902496 & -2.37879 \\ 
        4.36478 & 3.84768
    \end{pmatrix}$, thus the symmetric component $W_{\text{sym}}$ has positive eigenvalues $4.15119$ and $0.598979$.
\end{itemize} 

Figure~\ref{fig:converge_and_diverge}b illustrates the divergence scenario with $A=2W$ (thus $V=2 I$) and
$$W=\begin{pmatrix}
        -0.404078&0.982735\\-0.567909&0.600242
    \end{pmatrix}.$$

\subsubsection{Figure~\ref{fig:rope-conv-to-div}: Token trajectories shift to divergence regime in RoPE}

On the left of Figure~\ref{fig:rope-conv-to-div}, we choose
\begin{itemize}
    \item $Q = \begin{pmatrix}
        0.07331137 & 0.17647239 \\ 
        -0.32738218 & -0.43457359
    \end{pmatrix}$
    \item $K = \begin{pmatrix}
        -2.54009796 & 1.82991692 \\ 
        -0.95688637 & 0.60349328
    \end{pmatrix}$, thus $W_{\text{sym}}$ have positive eigenvalues $0.03766541$ and $0.15005164$
    \item $V = -1.5 I$

On the right of Figure~\ref{fig:rope-conv-to-div}, we choose the additional parameters
\begin{itemize}
    \item $\overline{Q} = \begin{pmatrix}
        -3.01517413 & 2.4430872 \\ 
        2.11630464 & 1.40111342
    \end{pmatrix}$
    \item $\overline{K} = \begin{pmatrix}
        5.03454859 & -3.12492845 \\ 
        4.58643881 & -2.00780098
    \end{pmatrix}$,\end{itemize}
\end{itemize}

\subsubsection{Figure~\ref{fig:rope_conv_div}: Token trajectories in convergence and divergence regime}

On Figure~\ref{fig:rope_conv} (convergence case), we choose the following parameters:
\begin{itemize}
    \item $Q = \begin{pmatrix}
        -1.18765511 &  0.8975229 \\
       -0.7793589 &  0.79105257
    \end{pmatrix}$
    \item $K = \begin{pmatrix}
        -1.97520362 & -1.98198651 \\
        2.24167927 &  2.93460903
    \end{pmatrix}$
    \item $\overline{Q}= \begin{pmatrix}
        1.04027991 & -0.12991073] \\
       -1.32542484 &  1.08074871
    \end{pmatrix}$
    \item $\overline{K} = \begin{pmatrix}
        -1.00477795 & -0.48804888 \\
       -0.42151108 &  0.02556926
    \end{pmatrix}$
    \item $V = -I$
\end{itemize}

On Figure~\ref{fig:rope_div} (divergence case), we choose the following parameters:
\begin{itemize}
    \item $Q = \begin{pmatrix}
        2.068739 & -1.83750201 \\
       -0.75622145 &  0.4784381 
    \end{pmatrix}$
    \item $K = \begin{pmatrix}
        -1.12583337 & -1.40120114 \\
       -2.79629618 & -3.24668939
    \end{pmatrix}$
    \item $\overline{Q} = \begin{pmatrix}
        -0.67880222 &  1.21234986\\
       -0.67132474 &  0.90406252
    \end{pmatrix}$
    \item $K = \begin{pmatrix}
        0.57406142 &  2.88899216 \\
       -1.10421806 &  0.75603913
    \end{pmatrix}$
    \item $V = 1.5 I$
\end{itemize}

\subsection{Implementation of Convergence, Divergence, and Intermediate scenarios.}
\label{appendix:parametrizations}

We propose a strategy to guarantee that the model falls into the three scenarios described in Section~\ref{exp:posdef} and Section~\ref{sec:theory:practical}. In particular, we introduce a reparameterization of the matrices $Q$, $K$, and $V$ that enforces the positive or negative definiteness of the matrices
\begin{equation}
    W_{\text{sym}} = \frac{1}{2} \left(W + W^\top\right), \quad A_{\text{sym}} = \frac{1}{2} \left(A + A^\top\right),
    \label{app:eq:Wsym_Asym}
\end{equation}
where $W = Q K^\top$ and $A = W (V^\top)^{-1}$. We first describe the approach to guarantee the positive definiteness of $A_{\text{sym}}$ and $W_{\text{sym}}$ and subsequently extend the framework to accommodate negative definiteness and intermediate cases.

\textbf{Ensuring the Positive Definiteness of $A_{\text{sym}}$.}
To enforce the positive definiteness of $A_{\text{sym}}$, we leverage the $LDL^T$ decomposition \citep{golub_van_loan_2013_matrix_computations}. Specifically, we parametrize:
\begin{equation}
    A_{\text{sym}} = L_a D_a L_a^\top,
\end{equation}
where $L_a$ is a lower triangular matrix with ones on the diagonal, and $D_a$ is a diagonal matrix with strictly positive elements. The positivity of $D_a$ is ensured by applying the $\operatorname{Softplus}$ function:
\begin{equation}
    D_a = \operatorname{diag}(\operatorname{Softplus}(d_a)).
\end{equation}

\textbf{Parametrization of $A$ and $W$.}
Given $A_{\text{sym}}$, from Eq.~\ref{app:eq:Wsym_Asym} we obtain
\begin{equation}
    A = A_{\text{sym}} + X_a,
\end{equation}
where $X_a$ is an antisymmetric matrix satisfying $X_a = -X_a^\top$. A natural parametrization for such a matrix is:
\begin{equation}
    X_a = T_a - T_a^\top,
\end{equation}
Consequently, the final parametrization of $A$ ensuring positive definiteness of $A_{\text{sym}}$ is
\begin{equation}
    A = L_a \operatorname{diag}(\operatorname{Softplus}(d_a)) L_a^\top + T_a - T_a^\top.
\end{equation}

A similar parametrization applies to $W$ to enforce the positive definiteness of $W_{\text{sym}}$:
\begin{equation}
    W = L_w \operatorname{diag}(\operatorname{Softplus}(d_w)) L_w^\top + T_w - T_w^\top.
\end{equation}
The above formulations ensures that both $W_{\text{sym}}$ and $A_{\text{sym}}$ maintain the desired definiteness properties while allowing for a flexible parametrization of $W$ and $A$.

\textbf{Parametrization of $Q$, $K$, and $V$.}
In terms of $Q$, $K$, and $V$, the aforementioned formulations equivalent to:
\begin{equation}
\begin{cases}
     Q \cdot K^\top &= L_w \operatorname{diag}(\operatorname{Softplus}(d_w)) L_w^\top + T_w - T_w^\top, \\
     Q \cdot K^\top \cdot \left( V^\top \right)^{-1} &= L_a \operatorname{diag}(\operatorname{Softplus}(d_a)) L_a^\top + T_a - T_a^\top.
\end{cases}
\end{equation}

We designate $Q$, $L_w$, $d_w$, $L_a$, $d_a$, $T_w$, and $T_a$ as learnable parameters. The key and value projection matrices for self-attention are then computed as:
\begin{equation}
\begin{cases}
    K &= \left[Q^{-1} \cdot L_w \operatorname{diag}(\operatorname{Softplus}(d_w)) L_w^\top + T_w - T_w^\top \right]^\top, \\
    V &= \left[\left(L_a \operatorname{diag}(\operatorname{Softplus}(d_a)) L_a^\top + T_a - T_a^\top\right)^{-1} \cdot \left(L_w \operatorname{diag}(\operatorname{Softplus}(d_w)) L_w^\top + T_w - T_w^\top\right)\right]^\top.
\end{cases}
\end{equation}

This parametrization guarantees that $A_{\text{sym}}$ and $W_{\text{sym}}$ are positive definite.

\textbf{Adjustments for Negative Definite and Intermediate Cases.}
The definiteness of $A_{\text{sym}}$ and $W_{\text{sym}}$ is determined by the sign of the elements in $D_a$ and $D_w$, respectively. In the positive definite case, these elements are constrained to be positive using the $\operatorname{Softplus}$ function. To adapt to the negative definite and intermediate cases, we control the sign of the elements in $D_a$ and $D_w$ by multiplying the output of $\operatorname{Softplus}$ with a vector containing only $-1$ for the negative definite case or a vector containing an equal number of $1$ and $-1$ for the intermediate case.

This approach ensures that our parametrization is flexible enough to accommodate different definiteness requirements while maintaining learnability and numerical stability.

\subsection{Details on Language Modeling experiments}
\label{appendix:lmdetails}
\subsubsection{Dataset}
\textbf{WikiText-103.}~\citep{merity2016pointer} The WikiText-103 dataset comprises approximately 268,000 unique words. Its training set includes around 28,000 articles, totaling 103 million tokens. On average, this corresponds to text blocks of about 3,600 words per article. The validation and test sets each consist of 60 articles, containing 218,000 and 246,000 tokens, respectively.

\textbf{EnWik8. }~\citep{enwik8}  The Enwik8 dataset is a byte-level corpus comprising 100 million bytes extracted from Wikipedia. In addition to standard English text, it includes markup, special characters, and content in multiple languages. The standard split provides 90 million bytes for training and 5 million for testing.

\subsubsection{Wikitext103 Model and Training Configurations}
\textbf{Model. } We utilize the Transformer-XL~\citep{transformerxl} (\href{https://github.com/kimiyoung/transformer-xl}{\textcolor{cyan}{https://github.com/kimiyoung/transformer-xl}}) architecture for word-level language modeling on the WikiText-103 dataset. The model comprises 16 layers, each with a hidden size of 410. Multi-head attention is implemented with 10 heads, each having a dimensionality of 41. The position-wise feedforward networks have an inner dimension of 2100. Regularization is applied via a dropout rate of 0.05 on residual connections. For Rotary positional embedding, the rotational dimension used is 16. 

\textbf{Training Configurations. } Training is conducted using the Adam optimizer with a learning rate of 0.00025. A linear warmup is applied for the first 1,000 steps, followed by a cosine annealing schedule over a total of 200,000 training steps. The model is trained with a target sequence length of 150 tokens and no memory length, effectively disabling the segment-level recurrence mechanism. Evaluation is performed with a slightly longer target length of 156 tokens. Training utilizes 2 NVIDIA A100 SXM4 80GB GPUs with a total batch size of 60.

\subsubsection{EnWik8 Model and Training Configurations}
\textbf{Model.} We trained an autoregressive Transformer model on the Enwik8 dataset using the x-transformers (\href{https://github.com/lucidrains/x-transformers}{\textcolor{cyan}{https://github.com/lucidrains/x-transformers}}) library. The model follows a GPT-style architecture, comprises a 6-layer Transformer decoder. Each layer uses 8 attention heads and a model dimension of 512. For tokenization, byte-level encoding was used, resulting in a vocabulary size of 256 unique tokens. Both training and generation sequences were fixed at 1024 tokens.

\textbf{Training Configurations.} 
Data was sampled into overlapping sequences of length 1025 (1024 input tokens plus 1 target token). We used the Adam optimizer with a learning rate of $\times 10^{-4}$, and applied gradient clipping with a maximum norm of 0.5. Gradients were accumulated over 4 steps to simulate larger batch sizes. A batch size of 4 was used, with gradient accumulation yielding an effective batch size of 16. The model was trained for 100,000 iterations. Validation was performed every 100 steps, and text samples were generated every 500 steps with a generation length of 1024 tokens. Training was performed on a NVIDIA A100 SXM4 80GB GPU. 

\subsection{Details on ImageNet-1K object recognition task}
\label{appendix:imndetails}
\subsubsection{Dataset}
\textbf{ImageNet-1K. }~\citep{deng2009imagenet} This dataset spans 1000 object classes and contains 1,281,167 training images, 50,000 validation images. The model learns to predict the class of the input image among 1000 categories. We report the top-1 and top-5 accuracy on all experiments.
\subsubsection{Model and Training Configurations}
\textbf{Models.} We adopted the DeiT~\citep{deit} (\href{https://github.com/facebookresearch/deit}{\textcolor{cyan}{https://github.com/facebookresearch/deit}}) architecture, a lightweight Vision Transformer (ViT) variant designed for efficient image classification. The model used is Tiny version, which divides each 224×224 input image into non-overlapping 16×16 patches, resulting in a sequence of 196 tokens. Each patch is linearly projected into a 192-dimensional embedding space. The Transformer encoder consists of 12 layers, each employing multi-head self-attention with 3 heads, and an MLP block with a hidden dimension four times the embedding size (i.e., 768). Biases are included in the query, key, and value projections, and layer normalization is applied with an epsilon value of 1e-6. The model includes a learnable [CLS] token and absolute positional embeddings. No distillation token or teacher model was used, and no architectural modifications (e.g., convolutional stems or hybrids) were introduced.

\textbf{Training Configurations.}
The model was trained on the full ImageNet-1k training set for 300 epochs using the AdamW optimizer with a base learning rate of $5\times10^{-4}$, and weight decay of 0.05. The learning rate followed a cosine decay schedule, with a linear warmup phase over the first 5 epochs, and a minimum learning rate of $10^{-5}$. A stochastic depth rate (drop path) of 0.1 were used for regularization. Data augmentation included RandAugment, Mixup with $\alpha = 0.8$, CutMix with $\alpha = 1.0$, label smoothing with $\epsilon = 0.1$, and random erasing with a probability of 0.25. The model was trained with a batch size of 64 across 4 NVIDIA A100 SXM4 80GB GPUs using mixed precision. Exponential Moving Average (EMA) of model weights was maintained with a decay factor of 0.99996. 

\subsection{Additional visualizations}
\subsubsection{Tokens' trajectory in pre-trained Transformers}
\label{app:gpt2}
We provide additional visualization on pre-trained GPT-2 model to observe the trajectory of tokens. As illustrated in Figure~\ref{fig:ln-lnff-trajAposWpos}, tokens trajectories in later layers still forms clusters with similar shapes.
\begin{figure}
     \centering
     \begin{subfigure}[b]{0.45\textwidth}
         \centering
        \includegraphics[width=\linewidth]{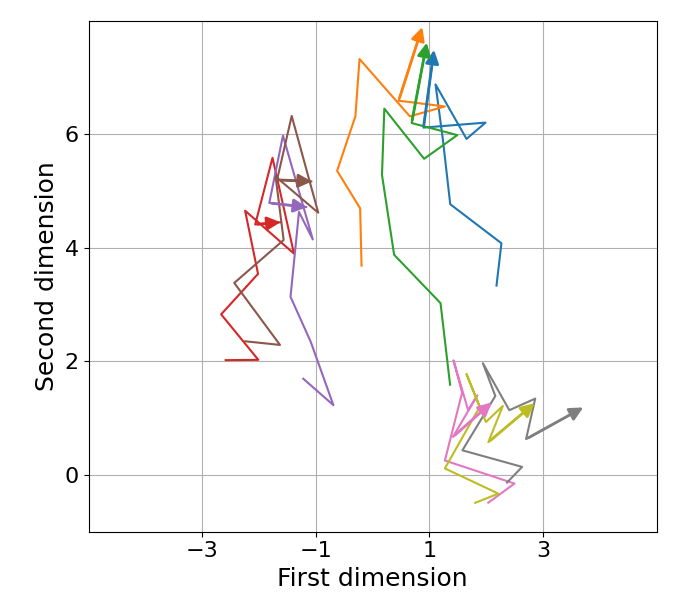}
        \caption{With \(\operatorname{LayerNorm}\), without feedforward.}
     \end{subfigure}
     \hfill
     \begin{subfigure}[b]{0.45\textwidth}
         \centering
        \includegraphics[width=\linewidth]{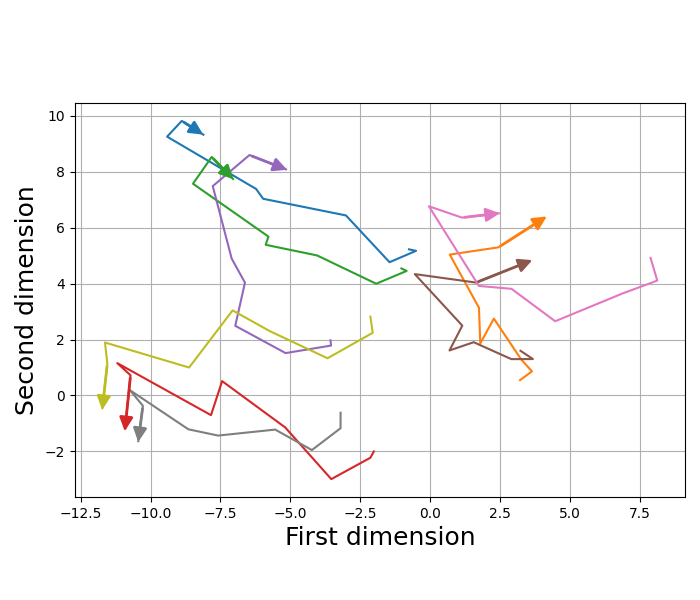}
        \caption{With both \(\operatorname{LayerNorm}\) and feedforward.}
     \end{subfigure}
    \caption{Tokens' trajectory in the later layers of a pre-trained GPT-2 model forms clusters with similar shapes.}
    \label{fig:ln-lnff-trajAposWpos}
\end{figure}


\subsubsection{Token trajectories with RoPE}
We provide additional simulations of token trajectories in the system of Equation~\ref{eq:attn-rpe-ode-main} for the convergence (Figure~\ref{fig:rope_conv}) and divergence case (Figure~\ref{fig:rope_div}). 

\begin{figure}[htb]
  \centering
  \begin{subfigure}[b]{0.48\linewidth}
    \centering
    \includegraphics[width=0.8\linewidth]{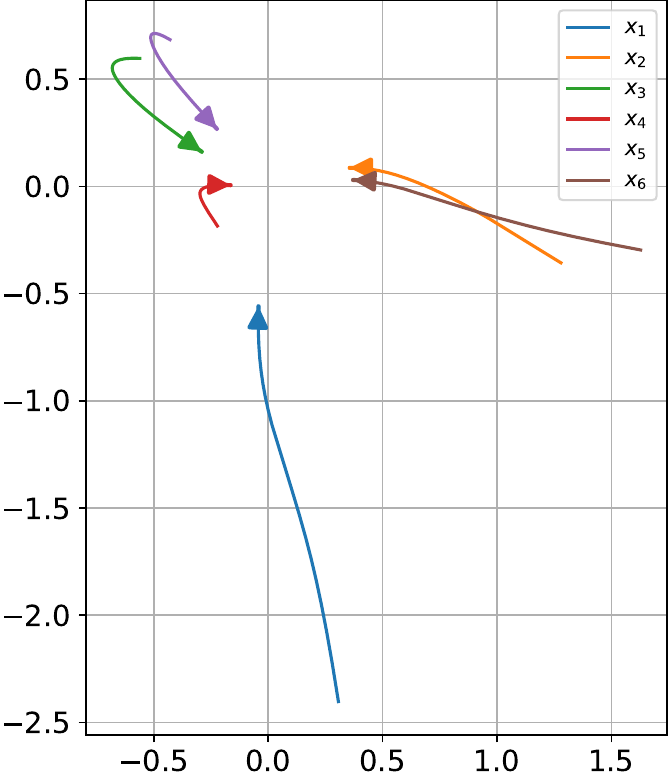}
    \caption{Convergence case of Self-Attention + RoPE. Tokens tend to converge to zero when $t\to \infty$.}
    \label{fig:rope_conv}
  \end{subfigure}%
  \hfill
  \begin{subfigure}[b]{0.48\linewidth}
    \centering
    \includegraphics[width=0.8\linewidth]{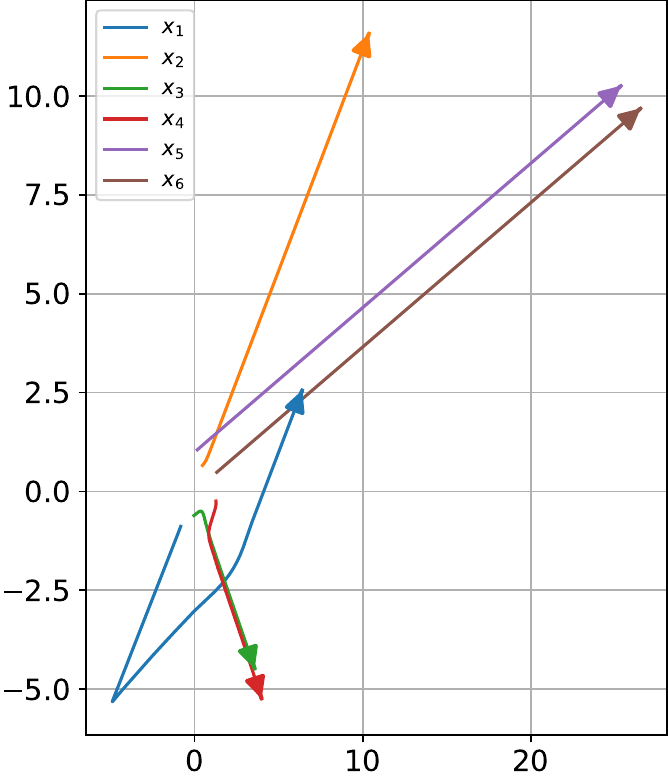}
    \caption{Divergence case of Self-Attention + RoPE. Tokens tend to diverge to $\infty$ when $t\to \infty$.}
    \label{fig:rope_div}
  \end{subfigure}
  \caption{Self-Attention with RoPE under (a) convergence and (b) divergence settings.}
  \label{fig:rope_conv_div}
\end{figure}

\subsection{Additional Results}
\subsubsection{Full Evaluation Results}
\label{sec:full-results}
In this section, we demonstrate the means and standard deviations of the experiments we executed. Table~\ref{tab:exp:full-rope-lambda},~\ref{tab:exp:full-pos-def},~\ref{tab:exp:full-deit} show the means and standard deviations of our experiments in Section~\ref{sec:lamda_I}, Section~\ref{exp:posdef}.

\begin{table}[t!]
\centering

\caption{Bits Per Characters (BPC) and Perplexity (PPL) of Transformers with Rotary Positional Encoding across different scenarios on EnWik8 and WikiText-103 language modeling.}
{
\begin{tabular}{lccc}
\toprule
\multirow{2}{*}{Scenario} & \multicolumn{1}{c}{EnWik8 Pretrain} & \multicolumn{2}{c}{WikiText-103 Pretrain} \\
& Test BPC (↓) &   Valid PPL (↓) & Test PPL (↓) \\ 
\midrule
\textit{Transformer + RoPE}             & $1.295 \pm 0.003$    &  $31.37 \pm 0.14$	& $32.35\pm 0.18$\\ 
Transformer + RoPE + $\lambda I_D$   & $1.288\pm 0.002$  & $31.10 \pm 0.16$ & $32.09 \pm 0.17$ \\ 
Transformer + RoPE + $\lambda A$  & $\mathbf{1.281\pm0.002}$    & $\mathbf{31.06\pm 0.11}$ & $\mathbf{32.04\pm 0.14}$ \\ 
\bottomrule
\end{tabular}
\label{tab:exp:full-rope-lambda}
}
\end{table}

\begin{table}[t!]
\centering

\caption{Bits Per Characters (BPC) and Perplexity (PPL) of Transformers with sinusoidal positional encoding across scenarios on EnWik8 and WikiText-103 language modeling.}
\label{tab:exp:full-pos-def}
\begin{tabular}{lccc}
\toprule
\multirow{2}{*}{Scenario} & EnWik8 Pretrain & \multicolumn{2}{c}{WikiText-103 Pretrain} \\
& Test BPC (↓) & Valid PPL (↓) & Test PPL (↓)\\ 
\midrule
\textit{Baseline}     & $1.331 \pm 0.002$ & $31.63 \pm 0.12$ & $32.37 \pm 0.10$ \\ 
Convergence           & $1.350 \pm 0.003$ & $32.24 \pm 0.15$ & $33.05 \pm 0.13$ \\ 
Intermediate          & $1.345 \pm 0.002$ & $31.91 \pm 0.11$ & $32.78 \pm 0.12$ \\ 
Divergence            & $\mathbf{1.324 \pm 0.002}$ & $\mathbf{31.12 \pm 0.11}$ & $\mathbf{32.07 \pm 0.13}$ \\ 
\bottomrule
\end{tabular}
\end{table}

\begin{table}[t!]
\centering

\caption{\camready{Top 1 and Top 5 Validation Accuracy of DeiT (learnable positional encoding) across different scenarios on ImageNet1K.}}
\label{tab:exp:full-deit}
\begin{tabular}{lcc}
\toprule
& Top-1 Acc (↑) & Top-5 Acc (↑)\\ 
\midrule
\textit{DeiT baseline} & $71.79\pm0.02$ & $90.99\pm0.02$ \\ 
+ convergence & $71.37\pm0.02$ & $90.65\pm0.01$ \\ 
+ intermediate & $71.62\pm0.03$ & $90.94\pm0.02$ \\ 
+ divergence & $71.93\pm0.02$ & $91.03\pm0.03$ \\ 
\bottomrule
\end{tabular}
\end{table}

\subsubsection{Condition verification on pretrained Transformers}
\label{app:real_model_verify}
\camready{
We performed an empirical analysis of real-world pretrained Transformer models to verify the conditions in our theory. Specifically, for each model, we measured the mean percentage across all layers of (i) “near-zero” eigenvalues of the value matrix $V$ (using threshold $\epsilon = 10^{-3}$), and (ii) positive eigenvalues of the symmetrized matrices $W_{\mathrm{sym}}$ and $A_{\mathrm{sym}}$.}

\camready{Table 1 reports the mean $\pm$ standard deviation of these percentages for GPT-2 XL, DistilGPT2, and LLaMA-2 13B. We observe that: (i) In all cases, $V$ exhibited 0\% near-zero eigenvalues, indicating that FGPT$V$ is invertible in practice. (ii) The proportion of positive eigenvalues in both $W_{\mathrm{sym}}$ and $A_{\mathrm{sym}}$ is approximately 50\%, which matches the divergence regime predicted by Remark 3.5 if omitting FFNs and LayerNorms. The divergence behaviour of a pretrained model is also illustrated in 'baseline' case of Figure 6 in Section 5.1.}

\camready{These findings align with theoretical expectations: (i) the set of singular matrices has measure zero in continuous parameter spaces, and (ii) unconstrained weight matrices, whether randomly initialized or learned, tend to have eigenvalue distributions symmetric about zero. We also repeated this analysis on a randomly initialized model (GPT-2 XL reinit) and obtained similar results (approximately 50\% positive eigenvalues in $W_{\mathrm{sym}}$, $A_{\mathrm{sym}}$, and no near-zero eigenvalues in $V$).}

\begin{table}[t]
\centering
\caption{\camready{Mean $\pm$ std.\ dev.\ of eigenvalue statistics (in \%).}}
\label{tab:eig-stats}
\begin{tabular}{lccc}
\toprule
Model & \% positive eigval of $W_{\mathrm{sym}}$ & \% positive eigval of $A_{\mathrm{sym}}$ & \% near-zero-eig of $V$ \\
\midrule
GPT2-xl               & 50.20\,{$\pm$}\,3.76 & 49.99\,{$\pm$}\,0.06 & 0.00\,{$\pm$}\,0.00 \\
DistilGPT2            & 46.01\,{$\pm$}\,4.36 & 49.96\,{$\pm$}\,0.16 & 0.00\,{$\pm$}\,0.00 \\
Llama2 13B            & 52.35\,{$\pm$}\,2.84 & 50.00\,{$\pm$}\,0.02 & 0.00\,{$\pm$}\,0.00 \\
GPT2-xl reinit & 50.20\,{$\pm$}\,2.85 & 49.99\,{$\pm$}\,0.06 & 0.00\,{$\pm$}\,0.00 \\
\bottomrule
\end{tabular}
\end{table}

\subsubsection{Layerwise Token Norms and Pairwise Distances in GPT-2}
\label{app:sim:gpt2}
\camready{We measured the distance between tokens and token norms across layers using test sequences consist of 100 tokens from WikiText-103. Two variants considered: simplified (GPT-2 with no FFNs or LayerNorms) and full GPT-2 models.}

\camready{In the simplified model (Figures~\ref{fig:dist-wo} and \ref{fig:norm-wo}), both the mean distance between tokens and the mean token norm grow exponentially across layers, consistent with the behavior of the divergence scenario.}

\camready{In contrast, for the full GPT-2 model (which includes FFNs and LayerNorms), we observe that both token norms and distances increase in the early layers but plateau or decline in the later layers (see Figures~\ref{fig:dist-full} and \ref{fig:norm-full}). This behavior deviates from our theoretical predictions, indicating that our result does not fully hold in the presence of LayerNorms and FFNs. A comprehensive theoretical and empirical analysis of these effects is a promising direction for future work.}

\begin{figure}[htb] 
  \centering
  \begin{subfigure}{0.49\textwidth}
    \centering
    \includegraphics[width=\linewidth]{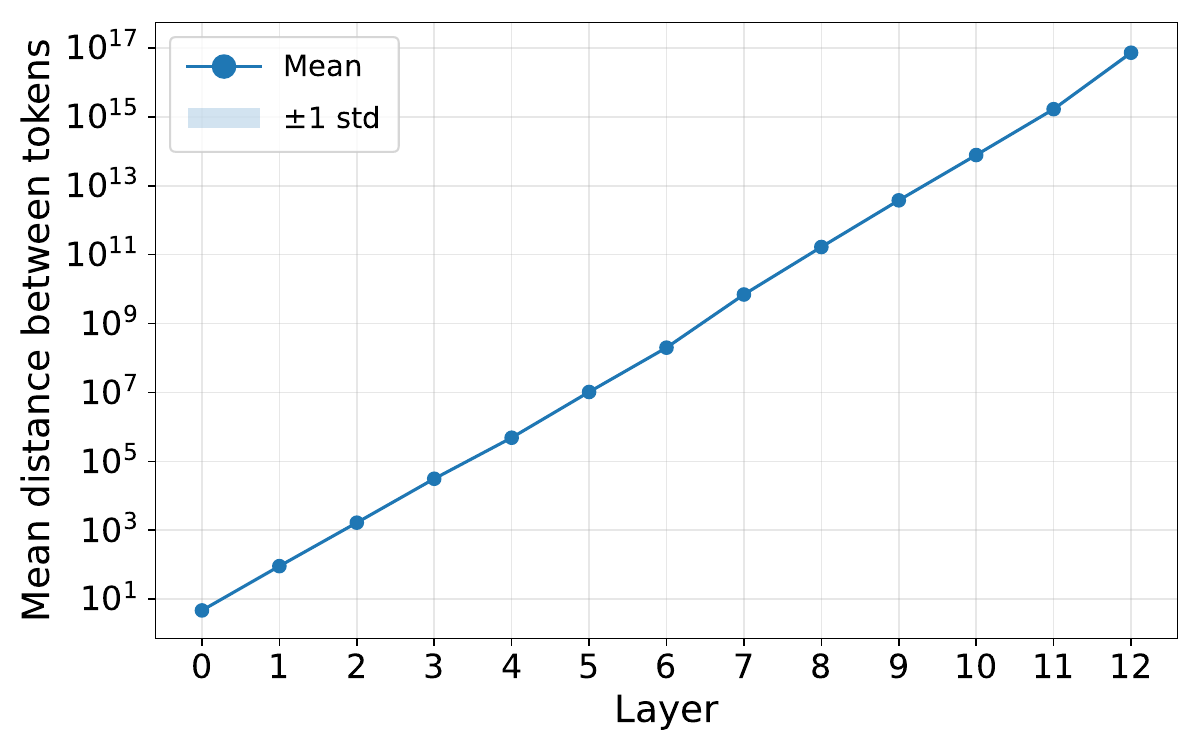}
    \caption{Mean distance (w/o FFNs \& LayerNorms)}
    \label{fig:dist-wo}
  \end{subfigure}
  \hfill
  \begin{subfigure}{0.49\textwidth}
    \centering
    \includegraphics[width=\linewidth]{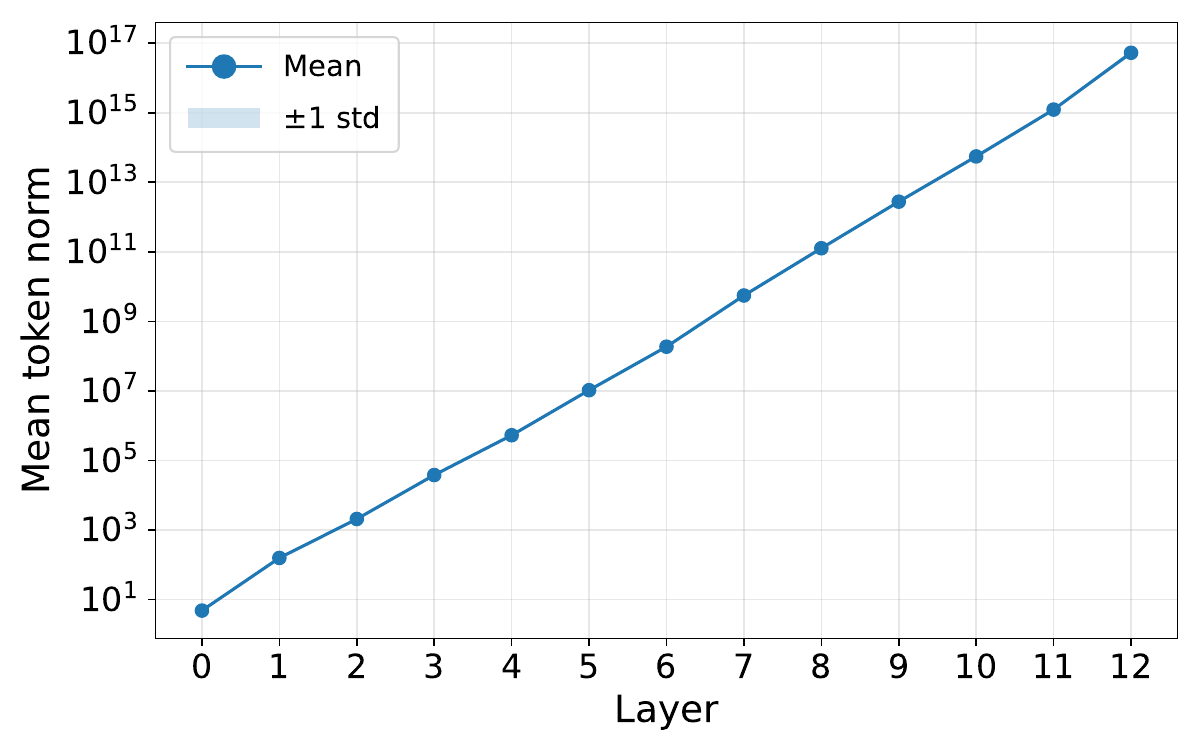}
    \caption{Mean token norm (w/o FFNs \& LayerNorms)}
    \label{fig:norm-wo}
  \end{subfigure}

  \vspace{0.75em}

  \begin{subfigure}{0.49\textwidth}
    \centering
    \includegraphics[width=\linewidth]{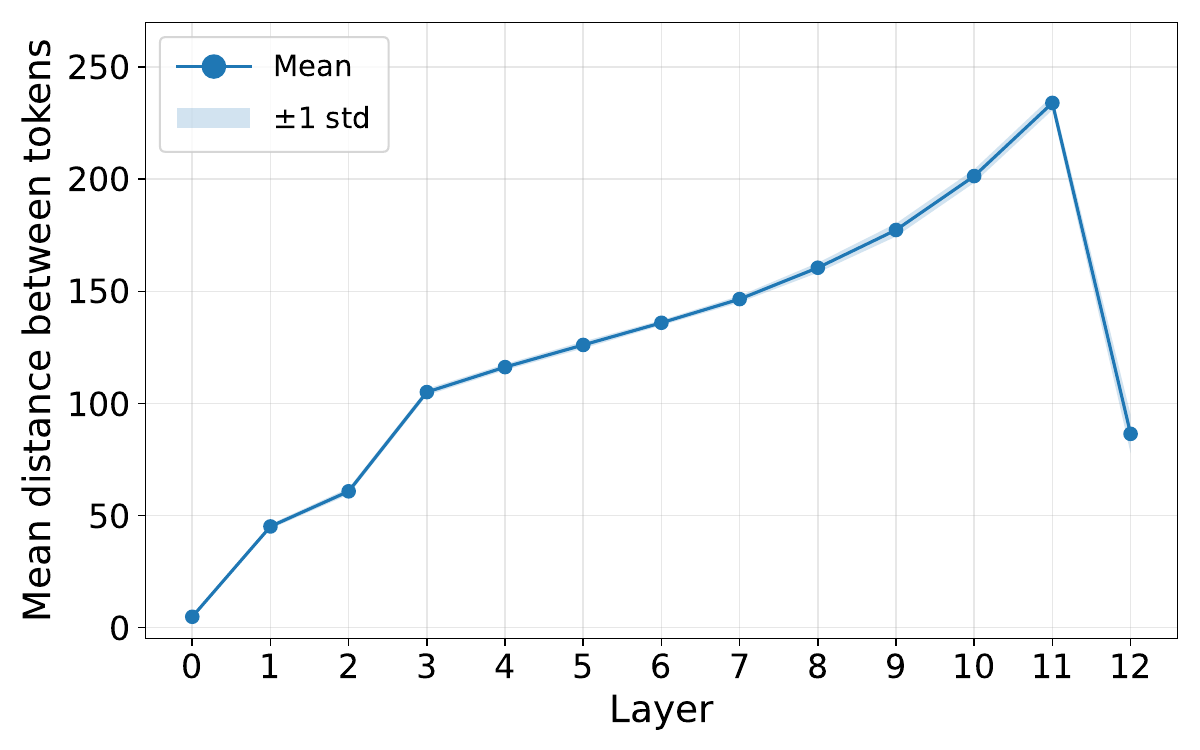}
    \caption{Mean distance (full GPT-2)}
    \label{fig:dist-full}
  \end{subfigure}
  \hfill
  \begin{subfigure}{0.49\textwidth}
    \centering
    \includegraphics[width=\linewidth]{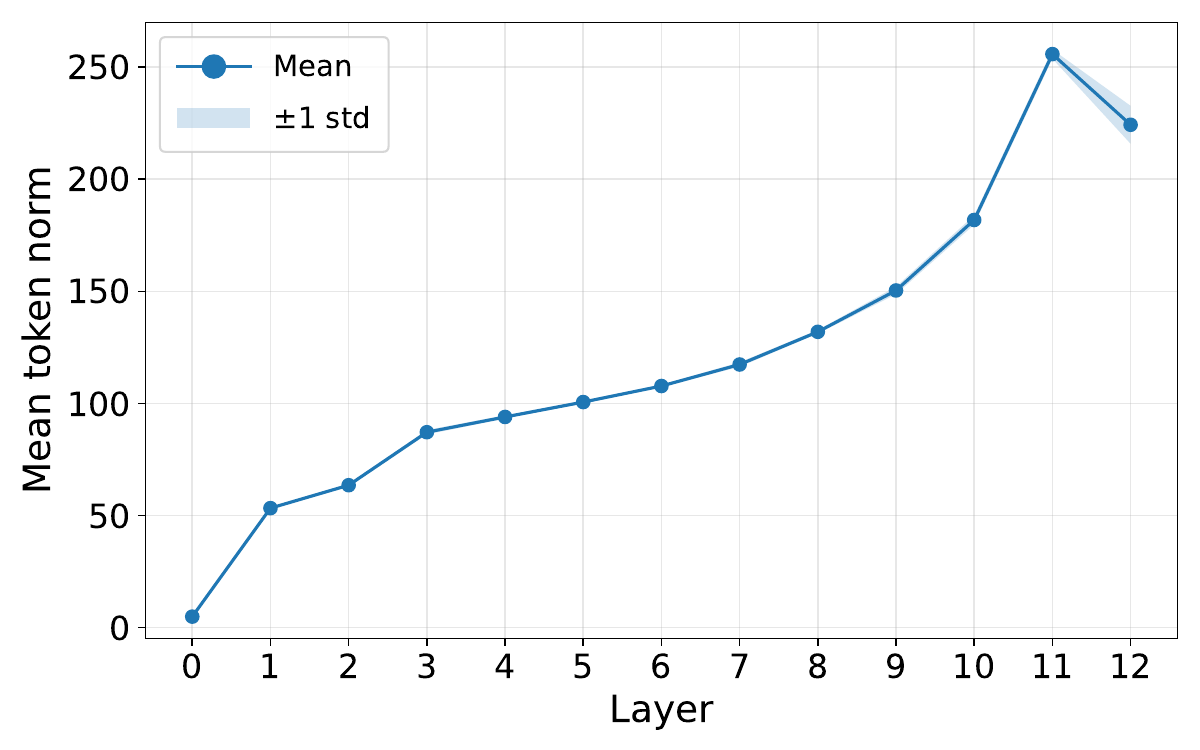}
    \caption{Mean token norm (full GPT-2)}
    \label{fig:norm-full}
  \end{subfigure}

  \caption{\camready{Layerwise token  of a 100-token sequence. Top: GPT-2 without FFNs/LayerNorms; Bottom: full GPT-2. Left: mean distance between tokens; Right: mean token norm.}}
  \label{fig:gpt2-layerwise-stats}
\end{figure}

\camready{Additionally, we present extended experimental results using a broader range of sequence lengths and a larger set of test sequences to provide a more comprehensive analysis of token behavior. In Figures~\ref{fig:seq-len-norm} and \ref{fig:seq-len-dist}, we report the mean token norm and the mean distance between tokens, respectively, for sequence lengths of 16, 64, 100, 128, and 256 tokens. For each length, the reported values are averaged over 100 randomly selected sequences from the test set of WikiText-103.}

\begin{figure*}[t]
  \centering
  \begin{subfigure}{0.49\textwidth}
    \centering
    \includegraphics[width=\linewidth]{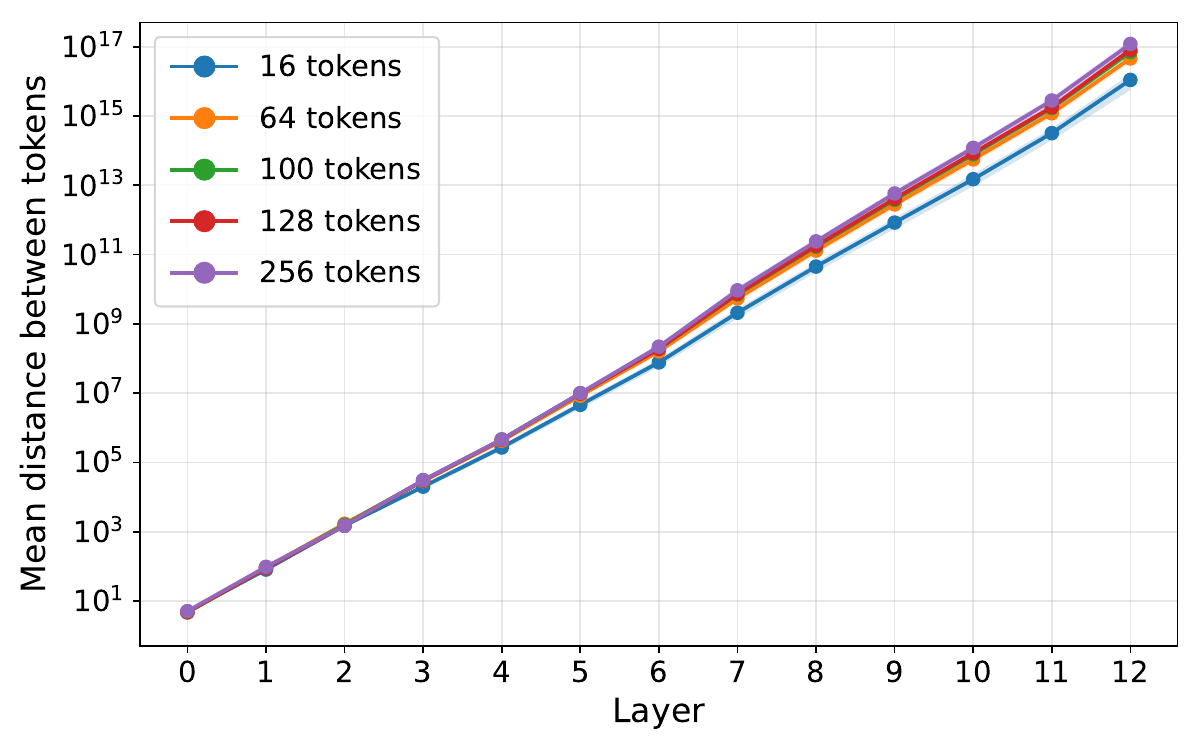}
    \caption{Mean distance vs.\ layer.}
    \label{fig:seq-len-dist}
  \end{subfigure}\hfill
  \begin{subfigure}{0.49\textwidth}
    \centering
    \includegraphics[width=\linewidth]{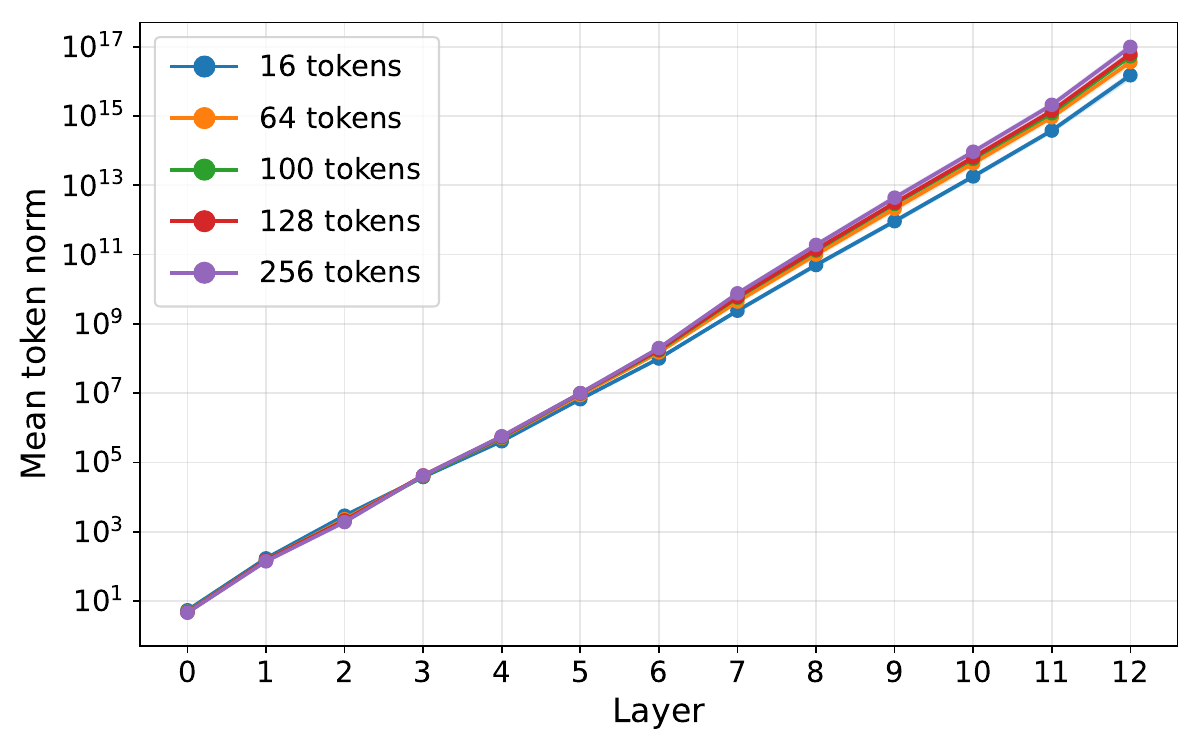}
    \caption{Mean token norm vs.\ layer.}
    \label{fig:seq-len-norm}
  \end{subfigure}
  \caption{\camready{Sequence-length sensitivity in GPT-2 without FFNs/LayerNorms (100 sequences per length).}}
  \label{fig:seq-len-sweep}
\end{figure*}

\section{Broader Impact}
\label{appendix:broader-impact}
This work advances the theoretical understanding of token dynamics in Transformer models, providing insights that could enhance model stability and performance across various applications. While primarily theoretical, these findings may inform the development of more robust AI systems. However, as with any advancement in AI, there is a potential for misuse in applications. We encourage the community to consider these implications and promote responsible use of such technologies.

\end{document}